\DeclareMathOperator*{\vvec}{vec}
\DeclareMathOperator{\E}{\mathbb{E}}
\newcommand{\x}{\mathbf{x}}
\newcommand{\z}{\mathbf{z}}
\newcommand{\D}{\mathbf{D}}
\newcommand{\T}{\mathbf{T}}
\newcommand{\M}{\hat{\mathbf{M}}}
\newcommand{\OO}{\mathcal{O}}
\newcommand{\X}{\mathcal{X}}
\newcommand{\LY}{\mathcal{L(Y)}}
\newcommand{\R}{\mathbb{R}}
\begin{document}

\title{Entangled Kernels - Beyond Separability}

\author{\name Riikka Huusari \email riikka.huusari@aalto.fi \\
       \addr Helsinki Institute for Information Technology HIIT \\
       Department of Computer Science\\ 
       Aalto University\\
       02150 Espoo, Finland
       \AND
       \name Hachem Kadri \email hachem.kadri@lis-lab.fr \\
       \addr Department of Computer Science\\
       Aix-Marseille University, CNRS, LIS\\
       13013 Marseille, France}

\editor{Arthur Gretton}

\maketitle


\begin{abstract}
We consider the problem of operator-valued kernel learning and investigate the possibility of going beyond the well-known separable kernels.
Borrowing tools and concepts from the field of quantum computing, such as partial trace and entanglement, we propose a new view on operator-valued kernels and define a general family of kernels that encompasses previously known operator-valued kernels, including separable and transformable kernels.
Within this framework, we introduce another novel class of operator-valued kernels called \textit{entangled kernels} that are not separable.
We propose an efficient two-step algorithm for this framework, where the entangled kernel is learned based on a novel extension of kernel alignment to operator-valued kernels. 
We illustrate our algorithm with an application to supervised dimensionality reduction, and demonstrate its effectiveness with both artificial and real data for multi-output regression.
\end{abstract}

\smallskip 

\begin{keywords}  
Kernel Learning, Entangled Kernels, Operator-valued Kernels, Vector-valued RKHS, Multi-output Learning
\end{keywords}



\section{Introduction}
\label{sec:intro}

There is a growing body of learning problems for which each instance in the training set is naturally associated with a set of discrete and/or continuous labels~\citep{izenman1975reduced, caruana1997multitask, Micchelli2005onlearning, alvarez2011computationally, dembczynski2012label, baldassarre2012multi}. Output kernel learning algorithms approach these problems by learning simultaneously a vector-valued function in a reproducing kernel Hilbert space~(RKHS) and a positive semi-definite matrix that describes the relationships between the labels~\citep{dinuzzo2011learning,dinuzzo2011learning-b,Ciliberto2015convex,jawanpuria2015efficient}.
The main idea of these methods is to learn a separable operator-valued kernel.

Operator-valued kernels appropriately generalize the well-known notion of reproducing kernels and provide a means for extending the theory of reproducing kernel Hilbert spaces from scalar- to vector-valued functions. They were introduced as a machine learning tool in~\cite{Micchelli2005onlearning} and have since been investigated for use in various machine learning tasks, including multi-task learning~\citep{Evgeniou2005learning}, functional regression~\citep{Kadri2015operator}, structured output prediction~\citep{Brouard2016input}, quantile learning~\citep{sangnier2016joint}, multi-view learning~\citep{Minh2016unifying} and reinforcement learning~\citep{lever2016compressed}. 
The kernel function evaluated on two data samples in this setting outputs a linear operator~(a matrix in the case of finite-dimensional output spaces, $K(x,z)\in\R^{p\times p}$ with $p$ the dimension of the output space) which encodes information about multiple output variables. 
A challenging question in vector-valued learning is what sort of interactions should the operator-valued kernel learn and quantify, and how should one build and design these kernels. This is the main question investigated in the paper in the context of non-separability between input and output variables.

Some classes of operator-valued kernels have been proposed in the literature~\citep{Caponnetto2008universal,alvarez2012kernels}, with separable  kernels being one of the most widely used  for learning vector-valued functions due to their simplicity and computational efficiency. These kernels are formulated as a product between a kernel function for the input space alone, and a matrix that encodes the interactions among the outputs. Indeed, the name of the class refers to the fact that dependencies between input and output variables are considered separately.
In order to overcome the need for choosing a kernel before the learning process, output kernel learning methods learn the output matrix from data~\citep{dinuzzo2011learning,Ciliberto2015convex,jawanpuria2015efficient}. 
However there are limitations in using separable kernels. These kernels use only one output matrix and one input kernel function, and then cannot capture different kinds of dependencies and correlations. Moreover the kernel matrix associated to separable operator-valued kernels is a rank-one kronecker product matrix~(i.e, computed by only one kronecker product $\mathbf{K} \otimes \mathbf{T}$, where $\mathbf{K}$ is the scalar-valued kernel matrix and $\mathbf{T}$ is the output similarity matrix), which is restrictive as it assumes a strong repetitive structure in the operator-valued kernel matrix that models input and output interactions as illustrated in Figure~\ref{fig:separableIllustration1}.

\begin{figure}[tb]
\centering
\begin{tikzpicture}
\node at (0,0) {\includegraphics[width=.1\textwidth]{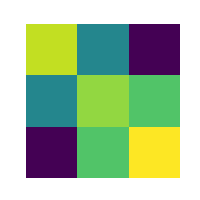}};
\node at (0,-1) {$\mathbf{K}$};
\node at (1,0) {$\otimes$};
\node at (2,0) {\includegraphics[width=.1\textwidth]{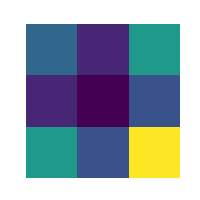}};
\node at (2,-1) {$\mathbf{T}$};
\node at (4.8,0) {\includegraphics[width=.2\textwidth]{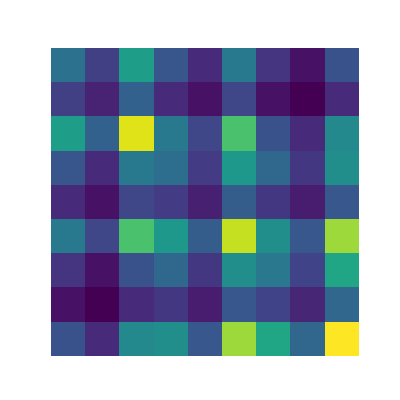}};
\node at (4.8,-1.6) {$\mathbf{G}$};
\node at (3.15,0) {$=$};
\end{tikzpicture}
\caption{Illustration on the restrictiveness of a separable kernel. $\mathbf{K}$ is the scalar-valued kernel matrix, $\mathbf{T}$ is the output similarity matrix and $\mathbf{G}$ is the operator-valued kernel matrix.
	Every block of the big kernel matrix $\mathbf{G} = \mathbf{K} \otimes \mathbf{T}$ has the same structure, and thus models the output dependencies almost the same no matter the input interactions in $\mathbf{K}$.}\label{fig:separableIllustration1}
\end{figure}

To go beyond separable kernels, some attempts have been made to learn a weighted sum of them in the multiple kernel learning framework~\citep{kadri2012multiple,sindhwani2013scalable,gregorova2017forecasting}.  Another approach, proposed by~\cite{lim2015operator}, is to learn a  combination of a separable and a transformable kernel, the latter being a type of non-separable kernel based on representing the data via label-dependent transformations. 
In that work, the form of the transformable kernel is fixed in advance but allows to encode non-separable dependencies between inputs and outputs.
Despite these previous investigations, the lack of knowledge about the full potential of operator-valued kernels and how to go beyond the restrictive separable kernel clearly hampers their widespread use in machine learning and other fields.

This paper deals with the problem of learning non-separable kernels. It is a significant extension of our previous conference paper~\citep{huusari2019entangled}, giving more thorough treatment of the background material, additional theoretical results, full proofs, and more insights to the developed framework. It also provides a theoretical analysis of the generalization error of the learning method, along with expanded experimental section.
Our main contributions are:
\begin{itemize}
\item By leveraging tools from the field of quantum computing, we  introduce a novel class of kernels based on the notion of partial trace which generalizes the trace operation to block matrices.
This class of partial trace kernels we propose is very
broad and encompasses previously known operator-valued kernels, including separable and transformable kernels, which we illustrate with examples. 

\item From the new class of partial-trace kernels we derive another new class of operator-valued kernels, called \textit{entangled} kernels, that are not separable. As far as we are aware, this is the first time such an operator-valued kernel categorization has been performed. 

\item We further study this class of kernels and develop a new algorithm called EKL~(Entangled Kernel Learning) that in two steps learns an entangled kernel and a vector-valued function. For the first step of kernel learning, we propose a novel definition of alignment between an operator-valued kernel and labels of a multi-output learning problem. To our knowledge, this is the first proposition on how to extend alignment to the context of operator-valued kernels. Our algorithm offers improvements to the high computational cost usually associated with learning with general operator-valued kernels. 

\item  We prove a bound on the generalization error of our method  using the notion of Rademacher complexity.

\item We provide an empirical evaluation of EKL. First, we illustrate how EKL works by applying it to the task of supervised dimensionality reduction in the multi-task setting. We also thoroughly study its performance and demonstrate its effectiveness on artificial data as well as real benchmarks. Finally we compare the running times of learning with various classes of operator-valued kernels.

\end{itemize}

The  remainder  of  this  paper  is  organized  as  follows. We begin in~Section~\ref{sec:quantum} with a short background on quantum entanglement and learning with operator-valued kernels. In Section~\ref{sec:learningOvK}, we describe some known classes of operator-valued kernels and review previous work on learning separable operator-valued kernels. Section~\ref{sec:ptk} then introduces the new classes of \textit{partial-trace} and \textit{entangled} kernels. Our new algorithm EKL for learning entangled kernels is given in Section~\ref{sec:ekl}, along with generalization analysis. In Section~\ref{sec:xp}, we present our experimental results for both synthetic and real-life data. We conclude in Section~\ref{sec:conclusion} and present some technical details in the appendix.

\subsection{Notation}

We denote scalars, vectors and matrices as $a$, $\mathbf{a}$ and $\mathbf{A}$ respectively. 
The notation $\mathbf{A} \geq 0$ will be used to denote a positive semi-definite~(psd) matrix.
Throughout the paper we use $n$ as the number of labeled data samples 
and $p$ as the number of outputs corresponding to one data sample. 
We denote our set of data samples by  $\{x_i,y_i\}_{i=1}^n$ on $\mathcal{X} \times \mathcal{Y}$, where $\mathcal{X}$ is a Polish space and $\mathcal{Y}$ is a separable Hilbert space. Usually, $\mathcal{X}$ and $\mathcal{Y}$ are respectively $\R^d$ and $\R^p$ equipped with the standard Euclidean metric. 
Without loss of generality, we can assume that $\mathcal{X}=\R^d$ and $\mathcal{Y}=\R^p$, and thus denote our data set as $\{\mathbf{x}_i,\mathbf{y}_i\}_{i=1}^n$.
We use $k(\cdot, \cdot)$ as a scalar-valued, and $K(\cdot, \cdot)$ as an operator-valued kernel function; the corresponding kernel matrices are $\mathbf{K} \in \R^{n\times n}$ and $\mathbf{G}\in \R^{np\times np}$, the latter containing blocks of size $p\times p$. 
We denote by $\mathcal{K}$ and $\mathcal{H}$ the reproducing kernel Hilbert spaces~(RKHS) associated to the kernels $k$ and $K$, respectively.
Table~\ref{tab:notation} summarizes the notation used in this paper.
	
	\begin{table}
		\label{tab:notation}
		\centering
		\def\arraystretch{1.5}%
		\begin{tabular}{lp{5cm}||lp{5cm}}
		$\mathcal{X}$ & input space & $\mathcal{Y}$ & output space\\
		$k(\cdot,\cdot)$ & scalar-valued kernel & $K(\cdot,\cdot)$ & operator-valued kernel  \\
		$\mathcal{K}$ & reproducing kernel Hilbert space of $k$  & $\mathcal{H}$ & reproducing kernel Hilbert space of $K$ \\
		 $\mathbf{K}$& the kernel matrix of $k$ & $\mathbf{G}$ & the (block) kernel matrix of $K$ \\
		 $\phi$ & feature map of $k$ (from $\mathcal{X}$ to $\mathcal{Y}$)  & $\Gamma$ & feature map of $K$ (from $\mathcal{X}$ to $\mathcal{L(Y,H)})$ \\
			$\mathcal{L(A,B)}$ & the set of trace-class operators from $\mathcal{A}$ to $\mathcal{B}$ & $\mathcal{L(A)}$ & the set $\mathcal{L(A,A)}$ \\
			$\mathbf{A}^\top$, $\mathbf{u}^\top$ & the transpose of a matrix $\mathbf{A}$ or a vector $\mathbf{u}$ & 	$\mathbf{A}^*$, $\mathbf{u}^*$ & the adjoint of an operator $\mathbf{A}$ or a vector $\mathbf{u}$ \\
			$\mathbf{A}\geq 0$ & a positive semi-definite~(psd) matrix & A($\mathbf{A}$,$\mathbf{B}$) & alignment between matrices $\mathbf{A}$ and $\mathbf{B}$  \\
			$\mathcal{A}_1 \otimes \mathcal{A}_2$ & the tensor product of Hilbert spaces $\mathcal{A}_1$ and $\mathcal{A}_2$ & $\mathbf{A}_1 \otimes \mathbf{A}_2$ & the tensor product of operators $\mathbf{A}_1$ and $\mathbf{A}_2$ \\
			$\mathrm{tr}(\mathbf{A})$ & the trace of a matrix or an operator $\mathbf{A} \in \mathcal{L(A)}$ & $\mathrm{tr}_{\mathcal{A}_2}(\mathbf{A})$& the partial trace of a matrix or an operator $\mathbf{A} \in \mathcal{L}(\mathcal{A}_1\otimes \mathcal{A}_2)$\\
		\end{tabular}
	\def\arraystretch{1}%
		\caption{Notation summary.}
	\end{table} 

\section{Background}
\label{sec:quantum}

We now give some background about quantum entanglement, and review the basics of learning with operator-valued kernels.

\subsection{Quantum Entanglement}\label{section:quantum}

The field of quantum computing is vast, and rapidly growing. 
This section is not intended to provide a broad overview or exhaustive survey of the literature on quantum etanglement, but gives some notions on entanglement as a quantum property of mixed composite quantum systems that inspired our entangled kernel design. We refer the reader to~\cite{horodecki2009quantum}, ~\cite{bengtsson2017geometry}, and~\citet[chap.~10]{rieffel2011quantum} for more background information. We will now start with very basics of quantum computation, for this we refer the reader to~\citet[chap.~2-3]{rieffel2011quantum}.

A major difference of quantum computing and quantum information theory to their classical counterparts is that instead of bits the \enquote{particles} carrying information are qubits. 
Unlike bits which have only two possible states, 0 and 1, qubits can exist in those and any combination of them. More formally, a qubit takes values $a\psi_0 + b\psi_1$ where $\psi_0$ and $\psi_1$ are orthonormal basis vectors and $a,b\in\mathbb{C}$ such that $|a|^2+|b|^2 = 1$.\footnote{In the field of quantum computing, it is more usual to use the Dirac's bra-ket notation for the basis vectors. In this notation \enquote{bra} $\langle x |$ denotes a row vector and \enquote{ket} $|x\rangle$ a column vector, and a qubit would take values $a\vert 0 \rangle + b\vert 1 \rangle$.} 
In quantum information theory the actual state $a\psi_0 + b\psi_1$ cannot be recovered by measurement; it is always measured as either  $\psi_0$ or $\psi_1$ according to the probabilities proportional to multipliers $a$ and $b$.

In the heart of quantum computing there is a notion of quantum systems, consisting of one or more qubits. For a system of one qubit, the system's basis consists of two-dimensional vectors. For a system of $n$ qubits, the description requires a $2^n$-dimensional Hilbert space to capture all possible combinations of the qubit values. 
This brings forward the notion of entanglement; any state that cannot be written as a tensor product of $n$ single-qubit states is said to be entangled. Perhaps the simplest example of an entangled state is
\begin{equation}
\frac{1}{\sqrt{2}}\left( \psi_{00} + \psi_{11} \right)
\end{equation}
as it cannot be written as 
\begin{equation*}
\left( a_1\psi_0 + b_1\psi_1 \right) \otimes \left( a_2\psi_{0} + b_2\psi_1 \right) =  a_1a_2 \psi_{00} + a_1b_2 \psi_{01} + b_1a_2\psi_{10} + b_1b_2\psi_{11}
\end{equation*}
with any multipliers $a_1$, $a_2$, $b_1$ and $b_2$, and where $\psi_{01}=\psi_0\otimes \psi_1$, similarly for others. 

A quantum system exists in a state, describing all the information that can be learned of the system. 
A quantum system can also be divided into parts or subsystems. 
We focus here only on \textit{bipartite} quantum systems, i.e., systems composed of two distinct subsystems. The Hilbert space $\mathcal{F}$ associated with a {bipartite} quantum system is given by the tensor product $\mathcal{F}_1 \otimes \mathcal{F}_2$ of the spaces $\mathcal{F}_1$ anf $\mathcal{F}_2$ corresponding to each of the subsystems. 
A question to ask in this context is, what information of the system can be obtained by only considering a part of it, either $\mathcal{F}_1$ or $\mathcal{F}_2$? 
A quantum state can be either \enquote{pure} or \enquote{mixed}.
While states of pure systems can be represented by a state vector $\psi \in \mathcal{F}$, for mixed states the characterization is done with density operators (or matrices) $\rho$, positive Hermitian operators with trace equal to one. Pure states can also be modeled with a density operator allowing for uniform treatment, in this case $\rho = \psi\psi^\top$. 

The entanglement present in a bipartite quantum system can be modeled through the \textit{partial trace} of the system. Given the density operator $\rho$ modeling the whole system, the state of, say, the first subsytem is described by a reduced density matrix, given by taking the {partial trace} of $\rho$ over $\mathcal{F}_2$. If the system can be accurately represented with only the two subsystems, then it is not entangled. In the following we review the notions of partial trace, separability and entanglement of bipartite quantum systems in more detail.

We denote the set of bounded linear operators from a Hilbert space $\mathcal{B}$ to $\mathcal{B}$ with finite trace norm as $\mathcal{L(B)}$.
Let $\mathcal{F}_1$ and $\mathcal{F}_2$ be separable Hilbert spaces.

\begin{definition}\label{de:partial_trace} (partial trace) \\[0.1cm] 
Let 
$\{e_i\}_i$ be an orthonormal basis for $\mathcal{F}_2$.
For an operator $\mathbf{A}$ in $\mathcal{L}(\mathcal{F}_1\otimes \mathcal{F}_2)$ its partial trace, $\mathrm{tr}_{\mathcal{F}_2} \mathbf{A}$, is an operator in $\mathcal{L}(\mathcal{F}_1)$ defined by the relation 
\begin{equation*}
\langle x, (\mathrm{tr}_{\mathcal{F}_2} \mathbf{A}) y \rangle = \sum_{i} \langle x\otimes e_i , \mathbf{A} (y\otimes e_i) \rangle
\end{equation*}
for all $x,y\in \mathcal{F}_1$. 

\end{definition}  

This definition follows the ones in~\citet[Equation~4.34]{bhatia2009positive} and~\citet[Equation~2.10]{attalLecturesPtr}. 
In the finite-dimensional case where $\mathcal{F}_1 = \mathbb{R}^p$ and $\mathcal{F}_2 = \mathbb{R}^N$, the operator $\mathbf{A} \in \mathbb{R}^{pN \times pN}$ is a block matrix where each block is of size $N \times N$, and the partial trace is obtained by computing the trace of each block~(see Figure~\ref{fig:pt}). To see this, let us denote the orthonormal bases of $\mathcal{F}_1$ and $\mathcal{F}_2$ by $\{g_j\}_{j=1}^{p}$ and $\{e_i\}_{i=1}^{N}$, respectively. 
Any block matrix $\mathbf{A} \in \mathbb{R}^{pN\times pN}$ can be written as $\mathbf{A} = \sum_{s,t=1}^p g_s g_t^\top \otimes \mathbf{A}_{st}$, where $\mathbf{A}_{st}\in\mathbb{R}^{N\times N}$ is the $(s,t)$th block of $\mathbf{A}$.
Now, when investigating one element of the operator $\mathrm{tr}_{\mathcal{F}_2} \mathbf{A}$ at position $(l,m)$ we see that it exactly corresponds to the trace of block $\mathbf{A}_{lm}$:
\begin{align*} [\mathrm{tr}_{\mathcal{F}_2} \mathbf{A}]_{lm} &= \langle g_l, (\mathrm{tr}_{\mathcal{F}_2} \mathbf{A}) g_m \rangle = \sum_{i=1}^N \langle g_l\otimes e_i , \mathbf{A} (g_m\otimes e_i) \rangle \\
&= \sum_{i=1}^N \sum_{s,t=1}^p \langle g_l\otimes e_i , (g_s g_t^\top \otimes \mathbf{A}_{st}) (g_m\otimes e_i) \rangle = \sum_{i=1}^N \sum_{s,t=1}^p \langle g_l\otimes e_i , g_s g_t^\top g_m \otimes \mathbf{A}_{st} e_i \rangle \\
&= \sum_{i=1}^N \sum_{s=1}^p \langle g_l\otimes e_i , g_s \otimes \mathbf{A}_{sm} e_i \rangle = \sum_{i=1}^N \sum_{s=1}^p \langle g_l, g_s\rangle \langle e_i, \mathbf{A}_{sm} e_i \rangle = \sum_{i=1}^N  \langle e_i, \mathbf{A}_{lm} e_i \rangle \\
&=tr(\mathbf{A}_{lm}).
\end{align*}
The last equality is easy to see from the definition of trace.
The following theorem shows how to compute partial trace for separable operators~\citep[Theorem 2.29]{attalLecturesPtr}.
\begin{theorem}\label{th:ptr_separable} (partial trace of a tensor product of  operators) \\[0.1cm] 
Let $\mathbf{B}$ and $\mathbf{C}$ be operators in $\mathcal{L}(\mathcal{F}_{1})$ and $\mathcal{L}(\mathcal{F}_{2})$, respectively. If $\mathbf{A}$ is an operator in $\mathcal{L}(\mathcal{F}_{1} \otimes \mathcal{F}_2)$ of the form $\mathbf{B} \otimes \mathbf{C}$, then
\begin{equation*}
\mathrm{tr}_{\mathcal{F}_2}(\mathbf{A}) = \mathbf{B}\ \mathrm{tr}(\mathbf{C}).
\end{equation*}
\end{theorem}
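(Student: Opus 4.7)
The plan is to apply Definition~\ref{de:partial_trace} directly to the simple tensor $\mathbf{A} = \mathbf{B}\otimes\mathbf{C}$ and exploit the multiplicative behaviour of tensor products under the action on decomposable vectors and under the tensor inner product. Concretely, I would fix an arbitrary orthonormal basis $\{e_i\}_i$ of $\mathcal{F}_2$ and, for arbitrary $x,y \in \mathcal{F}_1$, compute $\langle x, (\mathrm{tr}_{\mathcal{F}_2}(\mathbf{B}\otimes\mathbf{C}))\,y\rangle$ starting from the defining relation in Definition~\ref{de:partial_trace}.

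The key identities I would invoke are $(\mathbf{B}\otimes\mathbf{C})(y\otimes e_i) = \mathbf{B}y \otimes \mathbf{C}e_i$ and the tensor inner product rule $\langle u_1\otimes v_1, u_2\otimes v_2\rangle_{\mathcal{F}_1\otimes\mathcal{F}_2} = \langle u_1,u_2\rangle_{\mathcal{F}_1}\langle v_1,v_2\rangle_{\mathcal{F}_2}$. Substituting these yields
\begin{equation*}
\langle x, (\mathrm{tr}_{\mathcal{F}_2}(\mathbf{B}\otimes\mathbf{C}))\,y\rangle
= \sum_i \langle x\otimes e_i, \mathbf{B}y\otimes \mathbf{C}e_i\rangle
= \langle x,\mathbf{B}y\rangle \sum_i \langle e_i, \mathbf{C}e_i\rangle
= \langle x,\mathbf{B}y\rangle\,\mathrm{tr}(\mathbf{C}),
\end{equation*}
where the last equality is just the definition of the trace of $\mathbf{C}$. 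Since this holds for every $x,y\in\mathcal{F}_1$, weak identification of operators on $\mathcal{F}_1$ gives $\mathrm{tr}_{\mathcal{F}_2}(\mathbf{B}\otimes\mathbf{C}) = \mathbf{B}\,\mathrm{tr}(\mathbf{C})$, as claimed.

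Two minor points I would want to address. First, in the infinite-dimensional setting the interchange of the sum with the inner product, and the convergence of $\sum_i \langle e_i,\mathbf{C}e_i\rangle$, must be justified; this is fine because $\mathbf{C}\in\mathcal{L}(\mathcal{F}_2)$ was assumed to be trace-class, so the series converges absolutely and is independent of the basis. Second, the same argument implicitly shows that $\mathrm{tr}_{\mathcal{F}_2}(\mathbf{B}\otimes\mathbf{C})$ is independent of the chosen basis $\{e_i\}_i$, consistent with the well-definedness of the partial trace. I do not expect any real obstacle here: the proof is essentially a one-line calculation once the tensor product rules and the basis-independence of the trace are in hand.
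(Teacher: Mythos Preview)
Your proof is correct and is precisely the standard direct verification from Definition~\ref{de:partial_trace}. Note, however, that the paper does not supply its own proof of this theorem: it simply cites the result as Theorem~2.29 of the lecture notes by Attal. So there is no in-paper argument to compare against; your computation is exactly the kind of one-line verification one would expect behind that citation, and the two remarks you raise about trace-class convergence and basis-independence are the right technical caveats to mention.
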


The notion of \textit{partial trace} is a generalization of the trace operation to block structured matrices~\citep[chap.~10]{rieffel2011quantum}.  
Note that there are two ways of generalizing trace to block matrices. Another possibility would be the so-called block trace~\citep{filipiak2018properties} which, informally, is defined as a sum of the diagonal blocks of a matrix; with $\mathbf{A}\in\R^{pN\times pN}$ it would be the sum $\sum_{t=1}^{p} \mathbf{A}_{tt}$ in which each $\mathbf{A}_{tt}$ is of size $N\times N$. However in this work we only consider the \enquote{blockwise trace} definition we discussed earlier.


\begin{figure}[tb]
	\centering
	\includegraphics[scale=0.25]{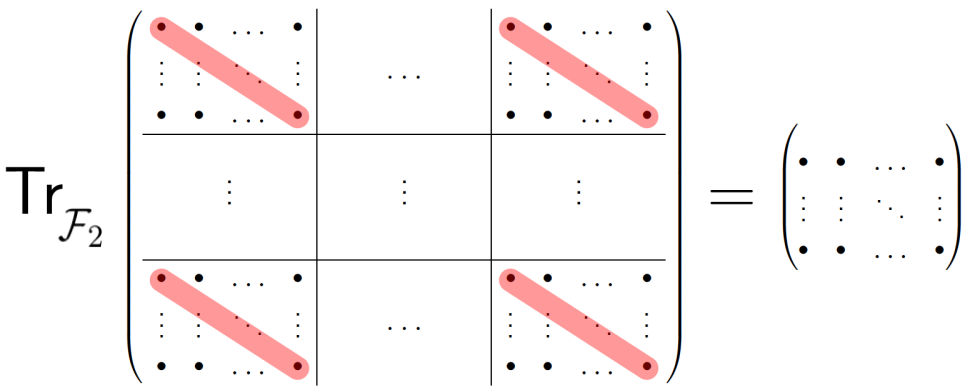}
	\caption{Illustration of partial trace operation. The partial trace operation applied to $N\times N$-blocks of a $pN\times pN$ matrix gives a $p\times p$ matrix as an output. }
	\label{fig:pt}
\end{figure}


In the case where the density matrix $\rho$ of a mixed bipartite state can be written as $\rho = \rho_1 \otimes \rho_2$, where $\rho_1$ and $\rho_2$ are density matrices on $\mathcal{F}_1$ and $\mathcal{F}_2$ of the subsystems, the partial trace of $\rho$ with respect to $\mathcal{F}_2$ is $\rho_1$. This form of mixed product states is restrictive and does not exhibit correlations between the two subsystems. A convex sum of different product states, 
\begin{equation}
\label{eq:qsep}
\rho = \sum_i p_i \ \!\rho_1^i \otimes \rho_2^i,
\end{equation}
with $p_i \geq0$ and $\sum_i p_i = 1$, however, will in general  represent certain types of correlations between the subsystems of the composite quantum system. These correlations can be described in terms of the classical probabilities $p_i$, and are therefore considered classical. States of the form (\ref{eq:qsep}) thus are called \textit{separable} mixed states. In contrast, a mixed state is \textit{entangled}  if it cannot be written as a convex combination of product states, i.e., 
\begin{equation}
\nexists \ \rho_1^i, \rho_2^i, p_i \geq 0 \quad \text{such that} \quad \rho = \sum_i p_i \ \! \rho_1^i \otimes \rho_2^i.
\end{equation}
Entangled states are one of the most commonly encountered classes of bipartite states possessing quantum correlations~\citep{mintert2009basic}.

A challenging problem in quantum computing is to identify necessary and sufficient conditions for quantum separability. Given a density matrix $\rho$ of a bipartite quantum state, the quantum separability problem asks whether $\rho$ is entangled or separable.
%
A useful and efficient necessary condition for checking if a given block density matrix is separable in some block size partition, is to use \textit{positive partial transpose~(PPT)} condition, sometimes also called Peres-Horodecki criterion~\citep{peres1996separability,Horodecki:321722}. The partial transpose of a $pN\times pN$ block matrix $\mathbf{P}$ with blocks $(\mathbf{P}_{ij})_{i,j=1}^p$ is the block matrix of the same size containing the transposed blocks $(\mathbf{P}_{ij}^\top)_{i,j=1}^p$. 
If a density matrix is separable, then it has positive partial transpose. It is necessary for any separable density matrix to have positive partial transpose; yet in general this condition is not sufficient in guaranteeing separability, as there might be non-separable density matrices fulfilling the PPT condition. However this condition guarantees that if the partial transpose matrix has a negative eigenvalue, the state is entangled.

\subsection{Learning with Operator-valued Kernels}

We now review the basics of operator-valued kernels~(OvKs) and their associated vector-valued reproducing kernel Hilbert spaces~(RKHSs) in the setting of supervised learning.
Vector-valued RKHSs were introduced to the machine learning community by~\cite{Micchelli2005onlearning} as a way to extend kernel machines from scalar to vector outputs. 
Given a set of training samples $\{\mathbf{x}_i,\mathbf{y}_i\}_{i=1}^n$ on $\mathcal{X} \times \mathcal{Y}$, the optimization problem 
 \begin{equation}
 \label{eq:minRKHS}
\arg\min_{f\in\mathcal{H}} \sum_{i=1}^n V(\mathbf{y}_i,f(\mathbf{x}_i)) + \lambda \|f\|_\mathcal{H}^2,
 \end{equation}
where $f$ is a vector-valued function and $V:\mathcal{Y}\times \mathcal{Y} \to \mathbb{R}_+$ is a convex loss function, can be solved in a vector-valued RKHS $\mathcal{H}$ by the means of a vector-valued extension of the representer theorem. 
\vspace{-0.0cm}
\begin{definition} (vector-valued RKHS) \\[0.1cm] 
  A Hilbert space $\mathcal{H}$ of functions from $\mathcal{X}$ to
  $\mathcal{Y}$ is called a reproducing kernel Hilbert space if
  there is a positive semi-definite $\LY$-valued kernel
  $K$ on $\mathcal{X} \times \mathcal{X}$ such that:
  \begin{enumerate}[i.]
    \item \label{enum1:i} the function $\z \mapsto K(\x,\z)\mathbf{y}$ belongs to $\mathcal{H},\ \forall \;\z, \x \in \mathcal{X},\ \mathbf{y} \in \mathcal{Y}$,
    \item \label{enum1:ii} $\forall f \in \mathcal{H}, \x \in \mathcal{X},\ \mathbf{y} \in \mathcal{Y}, \ \ 
      \langle f,K(\x,\cdot)\mathbf{y}\rangle _{\mathcal{H}} =
      \langle f(\x),\mathbf{y}\rangle_{\mathcal{Y}}$ \hspace*{0.1cm} (reproducing property).
  \end{enumerate}
\end{definition}
\begin{definition} (positive semi-definite operator-valued kernel) \\[0.1cm]
  A $\LY$-valued kernel $K$ on 
  $\mathcal{X}\!\!\,\times\!\!\,\mathcal{X}$ is a function
  $K(\cdot,\cdot):\mathcal{X} \times \mathcal{X}
  \rightarrow \LY$; it is positive semi-definite if:
  \begin{enumerate}[i.]
    \item $K(\x, \z)=K(\z, \x)^{*}$, where superscript $^*$ denotes the adjoint operator, 
    \item  and, for every $n\in\mathbb{N}$ and all
      $\{(\x_{i},\mathbf{y}_{i})_{i=1,\ldots ,n}\}\in \mathcal{X} \times
      \mathcal{Y}$,  \[\sum_{i,j} \langle
      \mathbf{y}_i, K(\x_{i},\x_{j})\mathbf{y}_{j}\rangle_{\mathcal{Y}} \geq 0.\]
  \end{enumerate}
\end{definition}

\begin{theorem} (bijection between vector-valued RKHS and positive semi-definite operator-valued kernel) \\ [0.1cm]
  \label{th:positiveDefiniteImpliesKernel}
  An $\LY$-valued kernel $K$ on
  $\mathcal{X}\times \mathcal{X}$ is the reproducing kernel of some Hilbert space
  $\mathcal{H}$, if and only if it is positive semi-definite.
\end{theorem}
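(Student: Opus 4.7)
My plan is to prove the two implications separately. The forward direction (\H\ is an RKHS with reproducing kernel $K$ $\Rightarrow$ $K$ is positive semi-definite) follows by direct manipulation of the reproducing property, while the converse (psd $K$ $\Rightarrow$ existence of an associated RKHS) is a Moore--Aronszajn-style construction adapted to the operator-valued setting.

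For the forward direction, I would first establish Hermitian symmetry $K(\x,\z)=K(\z,\x)^*$: evaluating $\langle K(\z,\cdot)\y',\,K(\x,\cdot)\y\rangle_{\H}$ in two different ways by applying the reproducing property to each factor in turn yields $\langle K(\z,\x)\y',\y\rangle_{\Y}$ on one side and $\overline{\langle K(\x,\z)\y,\y'\rangle_{\Y}}$ on the other. Since the identity holds for all $\y,\y'\in\Y$, the two operators coincide up to adjoint. For semi-definiteness, I would pick any finite family $\{(\x_i,\y_i)\}_{i=1}^{n}$, form $f=\sum_i K(\x_i,\cdot)\y_i\in\H$ using axiom (i), and expand $0\le\|f\|_{\H}^{2}=\sum_{i,j}\langle K(\x_i,\x_j)\y_i,\y_j\rangle_{\Y}$, which after applying the Hermitian symmetry just proved equals $\sum_{i,j}\langle \y_i,K(\x_i,\x_j)\y_j\rangle_{\Y}$.

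For the converse, I would construct a pre-Hilbert space $\H_0$ as the linear span of the functions $\x\mapsto K(\z,\x)\y$ over $\z\in\X$ and $\y\in\Y$, and endow it with the bilinear form
$$
\left\langle\sum_i K(\z_i,\cdot)\y_i,\ \sum_j K(\z'_j,\cdot)\y'_j\right\rangle_{\H_0} := \sum_{i,j}\langle\y_i,K(\z_i,\z'_j)\y'_j\rangle_{\Y}.
$$
The required checks are: (a) well-definedness with respect to the non-unique representation of each element, obtained by rewriting the form as $\sum_i\langle\y_i,g(\z_i)\rangle_{\Y}$ where $g$ denotes the second argument viewed as a function, so the value depends only on $g$ and not on its expansion; (b) positivity, which is precisely the psd hypothesis on $K$ applied to an arbitrary element of $\H_0$; and (c) definiteness, via Cauchy--Schwarz combined with the reproducing-type identity $\langle f,K(\x,\cdot)\y\rangle_{\H_0}=\langle f(\x),\y\rangle_{\Y}$, which forces $f(\x)=0$ for every $\x\in\X$ whenever $\|f\|_{\H_0}=0$. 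Completing $\H_0$ then furnishes the candidate Hilbert space \H.

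The main obstacle is ensuring that the abstract Hilbert completion is realized as a space of honest $\Y$-valued functions on $\X$ on which the reproducing property still holds, rather than merely a space of Cauchy equivalence classes. I would address this by observing that, for any $\x\in\X$ and $\y\in\Y$,
$$
|\langle f(\x),\y\rangle_{\Y}| = |\langle f,K(\x,\cdot)\y\rangle_{\H_0}| \le \|f\|_{\H_0}\,\langle\y,K(\x,\x)\y\rangle_{\Y}^{1/2}
$$
makes evaluation continuous in the $\y$-direction. Any $\H_0$-Cauchy sequence therefore converges pointwise in $\Y$, the completion embeds canonically into $\Y^{\X}$, and the reproducing property extends to \H\ by continuity of both sides in $f$.
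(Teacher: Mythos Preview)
Your proof is correct and follows the standard Moore--Aronszajn construction adapted to the operator-valued setting. The paper, however, does not provide its own proof of this theorem: it simply cites \cite{Micchelli2005onlearning} and \cite{Kadri2015operator} for the argument. Your construction is essentially the one carried out in those references (in particular Micchelli--Pontil), so there is nothing to compare beyond noting that you have filled in what the paper chose to outsource.
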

\begin{theorem} (representer theorem) \\ [0.1cm]
  \label{th:representer}
Let $K$ be a positive semi-definite operator-valued kernel and $\mathcal{H}$ its corresponding vector-valued RKHS. The solution $\hat{f} \in \mathcal{H}$ of the regularized optimization problem~(\ref{eq:minRKHS}) has the following form
\begin{equation}
\hat{f}(\x) = \sum_{i=1}^n K(\x,\mathbf{x}_i)\mathbf{c}_i,\;\;\; \text{with} \;\;\; \mathbf{c}_i\in \mathcal{Y}.
\end{equation}

\end{theorem}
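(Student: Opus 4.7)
The plan is to adapt the classical orthogonal-decomposition argument of the scalar representer theorem to the vector-valued setting, relying crucially on the reproducing property~(ii) of Definition of vector-valued RKHS. Define the closed subspace
\begin{equation*}
\mathcal{H}_0 := \overline{\mathrm{span}}\{K(\cdot,\mathbf{x}_i)\mathbf{y} : i=1,\ldots,n,\ \mathbf{y}\in\mathcal{Y}\} \subseteq \mathcal{H},
\end{equation*}
and write any candidate $f\in\mathcal{H}$ uniquely as $f = f_0 + f_\perp$ with $f_0\in\mathcal{H}_0$ and $f_\perp\in\mathcal{H}_0^\perp$. The goal is to show that the objective in (\ref{eq:minRKHS}) evaluated at $f_0$ is no larger than at $f$, so the argmin lies in $\mathcal{H}_0$, and then to identify the form of elements of $\mathcal{H}_0$.

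First I would use the reproducing property to kill the loss contribution of $f_\perp$. For every $i$ and every $\mathbf{y}\in\mathcal{Y}$,
\begin{equation*}
\langle f_\perp(\mathbf{x}_i),\mathbf{y}\rangle_{\mathcal{Y}} = \langle f_\perp, K(\cdot,\mathbf{x}_i)\mathbf{y}\rangle_{\mathcal{H}} = 0,
\end{equation*}
since $K(\cdot,\mathbf{x}_i)\mathbf{y}\in\mathcal{H}_0$ while $f_\perp\in\mathcal{H}_0^\perp$. Non-degeneracy of the inner product in $\mathcal{Y}$ then forces $f_\perp(\mathbf{x}_i)=\mathbf{0}$ for all $i$, hence $f(\mathbf{x}_i)=f_0(\mathbf{x}_i)$ and the data term $\sum_i V(\mathbf{y}_i,f(\mathbf{x}_i))$ depends only on $f_0$. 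Second, by Pythagoras in $\mathcal{H}$,
\begin{equation*}
\|f\|_\mathcal{H}^2 = \|f_0\|_\mathcal{H}^2 + \|f_\perp\|_\mathcal{H}^2 \;\geq\; \|f_0\|_\mathcal{H}^2,
\end{equation*}
with equality iff $f_\perp=0$. Combining, the objective at $f_0$ is no larger than at $f$ (strictly smaller whenever $\lambda>0$ and $f_\perp\neq 0$), so any minimizer $\hat f$ belongs to $\mathcal{H}_0$.

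It remains to describe $\hat f$ explicitly. Since $\hat f\in\mathcal{H}_0$, it is (the limit of) finite sums $\sum_{i,k} K(\cdot,\mathbf{x}_i)\mathbf{y}_{i,k}$, and by collecting the coefficients attached to each $\mathbf{x}_i$ via $\mathbf{c}_i:=\sum_k \mathbf{y}_{i,k}\in\mathcal{Y}$, we obtain the desired form $\hat f(\x)=\sum_{i=1}^n K(\x,\mathbf{x}_i)\mathbf{c}_i$. The main obstacle is justifying that $\hat f$ is actually a finite sum rather than only a limit of such sums: this is the one place where the argument differs from the scalar case, because when $\mathcal{Y}$ is infinite-dimensional the span above need not be finite-dimensional. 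A clean fix is to observe that the evaluation functionals $f\mapsto f(\mathbf{x}_i)$ are continuous on $\mathcal{H}$ (via~(ii) applied componentwise), so the linear map $\mathcal{Y}^n\to\mathcal{H}$, $(\mathbf{c}_1,\ldots,\mathbf{c}_n)\mapsto \sum_i K(\cdot,\mathbf{x}_i)\mathbf{c}_i$, has closed range equal to $\mathcal{H}_0$; thus every element of $\mathcal{H}_0$ is already of the stated finite form, which finishes the proof. In the finite-dimensional case $\mathcal{Y}=\R^p$ used throughout the paper, this closedness is automatic and the argument is elementary.
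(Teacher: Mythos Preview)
The paper does not supply its own proof of this theorem; it simply refers the reader to \cite{Micchelli2005onlearning} and \cite{Kadri2015operator}. Your orthogonal-decomposition argument is exactly the standard one found in those references, and for the setting the paper actually uses ($\mathcal{Y}=\mathbb{R}^p$) it is complete and correct: then $\mathcal{H}_0$ is spanned by finitely many elements $K(\cdot,\mathbf{x}_i)\mathbf{e}_j$, $1\le i\le n$, $1\le j\le p$, so it is finite-dimensional and every element is automatically a finite sum of the stated form.

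There is, however, a genuine gap in your ``clean fix'' for infinite-dimensional $\mathcal{Y}$. Continuity of the evaluation maps $f\mapsto f(\mathbf{x}_i)$ only tells you that the linear map $(\mathbf{c}_1,\ldots,\mathbf{c}_n)\mapsto\sum_i K(\cdot,\mathbf{x}_i)\mathbf{c}_i$ is bounded; it does \emph{not} imply that its range is closed. In fact the range can fail to be closed: for $n=1$, one has $\|K(\cdot,\mathbf{x}_1)\mathbf{c}\|_{\mathcal{H}}^2=\langle \mathbf{c},K(\mathbf{x}_1,\mathbf{x}_1)\mathbf{c}\rangle_{\mathcal{Y}}$, so if $K(\mathbf{x}_1,\mathbf{x}_1)$ is a compact positive operator with eigenvalues tending to zero, the map is not bounded below and its range is a proper dense subspace of $\mathcal{H}_0$. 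The way the cited references handle the infinite-dimensional case is not via closedness of range but via first-order optimality: for convex $V$ with subgradient $\partial_2 V$, any minimizer satisfies $2\lambda \hat f + \sum_i K(\cdot,\mathbf{x}_i)\mathbf{g}_i=0$ for some $\mathbf{g}_i\in\partial_2 V(\mathbf{y}_i,\hat f(\mathbf{x}_i))\subset\mathcal{Y}$, which directly yields $\mathbf{c}_i=-\mathbf{g}_i/(2\lambda)\in\mathcal{Y}$. Since the paper restricts to $\mathcal{Y}=\mathbb{R}^p$, this issue is moot here, but you should drop the closed-range sentence or replace it with the optimality-condition argument.
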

With regard to the classical representer theorem, here the kernel $K$ outputs a matrix and
the “weights” $\mathbf{c}_i$ are vectors. The proofs of Theorem~\ref{th:positiveDefiniteImpliesKernel} and~\ref{th:representer} can be found in~\cite{Micchelli2005onlearning} and~\cite{Kadri2015operator}. 
For further reading on operator-valued kernels and their associated RKHSs, see, e.g.,~\cite{Caponnetto2008universal,Carmeli2010vector, alvarez2012kernels}.

\section{Learning Operator-valued Kernels}
\label{sec:learningOvK}

In this section we first review some known classes of operator-valued kernels, before moving on to describing ways to learn them. Most of the works in this field consider separable kernels, but a few specialized methods exist also for non-separable kernels.

\subsection{Known Classes of Operator-valued Kernels}

Some well-known classes of operator-valued kernels include separable and transformable kernels. 
Note that here and throughout the rest of the manuscript we consider the case where the output space $\mathcal{Y}$ is of finite dimension $p$ (i.e., $\mathcal{Y} = \mathbb{R}^p$ and $\LY = \mathbb{R}^{p\times p}$ ).
\begin{definition}\label{def:separable_ovk} (Separable operator-valued kernel) \index{separable kernel} \\[0.1cm]
A separable operator-valued kernel is a function $K:\X\times\X \rightarrow \R^{p\times p}$, that can be written as 
\begin{equation}
\label{eq:sep_ovk}
K(\x,\z) = k(\x,\z)\mathbf{T}, \quad \forall \; \x, \z \in \X,
\end{equation} 
in which $k$ is a scalar-valued kernel function, and $\mathbf{T}\in \mathbb{R}^{p\times p}$ is a positive semi-definite matrix.
\end{definition}
This class of kernels is very attractive in terms of computational time, as it is easily decomposable. However the matrix $\mathbf{T}$ acts only on the outputs independently of the input data, which makes it difficult for these kernels to capture input-output relations. 
In the same spirit a more general class, sum of separable kernels, can be defined as follows.
\begin{definition}\label{def:sum_of_separable_ovk} (Sum of separable operator-valued kernels) \\[0.1cm]
A sum of separable operator-valued kernels is a function $K:\X\times\X \rightarrow \R^{p\times p}$, that can be written as 
\begin{equation}
\label{eq:ssep_ovk}
K(\x, \z) = \sum_{l} k_l(\x, \z)\mathbf{T}_l, \quad \forall\; \x, \z \in \X,
\end{equation}  
in which $k_l$ are a scalar-valued kernels and $\mathbf{T}_l \in \mathbb{R}^{p\times p}$ are positive semi-definite.
\end{definition}
This class of operator-valued kernels can capture more complex similarities, but still assumes that the unknown input-output dependencies can be decomposed into a product of two separate kernel functions that encode interactions
among inputs and outputs independently.

\begin{definition}\label{def:transformable_ovk} (Transformable operator-valued kernel) \index{transformable kernel} \\[0.1cm]
A transformable operator-valued kernel is a function $K:\X\times\X \rightarrow \R^{p\times p}$, that can be written as 
\begin{equation}
\label{eq:transformable_ovk}
K(\x, \z) = \left[\widetilde{k}(S_l \x, S_m \z)\right]_{l,m=1}^p, \quad \forall\; \x,\z \in \X,
\end{equation} 
in which $\widetilde{k}: \mathcal{\widetilde{X}} \times \mathcal{\widetilde{X}} \to \mathbb{R}$ is a scalar-valued kernel function and $\{S_t\}_{t=1}^p$ are mappings from $\mathcal{X}$ to $\mathcal{\widetilde{X}} $.
\end{definition}
In transformable kernels the data is transformed with the mappings $\{S_t\}_{t=1}^p$ before feeding it to the scalar-valued kernel function; which transformations to use depends on which outputs the element in $K(\x, \z)$ corresponds to. The mappings $S_t$ operate on input data while depending on outputs; however they are not intuitive nor easy to interpret and determine. 
One example of such kernels, which was proposed in ~\citet{Caponnetto2008universal}, is the kernel function $K(\x, \z):=(e^{\bm{\sigma}_{lm}\langle \x, \z \rangle}:l,m\in \{1,\ldots,p\})$ with $\bm{\sigma} = (\bm{\sigma}_{lm})$ a positive semi-definite matrix. In this transformable kernel the matrix entries outputted by the kernel are computed using linear transformations of the data. Indeed, it is easy to see that $K(\x,\z) = \big[\prod_{i=1}^p e^{\langle S_l^{(i)}\x, S_m^{(i)}\z\rangle}\big]_{l,m=1}^p$, where $\bm{\sigma} = \sum_{i=1}^p\lambda_i {\mathbf{u}_i} {\mathbf{u}_i} ^\top$ is the eigenvalue decomposition of $\bm{\sigma}$ and $S_t^{(i)}\x:=\sqrt{\lambda} \mathbf{u}_{it}\x$, for all $t=1,\ldots,p$.

Separable kernels are the most common operator-valued kernels to be used and learned. As already mentioned, they are nevertheless a relatively restrictive class of kernels, as the relationships between inputs and outputs are modelled independently of each other as was already illustrated in Figure~\ref{fig:separableIllustration1}. Moreover, only certain types of interactions can be modelled, as $\mathbf{T}$ should be a psd matrix, and symmetric. Figure~\ref{fig:sep_nonsep} illustrates this.

\begin{figure}[tb]
\centering
\begin{tikzpicture}
\node at (0,0) {\includegraphics[width=3cm]{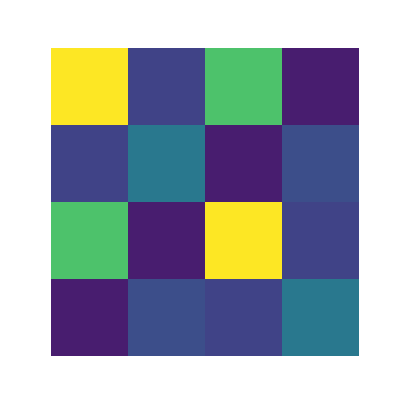}};
\draw [ultra thick] (-1.11, -1.16) rectangle (0,0);
\node at (-2.5,0.57-1.15) {symmetric $\rightarrow$};
\node at (3,0) {\includegraphics[width=3cm]{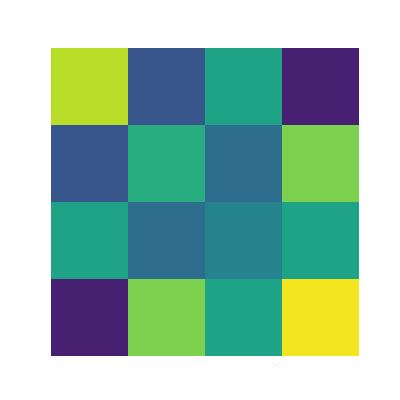}};
\draw [ultra thick] (3, 0) rectangle (4.2,1.15);
\node at (6,0.57) {$\leftarrow$ non symmetric};
\end{tikzpicture}
\caption{Illustration on differences of separable (left) and non-separable (right) operator-valued kernels. For separable kernels the $p\times p$ output matrix is always a psd symmetric matrix. }\label{fig:sep_nonsep}
\end{figure}

\subsection{Learning Separable Operator-valued Kernels}

Most works on learning operator-valued kernels consider the Output Kernel Learning (OKL) framework, where a separable operator-valued kernel is learned by fixing the scalar-valued kernel and learning the operator $\T$. The method is named for the observation that learning $\T$ does not depend on input data values, but only on the outputs.

All the output kernel learning algorithms are based on joint optimization, that is, the kernel is learned jointly with the learning problem, giving an optimization problem that generally can be written as
\[\min_{\mathbf{T},\mathbf{c}} \sum_{i=1}^n V(\mathbf{y}_i, f_{\mathbf{T},\mathbf{c}}(\mathbf{x}_i))+\lambda\,\Omega(f_{\mathbf{T},\mathbf{c}}) + \gamma\,\Theta(\T).\]
Here $V$ is the loss function for classification/regression and $\Omega$ is the accompanying regularization term, while $\Theta$ regularizes the output matrix. With separable operator-valued kernels, applying the representer theorem we get that $f_{T,\mathbf{c}}(\cdot) = \sum_{j=1}^n k(\x_j, \cdot)\T \mathbf{c}_j$.  

Many algorithms solve the output kernel learning problem. 
The first output-kernel learning algorithm was introduced in \citet{dinuzzo2011learning} with Frobenius norm regularizer on the output matrix $\T$. 
\citet{dinuzzo2011learning-b} considers learning low-rank output kernels, that is, separable kernels where the rank of $\T$ is constrained to be less or equal to some $r$. The optimization is performed with having also a regularizer on $\mathrm{tr}(\T)$ in addition to the rank constraint. More general or efficient formulations of output kernel learning have been proposed in~\citet{Ciliberto2015convex} and~\citet{jawanpuria2015efficient}.

To go beyond the standard OKL, 
\citet{kadri2012multiple} extended the multiple kernel learning framework~\citep{gonen2011multiple} that is popular in learning scalar-valued kernels into operator-valued kernel framework. The multiple kernel learning refers to paradigm where given multiple (scalar-valued) kernels $k_i$, a combination $k(\x, \z) = \sum_{i=1}^l \alpha_i k_i(\x, \z)$ is learned and then used in the predictive learning problem at hand. Similarly, \citet{kadri2012multiple} focus on learning a finite linear combination of separable operator-valued kernels.
They consider two formulations of the optimization problem. In the first one the separable operator-valued kernels all share the same output operator $\mathbf{T}$, meaning that the full kernel is \[K(\x, \z) = \sum_{i=1}^K \alpha_i k_i(\x, \z)\mathbf{T}.\] The second formulation considers the case where also the output operators differ across the operator-valued kernels in the sum, giving \[K(\x, \z) = \sum_{i=1}^K \alpha_i k_i(\x, \z)\mathbf{T}_i.\]  
In both of these versions only the multipliers $\alpha_i$ are learned, and the kernels are fixed, comparably to the case of multiple kernel learning (MKL) for scalar-valued kernels. Notably, the operators $\mathbf{T}$ and $\mathbf{T}_i$ are fixed in advance, which makes the use of this method difficult as it is not obvious how one should choose $\mathbf{T}$ without learning it.

Some works have continued this line of investigation.
\citet{sindhwani2013scalable} considers learning a combination of separable kernels that share the operator $\mathbf{T}$, while optimizing both the combination of the basis scalar-valued kernels and the matrix $\mathbf{T}$.
Another approach by \citet{gregorova2017forecasting} considers combining a set of scalar-valued kernels with a set of output matrices $\mathbf{T}_i$. However they impose diagonal structure on the output matrices, restricting the types of relations they are able to model. In this setting the diagonal values can be interpreted as model weights of the kernel in standard MKL setting.

\subsection{Learning Non-separable Operator-valued Kernels}

There are very few works that consider learning non-separable kernels. 
\citet{lim2015operator} considers an application to modelling time series data and goes further than separability by learning a combination of a separable and a transformable kernel. They consider a transformable kernel defined as \[[K_{transf.}(\x, \z)]_{st} = \exp(-\gamma (\x_s-\z_t)^2),\] where $\x_s$ and $\z_t$ are the $s$th and $t$th elements of vectors $\x$ and $\z$ respectively. This transformable kernel applies a Gaussian kernel to pairs of elements of the data vectors, giving a $d\times d$-matrix as an output if the data dimension is $d$. The separable kernel in their work is \[K_{sep.}(\x, \z) = \exp(-\gamma \|\x-\z\|^2)\, \mathbf{T},\] and the full kernel matrix they consider in learning is $K =K_{transf.} \circ K_{sep.}$, a Hadamard or element-wise matrix product of the two operator-valued kernel matrices. When they learn this kernel, they consider learning the matrix $\mathbf{T}$ from the separable part of it. 
This class of kernels cannot generalize to the learning problems we consider. The greatest restriction is, that the kernel outputs a $d\times d$ matrix, $d$ being the dimension of the input data. This is very rarely the same as the dimension of the outputs.

Another specialized operator-valued kernel is that of \citet{huusari2018mvml}, where a non-separable kernel is learned in context of multi-view learning, by incorporating a learnable metric operating between the views into the kernel. The output of a kernel is a $v\times v$ matrix where $v$ is the number of views in the data. Similarly to the previous work, this is not applicable for a general multi-output setting we consider. 
Having only few specialized works outside the separability framework motivates our more general entangled kernel learning paradigm.

\section{Partial Trace and Entangled Kernels}
\label{sec:ptk}

This section first revisits the known classes of operator-valued kernels and discusses the inclusions between them. After that we introduce the two novel classes of operator-valued kernels, the partial trace kernels that encompass the known classes of operator-valued kernels, and the entangled kernels that are a class of kernels distinct from the separable. 

While it is straightforward to see that separable kernels belong to the larger class of sum of separable, the picture is less clear for transformable kernels. The following examples clarify this situation. 

\begin{example} (transformable but not separable kernel)  \\[0.1cm]
	On the space $\mathcal{X} = \mathbb{R}$, consider the kernel
	\[K(x, z) = \begin{pmatrix}
	xz & xz^2 \\
	x^2z & x^2z^2
	\end{pmatrix}, \quad \forall\; x, z \in \X.\]
	$K$ is a transformable kernel, but not a (sum of) separable kernel. We obtain that $K$ is transformable simply by choosing in~Def.~\ref{def:transformable_ovk} the kernel  $\widetilde{k}(x, z) = xz$, $S_1(x)=x$, and $S_2(x)=x^2$. From the property of positive semi-definiteness of the operator-valued kernel, it is easy to see that the matrix $\mathbf{T}$ of a separable kernel  is symmetric~(see Def.~\ref{def:separable_ovk}), and since the matrix $K(x, z)$ is not, $K$ is not a separable kernel.
\end{example}

\begin{example}  (transformable and separable kernel)  \\[0.1cm]
	Let $K$ be the kernel function defined as
	\[K(\x, \z) = \langle \x, \z \rangle \mathbf{T}, \quad \forall\; \x, \z \in \X,\]
	where $\mathbf{T}\in\mathbb{R}^{p\times p}$ is a rank one positive semi-definite matrix. $K$ is both separable and transformable kernel. Since $\mathbf{T}$ is of rank one,  it follows that $\big(K(\x, \z)\big)_{lm} =  \mathbf{u}_l \mathbf{u}_m \langle \x, \z \rangle $, with $\mathbf{T} = \mathbf{u}  \mathbf{u}^\top$. We can see that $K$ is  transformable by replacing in~Def.~\ref{def:transformable_ovk} the kernel $\widetilde{k}(\x, \z)$ by $\langle \x, \z\rangle$ and $S_t(\x)$ by $\mathbf{u}_t\x$, $t=1,\ldots,p$.  $K$ is separable by construction.
\end{example}
It is worth noting that separable kernels are not limited to finite-dimensional output spaces, while transformable kernels are. Figure~\ref{fig:kc} depicts inclusions among kernel classes discussed here and the two new families of operator-valued kernels we propose: \textit{partial trace} kernels and \textit{entangled} kernels.

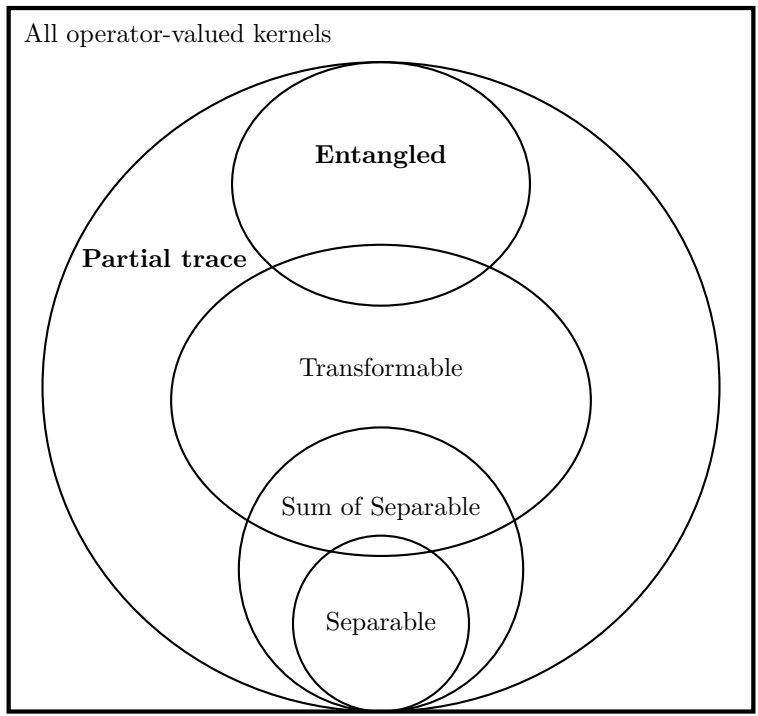
\begin{figure}[tb]
\centering

\begin{tikzpicture}[thick,scale=0.9, every node/.style={scale=0.9}]

\draw [ultra thick] (-5.5, -5.9) rectangle (5.5,4.5);
\node at (-3,4.1) {All operator-valued kernels};

\draw[thick] (0,-1.1) ellipse (5cm and 4.8cm);
\node at (-3.2,0.8) {\textbf{Partial trace}};

\draw[thick] (0,-3.8) ellipse (2.1cm and 2.1cm);
\node at (0,-2.9) {Sum of Separable};

\draw[thick] (0,-4.6) ellipse (1.3cm and 1.3cm);
\node at (0,-4.6) {Separable};

\draw[thick] (0,-1.3) ellipse (3.1cm and 2.3cm);
\node at (0,-0.8) {Transformable};

\draw[thick] (0,1.9) ellipse (2.2cm and 1.8cm);
\node at (0,2.3) {\textbf{Entangled}};

\end{tikzpicture}

\caption{Illustration of inclusions among various operator-valued kernel classes.
}   \label{fig:kc}

\end{figure}

We now define the two novel classes of operator-valued kernels. The first one, the class of partial trace kernels, encompasses both (sum of) separable and transformable kernels, while the second, entangled kernels, is a class of non-separable kernels. 
We start by introducing the more general class of partial trace kernels. 
The intuition behind this class of kernels is that in the scalar-valued case any kernel function $k$ can be written as the trace of an operator in $\mathcal{L(K)}$, where $\mathcal{K}$ is the reproducing kernel Hilbert space associated to the scalar-valued kernel $k$. It is easy to see that $k(\x, \z) = \langle \phi(\x), \phi(\z) \rangle = \mathrm{tr}(\phi(\x) \phi(\z)^\top)$.\footnote{\label{footnote:transpose}There is some abuse of notation in using the transpose symbol $^\top$ for a feature map representation which can be infinite-dimensional. In this case, we can write $k(\x, \z) = \mathrm{tr}(\phi(\x) \phi(\z)^*)$, where $\phi(\x)  \phi(\z)^*$ is the rank one operator defined for all $ u \in\mathcal{K}$ by $(\phi(\x)  \phi(\z)^*) u = \langle \phi(\z), u \rangle\phi(\x)$.}
The following definition of partial trace kernels can be motivated as a generalization of the kernel trick by using the partial trace operator instead of trace.
\begin{definition}\label{def:ptrKernel}(Partial trace kernel) \\[0.1cm] 
A partial trace kernel is an operator-valued kernel function $K$ having the following form
\begin{equation}
\label{eq:ptk}
K(\x, \z) = \mathrm{tr}_{\mathcal{K}}(\mathbf{P}_{\phi(\x),\phi(\z)}) ,
\end{equation}
where $\mathbf{P}_{\phi(\x),\phi(\z)}$ is an operator on $\mathcal{L(Y \otimes K)}$, and $\mathrm{tr}_{\mathcal{K}}$ is the partial trace on $\mathcal{K}$ (i.e., over the inputs).
\end{definition}
Depending on the choice of the operator $\mathbf{P}_{\phi(\x),\phi(\z)}$, partial trace kernels may not be positive semi-definite. Since the partial trace preserves positive semi-definiteness~\citep{filipiak2018properties}, it is clear that the partial trace kernel is positive semi-definite if for every $n\in\mathbb{N}$ and $\x_1, \ldots, \x_n\in \mathcal{X}$ the matrix $[\mathbf{P}_{\phi(\x_i),\phi(\x_j)}]_{i,j=1}^n$ is positive.
Even though all the kernel subclasses considered here and shown in Figure~\ref{fig:kc} are positive semi-definite, our definition of partial trace kernel is general and flexible enough to cover many kernels and allows the design of new ones that may or not be positive semi-definite.
As for the scalar-valued case, operator-valued kernels which are not positive semi-definite may be useful for learning in reproducing kernel Kre\u{\i}n spaces~\citep{ong2004learning,saha2020learning}.

The class of partial trace kernels  is very broad and encompasses the classes of separable and transformable kernels~(see Figure~\ref{fig:kc}). From the definition of the partial trace operation, we can see that if we choose $\mathbf{P}_{\phi(\x),\phi(\z)} = \sum_l \mathbf{T}_l \otimes \big(\phi_l(\x) \phi_l(\z)^*\big)$, where $\phi(\x)  \phi(\z)^*$ is the rank one operator defined in Footnote~\ref{footnote:transpose}, we recover the case of sum of separable kernels. In the same way, if we fix $[\mathbf{P}_{\widetilde{\phi}(\x),\widetilde{\phi}(\z)}]_{l,m=1}^p = \big(\widetilde{\phi} \circ S_l(\x)\big) \big(\widetilde{\phi} \circ S_m(\z)\big)^\top$ in~ Eq.~\ref{eq:ptk}, computing the trace of each block using the partial trace will give the transformable kernel.
With this in mind, we can use the partial trace kernel formulation to induce a novel class for operator-valued kernels which are not separable, with the goal to characterize inseparable correlations between inputs and outputs. 
\begin{definition}\label{de:entangled}(Entangled kernel) \\[0.1cm] 
	An entangled  operator-valued kernel $K$ is defined as 
	\begin{equation}
	\label{eq:ek}
	K(\x, \z) = \mathrm{tr}_{\mathcal{K}}\left(\mathbf{U} \big(\mathbf{T} \otimes (\phi(\x)  \phi(\z)^*)\big)\mathbf{U}^*\right),
	\end{equation}
	in which $\mathbf{T}\in\mathcal{L(Y)}$ is a positive semi-definite operator, and $\mathbf{U} \in \mathcal{L(Y\otimes K)}$ is not separable.
\end{definition}
When $\mathcal{Y}$ and $\mathcal{K}$ have finite dimensions $p$ and $N$, respectively, $\mathcal{L(Y\otimes K)}$ is simply the set of matrices of dimensions $pN\times pN$ and $\phi(\x)  \phi(\z)^*$ is the matrix $\phi(\x)  \phi(\z)^\top$, where  $\phi(\z)^\top$ denotes the transpose of  $\phi(\z)$.
In the following we abuse the notation and denote  $N$ as the dimensionality of feature representation $\phi(\x)$. 
However we do not restrict ourselves to finite dimensions and $N$ in this notation can also be infinite.
In this definition, $\mathbf{U}$ not being separable means that it cannot be written as $\mathbf{U} = \mathbf{B} \otimes \mathbf{C}$, with $\mathbf{B}\in\mathbb{R}^{p\times p}$ and $\mathbf{C}\in\mathbb{R}^{N\times N}$. The term $\mathbf{T} \otimes (\phi(\x) \phi(\z)^\top)$ represents a separable kernel function over inputs and outputs, while $\mathbf{U}$ characterizes the entanglement shared between them. 
We note that $\mathbf{U}$ not being equal to  $\mathbf{B} \otimes \mathbf{C}$ marks the crucial difference to separable kernels; if $\mathbf{U}$ were  $\mathbf{B} \otimes \mathbf{C}$, then the class described above would be part of separable kernels (see Theorem~\ref{th:ptr_separable}).

Some intuition to $\mathbf{U}$ can be seen from its role of an \enquote{entangled} similarity in the joint feature space.
It is entangled in the sense that it cannot be decomposed into two \enquote{sub}-matrices of similarity between inputs and between outputs independently. The partial trace is the operation used to recover the sub-similarity matrix between the outputs from the entangled joint similarity matrix. In the particular case of separability, the partial trace will give the output metric.

\begin{theorem}\label{th:ptKernelsArePSD}  Entangled kernels given by the Definition \ref{de:entangled} are positive semi-definite kernels.
\end{theorem}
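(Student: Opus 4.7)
The statement asks for positive semi-definiteness of $K$ in~(\ref{eq:ek}), which amounts to two checks: the Hermitian condition $K(\x,\z)=K(\z,\x)^*$ and the nonnegativity condition on the kernel Gram matrix.

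The Hermitian part I would dispatch first. Since the partial trace commutes with taking adjoints, and since $\mathbf{T}=\mathbf{T}^*$ (as $\mathbf{T}\geq 0$) together with $(\phi(\z)\phi(\x)^*)^*=\phi(\x)\phi(\z)^*$ gives $(\mathbf{T}\otimes \phi(\z)\phi(\x)^*)^* = \mathbf{T}\otimes \phi(\x)\phi(\z)^*$, applying $^*$ to the operator inside the partial trace in $K(\z,\x)$ and exploiting $(\mathbf{U}A\mathbf{U}^*)^*=\mathbf{U}A^*\mathbf{U}^*$ yields $K(\z,\x)^* = K(\x,\z)$.

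For the main PSD condition, my plan is to reduce to the block criterion already noted in the paragraph following Definition~\ref{def:ptrKernel}: the partial trace preserves positive semi-definiteness~\citep{filipiak2018properties}, so it is enough to show the block operator $[\mathbf{P}_{\phi(\x_i),\phi(\x_j)}]_{i,j=1}^n$ is PSD whenever $\mathbf{P}_{\phi(\x),\phi(\z)}=\mathbf{U}(\mathbf{T}\otimes \phi(\x)\phi(\z)^*)\mathbf{U}^*$. I would first factor $\mathbf{U}$ out of the blocks,
\[
[\mathbf{U}(\mathbf{T}\otimes \phi(\x_i)\phi(\x_j)^*)\mathbf{U}^*]_{i,j=1}^n \;=\; (\mathbf{I}_n\otimes \mathbf{U})\,[\mathbf{T}\otimes \phi(\x_i)\phi(\x_j)^*]_{i,j=1}^n\,(\mathbf{I}_n\otimes \mathbf{U}^*),
\]
so that, conjugation preserving PSD-ness, it suffices to show the inner block operator is PSD. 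The block matrix $[\mathbf{T}\otimes \phi(\x_i)\phi(\x_j)^*]_{i,j}$ is unitarily equivalent, via the standard commutation unitary swapping the $n$-index and the $p$-index, to the clean Kronecker product $\mathbf{T}\otimes [\phi(\x_i)\phi(\x_j)^*]_{i,j=1}^n$. Setting $\mathbf{v}=\sum_i e_i\otimes \phi(\x_i)$, I would observe that $[\phi(\x_i)\phi(\x_j)^*]_{i,j}=\mathbf{v}\mathbf{v}^*$ is rank-one PSD, and since $\mathbf{T}\geq 0$, the Kronecker product of two PSD operators is PSD. Unitary equivalence then transfers this to the inner block operator, and conjugation by $\mathbf{I}_n\otimes \mathbf{U}$ gives PSD-ness of $[\mathbf{P}_{\phi(\x_i),\phi(\x_j)}]_{i,j}$. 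Finally, partial trace preserves positivity, yielding $[K(\x_i,\x_j)]_{i,j}\geq 0$.

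The only mildly subtle step is the unitary equivalence between the block ordering $[\mathbf{T}\otimes A_{ij}]_{i,j}$ and the Kronecker ordering $\mathbf{T}\otimes [A_{ij}]_{i,j}$; in finite dimensions this is just the permutation $(i,s,a)\mapsto (s,i,a)$ on the tensor indices and is entirely routine, but it is the one place where careful index bookkeeping is needed. Everything else is a chain of standard facts---Kronecker of PSD is PSD, conjugation preserves PSD, and the quoted preservation of PSD under partial trace---so no real obstacle should appear.
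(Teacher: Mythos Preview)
Your argument is correct and follows a genuinely different route from the paper's own proof. The paper does not pass through the block operator $[\mathbf{P}_{\phi(\x_i),\phi(\x_j)}]_{i,j}$; instead it computes $\sum_{i,j}\langle \mathbf{y}_i, K(\x_i,\x_j)\mathbf{y}_j\rangle$ directly. It introduces the operator $\mathbf{O}_{\phi(\x)}:\mathbf{y}\mapsto \mathbf{y}\otimes\phi(\x)$, observes $\mathbf{T}\otimes(\phi(\x)\phi(\z)^*)=\mathbf{O}_{\phi(\x)}\mathbf{T}\mathbf{O}_{\phi(\z)}^*$, uses the identity $\langle \mathbf{y},\mathrm{tr}_{\mathcal K}(A)\mathbf{y}'\rangle=\mathrm{tr}\big((\mathbf{y}^\top\!\otimes\mathbf{I})A(\mathbf{y}'\!\otimes\mathbf{I})\big)$, and collapses the double sum to $\mathrm{tr}(S\mathbf{T}S^*)\geq 0$ with $S=\sum_i(\mathbf{y}_i^\top\otimes\mathbf{I})\,\mathbf{U}\,\mathbf{O}_{\phi(\x_i)}$.

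What each approach buys: your structural argument leverages the sufficient condition stated after Definition~\ref{def:ptrKernel} and reduces everything to three standard facts (conjugation preserves PSD, the swap unitary, Kronecker of PSD is PSD), so it is clean and modular. The paper's computation avoids the block re-indexing and the swap-unitary bookkeeping entirely, yielding an explicit nonnegative quantity $\mathrm{tr}(S\mathbf{T}S^*)$ that also makes the role of $\mathbf{T}$ transparent; this form is handy because $\mathbf{O}_{\phi(\x)}$ reappears later in the Choi--Kraus discussion. One minor remark on your write-up: the swap unitary $\mathcal{Y}\otimes(\mathbb{R}^n\otimes\mathcal{K})\to \mathbb{R}^n\otimes(\mathcal{Y}\otimes\mathcal{K})$ exists for arbitrary separable Hilbert spaces, so your ``finite-dimensional'' caveat is not actually a restriction---you may want to state that explicitly rather than leave it as a loose end.
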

\begin{proof}
  	The proof is based on the observation that $\mathbf{T} \otimes (\phi(\x)  \phi(\x)^\top) = \mathbf{O}_{\phi(\x)}\mathbf{T}\mathbf{O}_{\phi(\x)}^*$, where $\mathbf{O}_{\phi(\x)}$ is the operator defined by
  	\begin{align*}
  		\mathbf{O}_{\phi(\x)} : \mathcal{Y} &\longrightarrow \mathcal{Y} \otimes \mathcal{K}\\
  		 \mathbf{y} &\longmapsto \mathbf{y} \otimes \phi(\x),
  	\end{align*}
  	and $\mathbf{O}_{\phi(\x)}^*$ is its adjoint defined by:
  	\begin{align*}
  	\mathbf{O}_{\phi(\x)}^* :  \mathcal{Y} \otimes \mathcal{K} &\longrightarrow \mathcal{Y} \\
  	\mathbf{y} \otimes u  &\longmapsto \langle \phi(\x), u \rangle \mathbf{y}.
  	\end{align*}
  	Indeed, $\forall \, \mathbf{y}\in\mathcal{Y}, u\in\mathcal{K}$, we have 
\begin{align*}
\big(\mathbf{T} \otimes (\phi(\x)  \phi(\z)^\top) \big)(\mathbf{y} \otimes u) &= \mathbf{Ty} \otimes \phi(\x)  \phi(\z)^\top u \\ 
&= \langle \phi(\z), u \rangle \mathbf{Ty} \otimes \phi(\x) \\ 
&= \mathbf{O}_{\phi(\x)}  \mathbf{T} \langle \phi(\z), u \rangle \mathbf{y}  \\
&= \mathbf{O}_{\phi(\x)}\mathbf{T}\mathbf{O}_{\phi(\z)}^* (\mathbf{y} \otimes u).
\end{align*}  
  	Now, to show that the entangled kernel defined by Eq.~\ref{eq:ek} is positive semi-definite, let us compute 
\begin{align*}
	\sum_{i,j} \langle
\mathbf{y}_i, K(\x_{i},\x_{j})\mathbf{y}_{j}\rangle_{\mathcal{Y}} &= \sum_{i,j} \left\langle
	\mathbf{y}_i, \mathrm{tr}_{\mathcal{K}}\left(\mathbf{U} \big(\mathbf{T} \otimes (\phi(\x_i)  \phi(\x_j)^\top)\big)\mathbf{U}^\top\right) \mathbf{y}_{j}\right\rangle_{\mathcal{Y}}  \\
	&=\sum_{i,j} \mathrm{tr} \left(\big(\mathbf{y}_i^\top \otimes \mathbf{I}_N\big) \mathbf{U} \big(\mathbf{T} \otimes (\phi(\x_i)  \phi(\x_j)^\top)\big)\mathbf{U}^\top\big(\mathbf{y}_j \otimes \mathbf{I}_N\big)\right) \quad \footnotemark  \\
	& = \sum_{i,j} \mathrm{tr} \left(\big(\mathbf{y}_i^\top \otimes \mathbf{I}_N\big) \mathbf{U} \big(O_{\phi(\x_i)}\mathbf{T}O_{\phi(\x_j)}^* \big)\mathbf{U}^\top\big(\mathbf{y}_j \otimes \mathbf{I}_N\big)\right)\\
	& =  \mathrm{tr}(S\mathbf{T}S^*) \geq 0,
\end{align*}   
as clearly $\mathrm{tr}(S\mathbf{T}S^*) =  \mathrm{tr}(S\mathbf{V}\mathbf{V}^\top S^*) = \mathrm{tr}((S\mathbf{V}(S\mathbf{V})^*) $ and trace preserves positivity. Here we have denoted $S = \sum_i \big(\mathbf{y}_i^\top \otimes \mathbf{I}_N\big) \mathbf{U} O_{\phi(\x_i)}$. 
Moreover, from the fact that $\mathrm{tr}_{\mathcal{K}}(\mathbf{A}^\top) = \mathrm{tr}_{\mathcal{K}}(\mathbf{A})^\top$, we immediately obtain that $K(\x, \z) = K(\z,\x)^*$, which completes the proof. \footnotetext{See~\citet[Lemma 2.11]{filipiak2018properties}.}
  
\end{proof}

While by definition entangled kernels cannot be separable, the following example shows that an entangled kernel can be transformable.
\begin{example}  (Entangled and transformable kernel)  \\[0.1cm]
An entangled kernel given by Definition~\ref{de:entangled} with symmetric and supersymmetric $\mathbf{U}$ (that is, all its $N\times N$ blocks are symmetric), $\mathbf{T} = \mathbbm{1}_p\mathbbm{1}_p^T \in\R^{p\times p}$ and with a scalar-valued kernel with finite-dimensional feature mapping, $\phi(\x)\in\R^N$, is a transformable kernel. 

This can be seen from the following calculation:
\begin{align*}
\mathbf{P}_{\phi(\x),\phi(\z)} &= \left[ \sum_{i,j=1}^p \mathbf{U}_{lj} \phi(\x)\phi(\z)^\top \mathbf{U}_{im}^\top \right]_{l,m=1}^p \\
&= \left[ \left( \sum_{j=1}^p \mathbf{U}_{lj} \phi(\x) \right) \left( \sum_{i=1}^p  \mathbf{U}_{im}\phi(\z) \right)^\top  \right]_{l,m=1}^p \\
&= \left[ \left( \sum_{j=1}^p \mathbf{U}_{lj} \phi(\x) \right) \left( \sum_{i=1}^p  \mathbf{U}_{mi}\phi(\z) \right)^\top  \right]_{l,m=1}^p.
\end{align*}
Now $\mathbf{S}_l(\x) =  \sum_{k=1}^p \mathbf{U}_{lk} \phi(\x)$; recall that transformable kernels can be obtained from partial trace kernels (\ref{eq:ptk}) with $[\mathbf{P}_{\widetilde{\phi}(\x),\widetilde{\phi}(\z)}]_{l,m=1}^p = \big(\widetilde{\phi} \circ S_l(\x)\big) \big(\widetilde{\phi} \circ S_m(\z)\big)^\top$. 
\end{example}

Choice of the matrix $\mathbf{U}$ is crucial to the class of entangled kernels. In the next section we develop an algorithm that learns  an entangled kernel from data.

\section{Entangled Kernel Learning}
\label{sec:ekl}

 In general,  there is no knowing whether input and output data are or are not entangled. In this sense, learning  the entangled $K$ in Eq.~\ref{eq:ek} by imposing that~$\mathbf{U}$ is inseparable can sometimes be restrictive. In our entangled kernel learning approach we do not impose any separability restriction, with the hope that our learning algorithm can automatically detect the lack or presence of entanglement.
Key to our method is a reformulation of the entangled kernel $K$~(Eq.~\ref{eq:ek}) via Choi-Kraus representation. 

\begin{theorem}\label{th:choiKraus}(Choi-Kraus representation~\citealp{choi1975completely,kraus1983states,rieffel2011quantum}) \\[0.1cm]
The map $	K(\x, \z) = \mathrm{tr}_{\mathcal{K}}\left(\mathbf{U} \big(\mathbf{T} \otimes (\phi(\x) \phi(\z)^\top)\big)\mathbf{U}^\top\right)$ can be generated by an operator sum representation  
\begin{equation}
\label{eq:ek-kraus}
K(\x, \z) =  \sum_{i=1}^r \mathbf{M}_i \phi(\x) \phi(\z)^\top \mathbf{M}_i^\top,
\end{equation}
where $r$ is called the \emph{Kraus rank} and $\mathbf{M}_i\in \mathbb{R}^{p\times N}$ are the \emph{Kraus operators}. 
\end{theorem}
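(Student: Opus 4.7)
The plan is to unfold the partial-trace expression directly, combining a PSD factorization of $\mathbf{T}$ with the blockwise-trace interpretation of $\mathrm{tr}_{\mathcal{K}}$ derived in Section~\ref{section:quantum}. The hard part is not conceptual but bookkeeping: keeping track of which index sits in $\mathcal{Y}$ and which in $\mathcal{K}$ under the Kronecker conventions used in the paper.

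First, since $\mathbf{T}\in\mathbb{R}^{p\times p}$ is positive semi-definite, I would factor $\mathbf{T} = \mathbf{V}\mathbf{V}^\top$ with $\mathbf{V}\in\mathbb{R}^{p\times r_T}$ (for instance via its eigendecomposition). The mixed-product identity for Kronecker products immediately yields
\[
\mathbf{T}\otimes \phi(\x)\phi(\z)^\top \;=\; (\mathbf{V}\otimes\phi(\x))(\mathbf{V}\otimes\phi(\z))^\top ,
\]
so that, setting $\mathbf{Z}(\x) := \mathbf{U}(\mathbf{V}\otimes\phi(\x)) \in \mathbb{R}^{pN\times r_T}$, the full argument of the partial trace becomes $\mathbf{Z}(\x)\mathbf{Z}(\z)^\top$. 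This already exhibits $K(\x,\z)$ as the partial trace of a product of two matrices depending separately on $\x$ and $\z$.

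Next, I would apply the blockwise identity $[\mathrm{tr}_{\mathcal{K}} \mathbf{A}]_{lm} = \mathrm{tr}(\mathbf{A}_{lm})$ from the excerpt. Partitioning $\mathbf{Z}(\x)$ vertically into $p$ blocks $\mathbf{Z}_l(\x)\in\mathbb{R}^{N\times r_T}$ with columns $\mathbf{z}_{l,\beta}(\x)$, the $(l,m)$-block of $\mathbf{Z}(\x)\mathbf{Z}(\z)^\top$ is $\mathbf{Z}_l(\x)\mathbf{Z}_m(\z)^\top$, whose trace equals $\sum_{\beta=1}^{r_T}\mathbf{z}_{l,\beta}(\x)^\top\mathbf{z}_{m,\beta}(\z)$. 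Expanding $\mathbf{U}$ in the $p\times p$ block form $(\mathbf{U}_{lj})$ with blocks in $\mathbb{R}^{N\times N}$ shows that $\mathbf{z}_{l,\beta}(\x) = \mathbf{A}_{l,\beta}\phi(\x)$ with $\mathbf{A}_{l,\beta} := \sum_j \mathbf{V}_{j\beta}\mathbf{U}_{lj}$, so every term is bilinear in $\phi(\x)$ and $\phi(\z)$.

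Finally, to bring this into Kraus form, I would decompose each $\mathbf{A}_{l,\beta}$ rowwise. Writing $\mathbf{a}_{l,\beta,\alpha}^\top$ for its $\alpha$-th row gives
\[
\mathbf{z}_{l,\beta}(\x)^\top\mathbf{z}_{m,\beta}(\z) \;=\; \sum_{\alpha=1}^{N}\bigl(\mathbf{a}_{l,\beta,\alpha}^\top\phi(\x)\bigr)\bigl(\mathbf{a}_{m,\beta,\alpha}^\top\phi(\z)\bigr) .
\]
For each pair $(\beta,\alpha)$ I would then assemble $\mathbf{M}_{\beta,\alpha}\in\mathbb{R}^{p\times N}$ by stacking the rows $\mathbf{a}_{l,\beta,\alpha}^\top$ for $l=1,\dots,p$; a direct check shows that the right-hand side above is exactly $[\mathbf{M}_{\beta,\alpha}\phi(\x)\phi(\z)^\top \mathbf{M}_{\beta,\alpha}^\top]_{lm}$. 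Re-indexing $(\beta,\alpha)$ as a single index $i$ gives the claimed representation with $r\le r_T\cdot N\le p N$. The only obstacle is careful Kronecker/block indexing; no deeper tool beyond the PSD factorization of $\mathbf{T}$ and the blockwise partial-trace formula is needed.
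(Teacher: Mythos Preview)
Your proof is correct and takes a genuinely more elementary route than the paper's. Both arguments begin by factoring the positive semi-definite matrix $\mathbf{T}$ into rank-one pieces, but then diverge. The paper packages each piece in Stinespring form, writing $K(\x,\z)=\sum_t \mathrm{tr}_{\mathcal{K}}\bigl(S_t\,\phi(\x)\phi(\z)^\top S_t^*\bigr)$ with $S_t=\mathbf{U}\tilde{O}_{\mathbf{v}_t}$, and then invokes the abstract equivalence between the Stinespring and Kraus representations of completely positive maps (Theorem~2.22 in~\citealp{watrous2018theory}) as a black box to conclude that Kraus operators exist. You instead unfold the partial trace blockwise and build the Kraus operators $\mathbf{M}_{\beta,\alpha}$ explicitly from the blocks $\mathbf{U}_{lj}$ and the columns of $\mathbf{V}$. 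Your approach buys more concreteness: an explicit formula for the $\mathbf{M}_i$ and the bound $r\le r_T\,N\le pN$ on the Kraus rank, neither of which the paper's proof delivers directly. The paper's approach, on the other hand, is shorter and carries over verbatim to the infinite-dimensional setting the authors allude to, since the Stinespring--Kraus equivalence they cite holds there, whereas your rowwise decomposition of $\mathbf{A}_{l,\beta}$ is tied to a finite orthonormal basis of $\mathcal{K}$.
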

\begin{proof}
	See Appendix~\ref{app:choiKraus}
\end{proof}
Using this formulation, entangled kernel learning consists of finding a (possibly low-rank when $r$ is small) decomposition of the kernel by learning the matrices $\mathbf{M}_i$, $i=1,\ldots,r$, 
where these matrices \enquote{merge} the matrices $\mathbf{T}$ and $\mathbf{U}$. 

Looking at the simpler class of separable kernels, they model the relations between tasks~(i.e., ouputs)  using only the output matrix~$\mathbf{T}$~of size $p\times p$. The matrix~$\mathbf{T}$ acts as a covariance matrix on the labels independently of the inputs and if connected to the existing deep neural network approaches for multi-task learning (i.e.  the task-decoder methods which learn a shared representation at a high-level layer followed by task-specific decoders, see, e.g.,~\citealp{meyerson2018beyond}), can then play the role of a decoder that predicts labels using the outputs of related tasks. 
Entangled kernels, however, capture task relatedness through the matrices $\mathbf{M}_i$, of size $p\times N$ modeling the relationships of tasks and features. 
A small value of $r$ is a (Kraus) low-rank assumption which reduces the number of parameters and promotes sharing information and knowledge between tasks. 
This would be more similar to the column-based approaches in deep multi-task learning, which consider multiple deep neural networks in parallel and share the layer parameters between these networks~\citep{meyerson2018beyond}.

It is not easy to see from the theorem how exactly $\mathbf{T}$, $\mathbf{U}$ and $\mathbf{M}_i$ interact. While the proof of the theorem (see Appendix~\ref{app:choiKraus}) gives the explicit relation between them, in order to make this more clear let us consider only one element of the output of the kernel. From the Choi-Kraus representation~(\ref{eq:ek-kraus}) we have
\[ [K(\x, \z)]_{s,t} = \sum_{i=1}^r \mathbf{M}_i[s,:] \phi(\x) \phi(\z)^\top (\mathbf{M}_i[t,:])^\top, \]
and on the other hand from the definition of the entangled kernels~(Definition~\ref{de:entangled}) we get that
\[[K(\x, \z)]_{s,t} = tr\left( \sum_{l=1}^p\sum_{k=1}^p \mathbf{U}_{sk} \mathbf{T}_{kl} \phi(\x) \phi(\z)^\top  \mathbf{U}_{lt}^\top \right) = \sum_{l,k=1}^p \mathbf{T}_{kl} \, \phi(\z)^\top \mathbf{U}_{lt}^\top \mathbf{U}_{sk}\phi(\x). \]

Here we have used notation $\mathbf{M}_i[s,:]$ to refer to the row $s$ of matrix $\mathbf{M}_i$, $\mathbf{T}_{ij}$ to refer to the element in position $(i,j)$ in $\mathbf{T}$, and $\mathbf{U}_{ij}$ similarly ordered to the block of size $N\times N$ in $\mathbf{U}$. 
From this we see that both the rows of $\mathbf{M}_i$ and rows of (blocks of) $\mathbf{U}$ act on transforming the features $\phi(\x)$. If we restrict the \enquote{rank} of the entangled kernel by restricting the $r$ in~(\ref{eq:ek-kraus}), we are in essence restricting the row space of (blocks of) $\mathbf{U}$.

It is important to note, that while every entangled kernel can be represented like this, the representation is not restricted only to entangled kernels. Thus by learning the $\mathbf{M}_i$ we expect to learn the meaningful relationships in the data, be they entangled or not. Yet, when learning even a low-rank entangled kernel, we are bound to learn more parameters in the $\mathbf{M}_i$ than we could learn from just (full-rank) $\mathbf{T}$, thus making the class of kernels we consider much more expressive.

To make this explicit, let us consider the separable kernels in the Choi-Kraus represenation framework. For the class of separable kernels $\mathbf{U}$ equals identity, and the $\mathbf{T}$ in~(\ref{eq:ek}) is the same as the $\mathbf{T}$ in Definition~\ref{def:separable_ovk}. We can describe the operator $\mathbf{T}$ with a set of vectors $\mathbf{t}_j \in \R^p$ for which $\mathbf{T} = \sum_j \mathbf{t}_j \mathbf{t}_j^\top$. 
\begin{remark}\label{rm:ck_sep} (Choi-Kraus representation of seprabale kernels)  \\[0.1cm]
For separable kernels, each $\mathbf{M}_i$ in the Choi-Kraus represenation (\ref{eq:ek-kraus}) can be described as $\mathbf{M}_i = \mathbf{M}_{jk} = \mathbf{t}_j \mathbf{e}_k^\top$,  with $j=1,...,p$, $k=1,...,N$ and $\{\mathbf{e}_k\}_{k}$ is an orthonormal basis of $\R^{N}$.
\end{remark}
This can be confirmed with the following calculation: 
	\begin{align*}
	K(\x, \z) &= \sum_{i=1}^r \mathbf{M}_i \phi(\x) \phi(\z)^\top \mathbf{M}_i^\top =
	\sum_{j=1}^p \sum_{k=1}^N  \mathbf{t}_j \mathbf{e}_k^\top    \phi(\x) \phi(\z)^\top   \mathbf{e}_k \mathbf{t}_j^\top \\
	&=  \sum_{k=1}^N  \mathbf{e}_k^\top    \phi(\x) \phi(\z)^\top   \mathbf{e}_k \sum_{j=1}^p   \mathbf{t}_j  \mathbf{t}_j^\top = \sum_{k=1}^N \langle  \phi(\x) \phi(\z)^\top   \mathbf{e}_k , \mathbf{e}_k \rangle \mathbf{T} \\
	&= \mathrm{tr}\left(\phi(\x) \phi(\z)^\top  \right) \, \mathbf{T} =  \langle  \phi(\x) , \phi(\z)\rangle \, \mathbf{T} =  k(\x, \z) \, \mathbf{T}. 
	\end{align*}
It is clear that the class of entangled kernels is much more expressive than the class of separable kernels. With this in mind, we now turn our attention to describing the framework in which we can learn the $\mathbf{M}_i$ in entangled kernels.

Because the feature space of the scalar-valued kernel can be of very large dimensionality~(or infinite-dimensional), we consider an approximation to speed up the computation. For example random Fourier features or Nyström approximation~\citep{rahimi2008random,williams2001using} give us $\hat{\phi}$ such that $$k(\x, \z) = \langle \phi(\x), \phi(\z) \rangle \approx\langle \hat{\phi}(\x), \hat{\phi}(\z) \rangle.$$
We note that our approximation is on scalar-valued kernels, not operator-valued, although there are methods for approximating them, too, directly~\citep{brault2016random,minh2016operator}.
Our approximated kernel is thus 
 \begin{equation}
\label{eq:Kapprox}
\hat{K}(\x, \z) =  \sum_{i=1}^r \hat{\mathbf{M}}_i \hat{\phi}(\x) \hat{\phi}(\z)^\top \hat{\mathbf{M}}_i^\top,
\end{equation}
where $\hat{\phi}(\x) \in \R^m$ and $\hat{\mathbf{M}}_i \in \R^{p\times m}$, from where our goal is to learn the $\hat{\mathbf{M}}_i$.
We can write our $np\times np$ kernel matrix as 
 \setcounter{footnote}{3}
  \addtocounter{footnote}{1}
\footnotetext{Matrix cookbook Eq.~(520): 
\url{http://www.math.uwaterloo.ca/~hwolkowi/matrixcookbook.pdf}.}
  \setcounter{footnote}{3}
\begin{align}
\hat{\mathbf{G}} &= \sum_{i=1}^r \vvec(\hat{\mathbf{M}}_i\hat{\bm{\Phi}})\vvec(\hat{\mathbf{M}}_i\hat{\bm{\Phi}})^\top  \label{eq:G}\\
&= \sum_{i=1}^r (\hat{\bm{\Phi}}^\top \otimes \mathbf{I}_p )\vvec(\hat{\mathbf{M}}_i)\vvec(\hat{\mathbf{M}}_i)^\top (\hat{\bm{\Phi}} \otimes \mathbf{I}_p )\;\;\;\footnotemark \notag
\end{align} 
where $\hat{\bm{\Phi}}= [\hat{\phi}(\x_1), \cdots, \hat{\phi}(\x_n)]$ is of size $m\times n$. 
Further, if we denote 
\begin{equation}
\D = \sum_{i=1}^r \vvec(\M_i)\vvec(\M_i)^\top,
\end{equation}
we can simply write
\begin{equation}\label{eq:eklKernel}
\hat{\mathbf{G}} = (\hat{\bm{\Phi}}^\top \otimes \mathbf{I}_p )\D (\hat{\bm{\Phi}} \otimes \mathbf{I}_p ).
\end{equation}
To learn an entangled kernel, we need to learn the psd matrix $\mathbf{D}$. 

It is good to note that also from this formulation of the learnable entangled kernel, we can easily recover the class of separable kernels. Indeed, as we already mentioned, the representation (\ref{eq:ek-kraus}) is not restricted to only entangled kernels, and thus it is possible to recover other frameworks also from (\ref{eq:eklKernel}). In this case, if we choose $\D=(\mathbf{I}_N \otimes \mathbf{T})$ with $\mathbf{T}\in\R^{p\times p}$ (note that requirement for $\D$ to be psd and symmetric also restricts $\mathbf{T}$ to be such), we can write \[\hat{\mathbf{G}} =(\hat{\bm{\Phi}}^\top \otimes \mathbf{I}_p ) (\mathbf{I}_N \otimes \mathbf{T}) (\hat{\bm{\Phi}} \otimes \mathbf{I}_p ) = (\hat{\bm{\Phi}}^\top\mathbf{I}_N \hat{\bm{\Phi}}  \otimes \mathbf{I}_p\mathbf{T}\mathbf{I}_p )  = \mathbf{K} \otimes \mathbf{T},\] and see that we have recovered a separable kernel. Notably, we can see that our proposed class of learnable kernels are much more expressive than that of separable kernels, as they can be recovered as a special case.

\subsection{The Learning Algorithm}

We will now describe how to learn the psd matrix $\mathbf{D}$ from the entangled kernel~(\ref{eq:eklKernel}), and how to efficiently formulate the vector-valued learning problem.

Kernel alignment \citep{cristianini2002kernel,cortes2012algorithms} is a measure of similarity between two (scalar-valued) kernels. 
Alignment between two matrices $\mathbf{M}$ and $\mathbf{N}$ is defined as \begin{equation}\label{def:alignment}
A(\mathbf{M}, \mathbf{N}) = \frac{\left\langle \mathbf{M}_c, \mathbf{N}_c \right\rangle_F}{\|\mathbf{M}_c\|_F\|\mathbf{N}_c\|_F},
\end{equation}
where subscript c refers to centered matrices, that is, $\mathbf{M}_c = \mathbf{HMH}$ where $\mathbf{H} = \mathbf{I}_n-\tfrac{1}{n}\bm{1}\bm{1}^\top$, if $\mathbf{M}$ is a $n \times n$ matrix. Here $0\leq A(\mathbf{M}, \mathbf{N}) \leq 1$, with higher value showing better alignment and higher similarity. 
Kernel alignment has been used in learning a kernel by considering the so-called ideal kernel as the target of the alignment. In the context of binary classification, the ideal kernel is defined as $\mathbf{yy}^\top$ where $\mathbf{y} =[y_1, ...,y_n]^\top$ is the $n$-vector of class labels~\citep{cristianini2002kernel}. 
The work of \citet{kandola2002extensions} extends the ideal kernel to regression and to unbalanced classification where there are more samples from one class than from the other.

We extend the concept of alignment into the case of multiple outputs.
As already mentioned, in the previous works the ideal kernel has been the linear kernel defined on output values, $\mathbf{yy}^\top$. Extended to multi-output setting, it is natural to consider the linear kernel $\mathbf{K}_Y = \mathbf{Y}^\top\mathbf{Y}$, where $\mathbf{Y}$ is of size $p\times n$, containing the labels associated to data sample $i$ on its $i$th column. 
Yet, if we wish to use this ideal kernel in learning our entangled kernel we face problems as we would be trying to align a $np\times np$ matrix with a $n\times n$ one. 
We thus propose as the first part of our optimization problem to align the partial trace of our entangled kernel matrix, $\mathrm{tr}_p(\hat{\mathbf{G}})$, to the output kernel $\mathbf{K}_Y$. 
As this term does not consider the full operator-valued kernel matrix, for the second term we consider $\mathbf{y} = \vvec(\mathbf{Y})$, a vectorization of the matrix containing the labels, and take matrix $\mathbf{K}_y = \mathbf{yy}^\top $ as the second ideal kernel in our setting. In this kernel each $p\times p$ block is an outer product between the labels associated with the samples, or $[\mathbf{K}_y]_{ij} = \mathbf{y}_i\mathbf{y}_j^\top$, and  we can directly align the $np \times np$  matrix $\mathbf{K}_y$ to the full entangled kernel $\hat{\mathbf{G}}$.
Our optimization problem is thus a convex combination 
\begin{equation}\label{eq:ekl_opt}
\max_\D \quad  (1-\gamma)\, A \left( \mathrm{tr}_p(\hat{\mathbf{G}}), \mathbf{Y^\top Y}\right) + \gamma \, A \left( \hat{\mathbf{G}}, \mathbf{y}\mathbf{y}^\top \right),
\end{equation} 
where $\gamma\in [0,1]$. We note that by applying Lemma 2.11 from \cite{filipiak2018properties}, we can write $\mathrm{tr}_p(\hat{\mathbf{G}}) = \hat{\bm{\Phi}}^\top  \mathrm{tr}_p(\D) \hat{\bm{\Phi}}$.

Intuitively the first alignment learns a scalar-valued kernel matrix that can be obtained via partial trace applied to the more complex operator-valued kernel, while the second term focuses on the possibly entangled relationships in the data. 
One possibility for using the entangled kernel framework is to learn a scalar-valued kernel  for multi-output problem using the partial-trace formulation and to use it in a kernel machine.

The optimization problem is solved with a gradient-based approach. To make sure that the resulting kernel is valid (psd),  we write $\D= \mathbf{Q}\mathbf{Q}^\top$ with $\mathbf{Q}$ of size $mp\times r$ with $r$ at most $mp$, and perform the optimization over $\mathbf{Q}$. The gradients for alignment terms are straightforward to calculate. The optimization is performed on sphere manifold as a way to normalize $\mathbf{D}$.\footnote{We used the toolbox from \url{pymanopt.github.io} for the implementation~\citep{JMLR:v17:16-177}.}

\begin{algorithm}[tb]
\caption{Entangled Kernel Learning (EKL)}\label{alg:ekl}
\begin{algorithmic}
\STATE {\textbf{Input:} matrix of features $\bf{\Phi}$, labels $\mathbf{Y}$}
\STATE {// 1) Kernel learning:  }
\STATE {Solve for $\mathbf{Q}$ in eq.\ref{eq:ekl_opt} ($\mathbf{D}=\mathbf{QQ}^\top$) within a sphere manifold}
\STATE {// 2) Learning the predictive function:}
\IF{Predict with scalar-valued kernel} 
\STATE {$\mathbf{c}_K = (\mathrm{tr}_p(\hat{\mathbf{G}}) +\lambda \mathbf{I})^{-1} \mathbf{Y}^\top $ \hfill $\mathcal{O}(m^3 + mnp)$  } \ELSE 
\STATE {$\mathbf{c}_G=(\hat{\mathbf{G}}+\lambda \mathbf{I})^{-1} \vvec(\mathbf{Y}) $ \hfill $\mathcal{O}(r^3 + mnp^2)$  } \ENDIF
\STATE {\textbf{Return} $\mathbf{D}=\mathbf{QQ}^\top$, $\mathbf{c}$}
\end{algorithmic}
\end{algorithm}
%

After we have learned the entangled kernel, we solve the learning problem by choosing the squared loss function
\begin{equation}\label{eq:EKL_opt1}
\min_{\mathbf{c}} \| \vvec(\mathbf{Y}) - \mathbf{\hat{G}c} \|^2 + \lambda \langle \mathbf{\hat{G}c}, \mathbf{c}\rangle.
\end{equation}
For this $\mathbf{c}$ update we can find the classical closed-form solution, $\mathbf{c} = (\hat{\mathbf{G}}+\lambda \mathbf{I})^{-1} \vvec(\mathbf{Y})$. 
Note that this computation is, by considering the entangled structure of $\hat{\mathbf{G}}$, more computationally efficient than a general (say, some transformable) operator-valued kernel. Generally we can say that the complexity of calculating the predictive function with nonseparable operator-valued kernels is $\mathcal{O}(n^3p^3)$. In our proposed network, however, we can apply the Woodbury formula for the matrix inversion and only invert a $r\times r$ matrix. 
We can assume that $m \ll n$ and often in multi-output data sets $p\ll n$. Furthermore as $r\leq mp$, we can see that $n$ dominates the complexity calculations. Thus we write the complexity of the $\mathbf{c}$-update as $\mathcal{O}(r^3+mnp^2)$, where we have also considered the most costly steps of the matrix multiplications involved.

Moreover it is possible, with our kernel learning framework, to learn a scalar-valued kernel by first learning the entangled kernel as proposed and then extracting the scalar-valued one by using partial trace operator. The scalar-valued kernel proposed to be used in predicting is now $\mathbf{K} = \mathrm{tr}_p(\hat{\mathbf{G}})$, and it can be used in regression setting as usual; calculating $\mathbf{c}_K = (\mathbf{K}+\lambda \mathbf{I})^{-1} \mathbf{Y}^\top $ and using that to obtain predictions. We note that even here our framework brings forward advancements in computational complexity; after having an entangled kernel the cost of calculating $\mathbf{c}_K$ using again Woodbury formula is centered around inverting a $m\times m $ matrix. 
Again when we assume $m\ll n$, and consider the most dominant term arising from the matrix multiplications, the complexity of the $\mathbf{c}_K$-step is $\mathcal{O}(m^3+mnp)$.

There are also gains in predictive complexity. Predicting with a general operator-valued kernel has the complexity $\OO(tnp^2)$. With an entangled kernel of our formulation, this reduces to $\OO((n+t+r)mp)$. For the ptrEKL kernel, assuming partial trace of $\mathbf{D}$ is already calculated, the predictive complexity is $\OO(tm^2+(n+t)mp)$ instead of $\OO(tnp)$, beneficial with small $m$. 
The complexities of calculating the parameters ($\mathbf{c}$) of the predictive function and predicting with various operator-valued kernels are summarized in Table~\ref{tb:complexities}. 
It is good to note that the complexities for separable kernels could be reduced by approximating the scalar-valued kernel matrix, as well as that it would be also possible to approximate a general operator-valued kernel matrix. 
While we show this approximation for separable kernels in Table~\ref{tb:complexities} for completeness, it is good to remember that these kind of approximations are not intrinsic to the approaches, unlike with our entangled kernels.
Additionally we have assumed in the complexities for general and separable kernels that the big operator-valued kernel matrix $\mathbf{G}$ and scalar-valued kernel matrix $\mathbf{K}$ as well as the output matrix $\mathbf{T}$ are already known at the time of the computations. Similarly, we have assumed that for entangled kernels the $\Phi$ and $\mathbf{Q}$ are already available at the time of these calculations.

\begin{table}
\centering
\begin{tabular}{lll}
\toprule
    & learning $\mathbf{c}$ & predicting $\mathbf{Y}$ \\
\midrule
No structure & $\OO(n^3p^3)$ & $\OO(tnp^2)$ \\
Separable & $\OO(n^3+p^3)$ & $\OO(tn p+np^2)$ \\
Low-rank separable & $\OO(m^3 + m^2n+p^3)$  & $\OO(tp^2+(n+t)pm)$ \\
Entangled & $\OO(r^3 + mnp^2)$ & $\OO((t+n+r)pm)$ \\
Entangled, ptr & $\OO(m^3 + mnp)$ & $\OO(tm^2+(n+t)pm)$ \\
\bottomrule
\end{tabular}
\caption{Complexities of calculating parameters of predictive function (vector $\mathbf{c}$), and predicting labels (calculating $\mathbf{G}_t\mathbf{c}$ with $\mathbf{G}_t$ of size $tp\times np$) with various operator-valued kernels. 
The \enquote{low-rank separable} refers to the case when the scalar-valued kernel matrix of the separable kernel has a low-rank approximation, i.e. $\mathbf{G} = \mathbf{K\otimes T} = \mathbf{UU^\top\otimes T}$ with some $\mathbf{U}$ of size $n\times m$ with $m<n$.
} \label{tb:complexities}
\end{table}

\subsection{Rademacher Generalization  Bound}

We now provide a generalization analysis of our EKL algorithm using Rademacher complexities~\citep{bartlett2002rademacher}. The notion of Rademacher complexity has been generalized to vector-valued hypothesis spaces~\citep{maurer2006rademacher,sindhwani2013scalable,sangnier2016joint}. 
Previous work has analyzed the case where the matrix-valued kernel is fixed prior to learning, while our analysis considers the kernel learning problem, similarly to the bound in~\citet{huusari2018mvml}. It provides a Rademacher bound for our algorithm when both the vector-valued function $f$ and the kernel via $\mathbf{Q}$, are learnt. 
We start by recalling that the feature map associated to the matrix-valued kernel $K$ is the mapping $\Gamma: \mathcal{X} \to \mathcal{L}(\mathcal{Y}, \mathcal{H}_K)$, where $\mathcal{X}$ is the input space, $\mathcal{Y}=\mathbb{R}^p$, and $\mathcal{L}(\mathcal{Y}, \mathcal{H}_K)$ is the set of bounded linear operators from $\mathcal{Y}$ to $\mathcal{H}_K$~(see, e.g., ~\citealp{Micchelli2005onlearning,Carmeli2010vector} for more details). It is known that $K(\x, \z) = \Gamma(\x)^*\Gamma(\z)$. We denote by $\Gamma_\mathbf{Q}$ the feature map associated to our entangled kernel~(Equation~\ref{eq:eklKernel}).

The hypothesis class of EKL is 
\begin{equation*}
\mathcal{H}_\beta = \{ x\mapsto f_{u,\mathbf{Q}}(\x) = \Gamma_\mathbf{Q}(\x)^* u : \mathbf{Q}\in \Delta, \|u\|_{\mathcal{H}} \leq \beta\},
\end{equation*}
with $ \Delta = \{\mathbf{Q}: \,\|\mathbf{Q}\|_F = 1 \}$ and $\beta$ is a regularization parameter. 
Let $\boldsymbol{\sigma}_1,\ldots,\boldsymbol{\sigma}_v$ be an iid family of vectors of independent Rademacher variables where $\boldsymbol{\sigma}_i\in \mathbb{R}^p, \; \forall\, i=1,\ldots,n$. The empirical Rademacher complexity of the vector-valued class $\mathcal{H}_\beta$ is the function $\hat{\mathcal{R}}_n( \mathcal{H}_\beta)$ defined as
\begin{equation*}
\hat{\mathcal{R}}_n( \mathcal{H}_\beta) = \frac{1}{n} \E\left[\sup_{f\in\mathcal{H}} \sup_{\mathbf{Q}\in \Delta}\sum_{i=1}^n \boldsymbol{\sigma}_i^\top f_{u,\mathbf{Q}}(\x_i)\right].
\end{equation*}

\begin{theorem}\label{th:EKL_Rademacher} (Rademacher complexity bound for EKL)  \\ [0.1cm]
The empirical Rademacher complexity of  $\mathcal{H}_\beta$ can be upper bounded as 
\begin{equation*}
\hat{\mathcal{R}}_n( \mathcal{H}_\beta) \leq \beta \sqrt{\frac{\kappa p}{n}}
\end{equation*}
for kernels $k$ that satisfy $k(\x,\x) \leq \kappa, \forall \x\in \mathcal{X}$.
\end{theorem}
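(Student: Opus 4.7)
The plan is to peel off the two suprema --- over $u$ in the RKHS and over $\mathbf{Q}$ on the sphere $\Delta$ --- one at a time, and reduce everything to a standard Rademacher expectation of a scalar quadratic form. First I will use $f_{u,\mathbf{Q}}(\x_i)=\Gamma_\mathbf{Q}(\x_i)^{*}u$ to rewrite the inner sum as an RKHS inner product $\langle u,\ \sum_i \Gamma_\mathbf{Q}(\x_i)\boldsymbol{\sigma}_i\rangle_{\mathcal{H}}$. Cauchy--Schwarz together with $\|u\|_\mathcal{H}\le\beta$ will kill the supremum over $u$ at the price of a factor $\beta$ and leave $\|\sum_i \Gamma_\mathbf{Q}(\x_i)\boldsymbol{\sigma}_i\|_{\mathcal{H}}$ inside the expectation.

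Next I will pass to the square via Jensen's inequality on the concave square root so that expectation and supremum can be moved through the norm:
\[
\E\Big[\sup_{\mathbf{Q}\in\Delta}\Big\|\sum_i \Gamma_\mathbf{Q}(\x_i)\boldsymbol{\sigma}_i\Big\|_\mathcal{H}\Big]\ \le\ \sqrt{\E\Big[\sup_{\mathbf{Q}\in\Delta}\sum_{i,j}\boldsymbol{\sigma}_i^\top K_\mathbf{Q}(\x_i,\x_j)\boldsymbol{\sigma}_j\Big]}.
\]
At this point the entangled-kernel formulation~(\ref{eq:eklKernel}) should make the supremum tractable: the quadratic form equals $\|\mathbf{Q}^\top(\hat{\bm{\Phi}}\otimes \mathbf{I}_p)\boldsymbol{\sigma}\|^2$ for the stacked Rademacher vector $\boldsymbol{\sigma}=(\boldsymbol{\sigma}_1^\top,\ldots,\boldsymbol{\sigma}_n^\top)^\top$, and since $\mathbf{Q}\in\Delta$ implies $\|\mathbf{Q}\|_{\mathrm{op}}\le\|\mathbf{Q}\|_F=1$, it is dominated uniformly by $\|(\hat{\bm{\Phi}}\otimes \mathbf{I}_p)\boldsymbol{\sigma}\|^2$ and the supremum evaporates.

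The Rademacher expectation of what remains will then be elementary: expanding
\[
\big\|(\hat{\bm{\Phi}}\otimes \mathbf{I}_p)\boldsymbol{\sigma}\big\|^2 = \sum_{i,j=1}^n \hat{k}(\x_i,\x_j)\,\boldsymbol{\sigma}_i^\top\boldsymbol{\sigma}_j,
\]
the off-diagonal terms vanish by independence of the $\boldsymbol{\sigma}_i$ and each diagonal contributes $\E\|\boldsymbol{\sigma}_i\|^2=p$, yielding $p\sum_i \hat{k}(\x_i,\x_i)\le np\kappa$ under the hypothesis $k(\x,\x)\le\kappa$. Reassembling the factors gives $\hat{\mathcal{R}}_n(\mathcal{H}_\beta)\le\tfrac{\beta}{n}\sqrt{np\kappa}=\beta\sqrt{\kappa p/n}$; the factor $p$ in the bound traces back precisely to $\E\|\boldsymbol{\sigma}_i\|^2=p$.

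I expect the only delicate step to be the $\mathbf{Q}$-supremum. It will collapse cleanly here because the learnable class has the specific bilinear form $\mathbf{Q}\mathbf{Q}^\top$ sandwiched between fixed Kronecker factors, so the elementary bound $\|\mathbf{Q}\|_{\mathrm{op}}\le\|\mathbf{Q}\|_F$ removes $\mathbf{Q}$ uniformly before the Rademacher expectation is computed. For less structured learnable kernel families one would typically need a covering or chaining argument over $\Delta$ --- it is precisely the algebraic form of entangled kernels that lets us sidestep this, so the final bound inherits the scale of the standard fixed-kernel Rademacher bound apart from the extra $\sqrt{p}$ coming from the vector-valued Rademacher variables.
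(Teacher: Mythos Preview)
Your proposal is correct and follows the same overall skeleton as the paper's proof: Cauchy--Schwarz to eliminate $u$, rewrite the squared norm as the quadratic form $\boldsymbol{\sigma}^\top\hat{\mathbf{G}}\boldsymbol{\sigma}$ via the entangled-kernel formula~(\ref{eq:eklKernel}), remove the supremum over $\mathbf{Q}$ using the constraint $\|\mathbf{Q}\|_F=1$, and finish with Jensen and $\E[\boldsymbol{\sigma}\boldsymbol{\sigma}^\top]=\mathbf{I}_{np}$.

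The one tactical difference is in how the $\mathbf{Q}$-supremum is dispatched. The paper applies the trace Cauchy--Schwarz inequality $\mathrm{tr}(AB)\le\mathrm{tr}(A^2)^{1/2}\mathrm{tr}(B^2)^{1/2}$ with $B=\mathbf{QQ}^\top$, bounding $\mathrm{tr}((\mathbf{QQ}^\top)^2)\le\|\mathbf{Q}\|_F^4=1$, and then (implicitly using that $M=[\hat{\bm{\Phi}}\otimes\mathbf{I}_p]\boldsymbol{\sigma}\boldsymbol{\sigma}^\top[\hat{\bm{\Phi}}^\top\otimes\mathbf{I}_p]$ is rank one) collapses $\mathrm{tr}(M^2)^{1/4}$ back to $\mathrm{tr}(M)^{1/2}$; Jensen is deferred to the very last step. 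You instead take the more direct route $\|\mathbf{Q}^\top v\|\le\|\mathbf{Q}\|_{\mathrm{op}}\|v\|\le\|\mathbf{Q}\|_F\|v\|=\|v\|$, which eliminates $\mathbf{Q}$ in one line and avoids the rank-one observation entirely; and you apply Jensen immediately after the $u$-step (this is valid since $\sup_{\mathbf{Q}}\sqrt{A_\mathbf{Q}}=\sqrt{\sup_{\mathbf{Q}}A_\mathbf{Q}}$ for nonnegative $A_\mathbf{Q}$). Both routes land on exactly the same quantity $\boldsymbol{\sigma}^\top(\hat{\mathbf{K}}\otimes\mathbf{I}_p)\boldsymbol{\sigma}$ before taking the Rademacher expectation, so the final bound is identical; your version is arguably a bit more elementary.
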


The proof for the theorem can be found in Appendix~\ref{app:Rademacher}. 
We now make use of this result to bound the generalization error of EKL for the case where the second stage of the algorithm is kernel ridge regression (Equation~\ref{eq:EKL_opt1}). Similar results can be given  using other algorithms such as SVMs in the second stage.
\begin{corollary}\label{th:EKL_Generalization} (Generalization bound for EKL) \\[0.1cm]
	Let $M>0$. Assume that $\|f(\x)-\mathbf{y}\|_{\mathcal{Y}} \leq M$ for all $(\x,\mathbf{y})\in \mathcal{X} \times \mathcal{Y}$ and $f\in\mathcal{H}_\beta$. Then, the following holds with probability larger than $1-\delta$ over samples of size $n$ for all $f\in\mathcal{H}_\beta$:
	\begin{equation}
	\label{eq:EKL_Generalization}
	R(f) \leq \hat{R}(f) + 4 \sqrt{2} M \sqrt{\frac{\beta^2 \kappa p}{n}} + 3M  \sqrt{\frac{\log\frac{2}{\delta}}{2n}},
	\end{equation}
	where $R(f):= \mathbb{E}_{(\x,\mathbf{y})}\big[\|f(\x)-\mathbf{y}\|_{\mathcal{Y}}^2\big]$ is the expected risk w.r.t. the square loss and $\hat{R}(f) := \frac{1}{n}\sum_{i=1}^n \|f(\x_i)-\mathbf{y}_i\|_{\mathcal{Y}}^2$ is the empirical risk of $f$.
\end{corollary}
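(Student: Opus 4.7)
My plan is to derive the corollary from Theorem~\ref{th:EKL_Rademacher} by combining a standard Rademacher-based generalization inequality with a contraction argument tailored to vector-valued outputs and the square loss.

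First, I would fix the loss class $\ell \circ \mathcal{H}_\beta = \{(\x,\mathbf{y})\mapsto \|f(\x)-\mathbf{y}\|_{\mathcal{Y}}^2 : f\in \mathcal{H}_\beta\}$, which by assumption is uniformly bounded. Applying the standard symmetrization / McDiarmid argument to this scalar-valued loss class yields, with probability at least $1-\delta$ over an iid sample of size $n$,
\begin{equation*}
R(f) \leq \hat{R}(f) + 2\,\hat{\mathcal{R}}_n(\ell \circ \mathcal{H}_\beta) + 3B\sqrt{\frac{\log(2/\delta)}{2n}},
\end{equation*}
for every $f\in\mathcal{H}_\beta$, where $B$ is the almost-sure bound on the loss. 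Here one uses the hypothesis $\|f(\x)-\mathbf{y}\|_{\mathcal{Y}} \leq M$ both to control $B$ and to ensure a bounded-differences condition for McDiarmid's inequality, producing the $3M\sqrt{\log(2/\delta)/(2n)}$ term in the final statement.

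Second, I would pass from the Rademacher complexity of the loss class to that of the hypothesis class via a vector-valued contraction principle (Maurer's contraction lemma for vector-valued classes). On the ball $\{u\in \mathcal{Y} : \|u\|\leq M\}$, the squared Euclidean loss $u\mapsto \|u\|^2$ is $2M$-Lipschitz with respect to $\|\cdot\|_{\mathcal{Y}}$. The vector contraction inequality then produces an additional factor $\sqrt{2}$, so that
\begin{equation*}
\hat{\mathcal{R}}_n(\ell \circ \mathcal{H}_\beta) \leq \sqrt{2}\cdot 2M\, \hat{\mathcal{R}}_n(\mathcal{H}_\beta) = 2\sqrt{2}\, M\, \hat{\mathcal{R}}_n(\mathcal{H}_\beta).
\end{equation*}
Multiplying by the factor $2$ from the symmetrization step accounts for the constant $4\sqrt{2}$ appearing in front of the complexity term.

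Finally, I plug in the bound of Theorem~\ref{th:EKL_Rademacher}, namely $\hat{\mathcal{R}}_n(\mathcal{H}_\beta) \leq \beta \sqrt{\kappa p / n}$, which immediately yields the announced $4\sqrt{2} M \sqrt{\beta^2 \kappa p / n}$ term, completing the proof. The main subtlety — and the only step beyond a routine chaining of classical tools — is the vector-valued contraction, since the loss depends jointly on the $p$ coordinates of $f(\x)$; using the scalar Talagrand contraction coordinate-wise would lose a factor of $\sqrt{p}$, so one must appeal to Maurer's refined version to obtain the $\sqrt{p}$ dependence already encoded in the Rademacher bound rather than a $p$ dependence.
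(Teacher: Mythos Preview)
Your proposal is correct and follows essentially the same route as the paper: apply the standard Rademacher generalization bound (Bartlett--Mendelson / Mohri et al., Theorem~3.3) to the loss class, use Maurer's vector-valued contraction inequality with the $2M$-Lipschitz constant of the squared loss on the $M$-ball to get $\hat{\mathcal{R}}_n(\ell\circ\mathcal{H}_\beta)\le 2\sqrt{2}\,M\,\hat{\mathcal{R}}_n(\mathcal{H}_\beta)$, and then plug in Theorem~\ref{th:EKL_Rademacher}. Your remark about needing the vector contraction rather than coordinate-wise Talagrand is exactly the point.
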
	
The proof of  Corollary~\ref{th:EKL_Generalization}, which is deferred to Appendix~\ref{app:Generalization}, is based on a general Rademacher complexity learning bound provided in~\citet{bartlett2002rademacher} (see also~\citealt[Theorem~3.3]{mohri2018foundations}) and a vector-contraction inequality for the Rademacher complexity of classes of vector-valued functions proved in~\citet{maurer2016vector}.

\section{Experiments}
\label{sec:xp}

In this section we investigate the performance of our algorithm with real and artificial data sets. We start with an illustration of its behaviour, and move on to experiments on the predictive performance.
In this latter setting, we compare our proposed Entangled Kernel Learning~(EKL)\footnote{Code is available at RH's personal website, \url{riikkahuusari.com}.} algorithm to OKL~\citep{dinuzzo2011learning}; a kernel learning method for separable kernels (we use the code provided by the authors~\footnote{\url{http://people.tuebingen.mpg.de/fdinuzzo/okl.html}}), and KRR; kernel ridge regression. Furhtermore, we investigate performance of predicting with scalar-valued kernel extracted from the operator-valued kernel matrix EKL learns, and call this ptrEKL. In all the experiments we cross-validate over various regularization parameters $\lambda$, and for EKL also the $\gamma$s controlling the combination of alignments. In the experiments we consider (normalized) mean squared error (nMSE) and normalized improvement to KRR (nI) \citep{Ciliberto2015convex} as error measures. 
Finally we conclude the experimental section by comparing the performance of learning and predicting with various operator-valued kernels, showing the advantage of entangled kernels.

\subsection{Illustration}

We consider the digits data set available at scikit-learn~\footnote{\url{https://scikit-learn.org/stable/modules/generated/sklearn.datasets.load_digits.html}}~\citep{scikit-learn} and make it into multi-task data set by encoding class-memberships into vectors consisting of values ${-1, +1}$. We consider only the first four classes (digits 0-3) for clarity of illustration; in this case for example label vector $\begin{bmatrix} -1 & -1 & +1 &-1 \end{bmatrix}$ stands for digit 2. We took 25 data samples from each class as training samples, and further 25 from each for test data samples.

\begin{figure}

\centering

\includegraphics[width=0.8\linewidth]{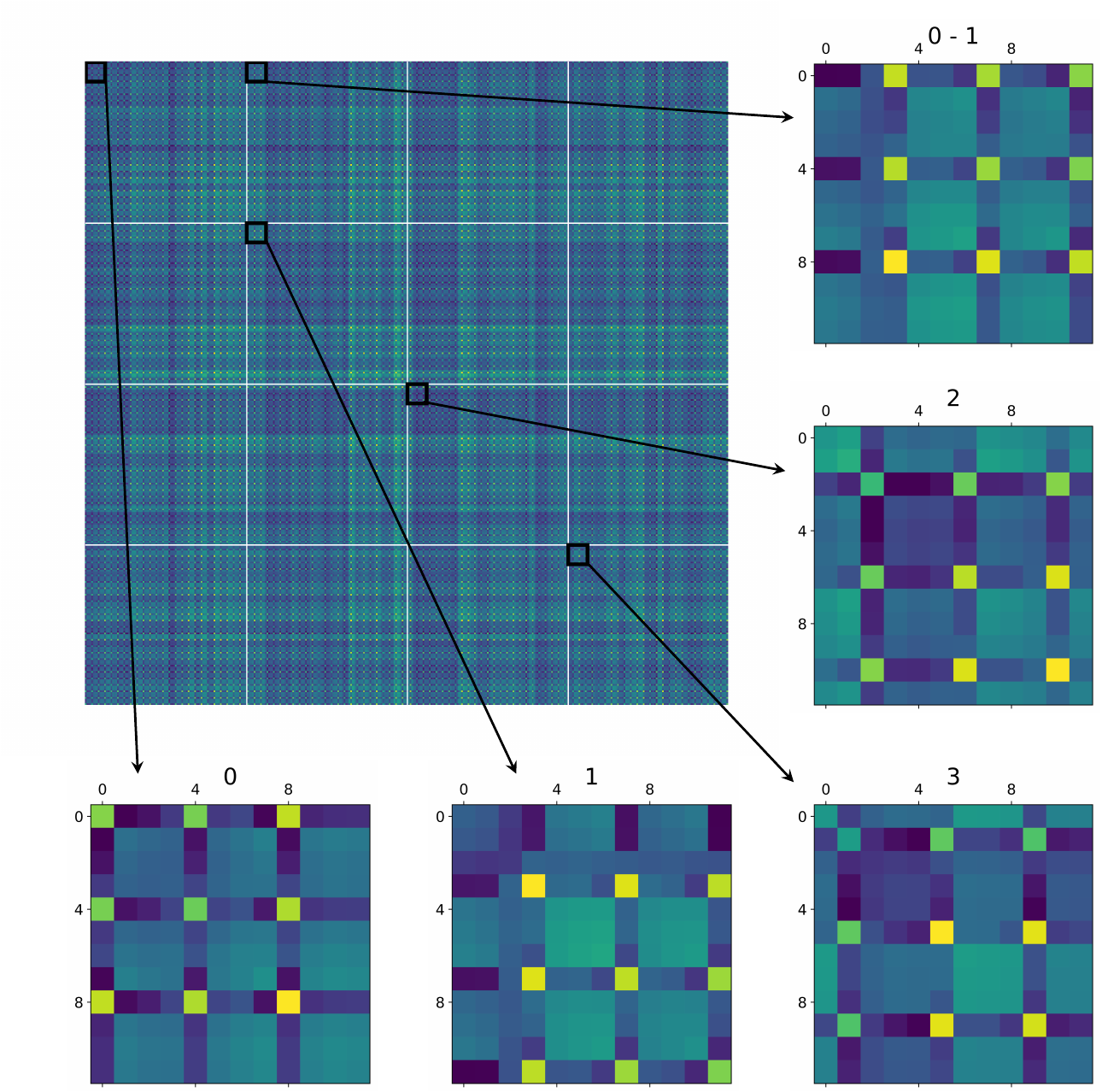}

 \caption{Small selections (bottom and right images) of the $np\times np$ entangled kernel matrix $\mathbf{G}$ (top left) trained on digits data set. The data in big matrix is ordered so, that all samples of class 0 come first, then 1, 2 and finally of 3, and for clarity of illustration white lines separate the classes. Each small selection contains 3 samples of the class(es) indicated in the title of the subfigure. For this data the output of the kernel function is a $4\times 4$ matrix. The structure of the output matrix is extremely dependent on the inputs. The off-diagonal $p\times p$ blocks of the kernel matrix are not symmetrical unlike with separable kernels.}
    \label{fig:digits_kernels}
\end{figure}

We trained EKL by restricting the number of columns in matrix $\mathbf{Q}$, $r$, to two. This drastic reduction of learnable parameters is not expected to produce the best accuracy~(as shown in the later experiments), but instead to allow us to visualize and illustrate a part on how EKL works.

In this setting the learned EKL matrix is not separable, according to the PPT condition~(end of Section~\ref{section:quantum}). Figure~\ref{fig:digits_kernels} shows parts of the entangled kernel matrix, so that in each part the samples making it up belong to one class of digits. We can see that for each class the task relationships are modelled differently. This is vastly different from the OKL approach, where the $p\times p$ output matrix has the same structure for all the data samples, as we illustrated in Figure~\ref{fig:separableIllustration1}. In our case the output matrix structure is extremely dependent on the inputs.

\begin{figure}[tb]
\centering
    \includegraphics[width=0.68\linewidth]{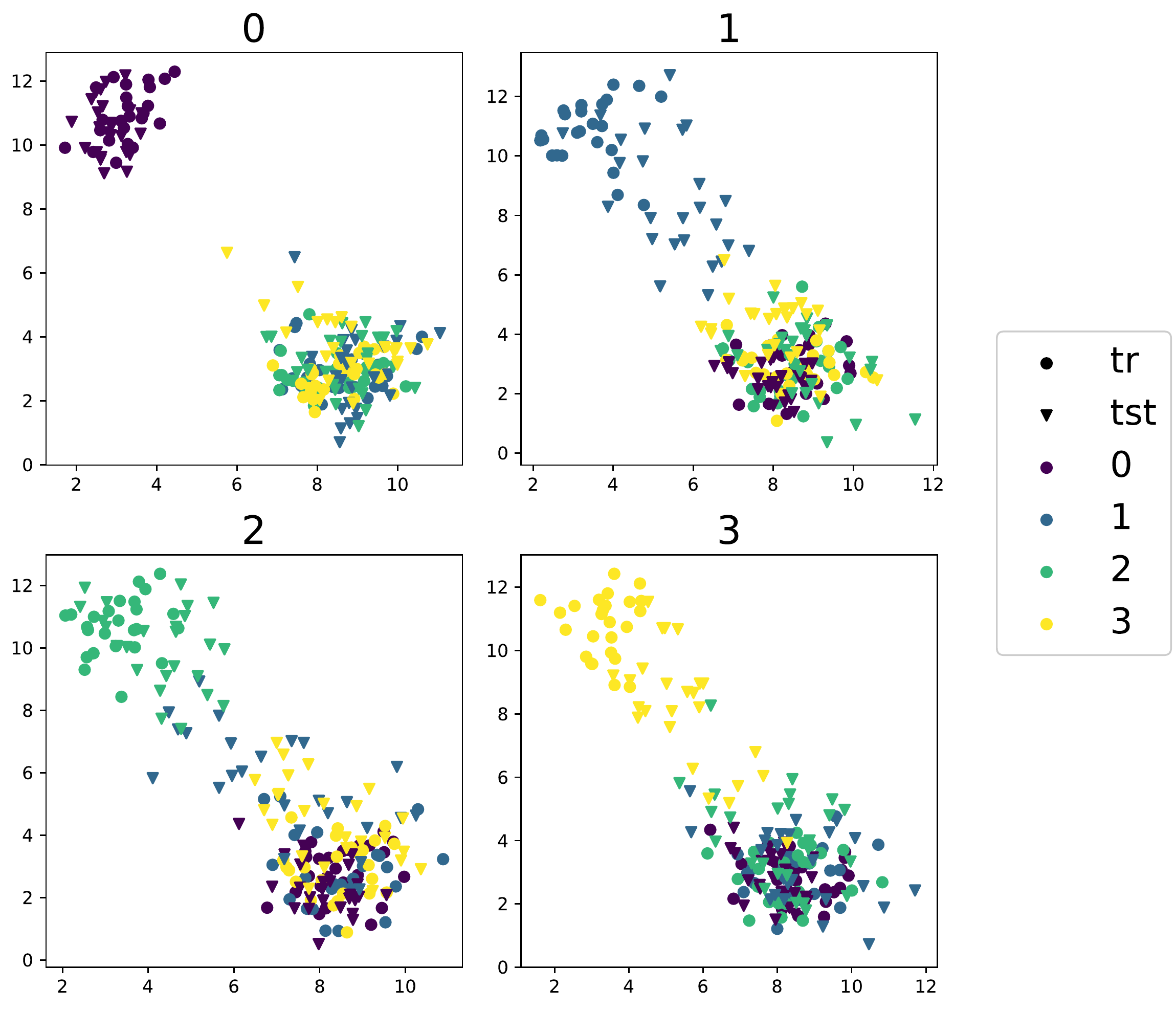}
    \caption{Supervised dimensionality reduction results on two dimensions for each of the four tasks in digits data sets, task in question indicated by the subfigure title. The classes of the projected data samples are indicated with colours, and the shape of the blob indicates whether the sample was used in training stage (round) or projected as test sample (triangle).}
    \label{fig:digits_clusters}
\end{figure}

Another way to see this dependency is via an application to supervised multi-task dimensionality reduction. We note that $\mathbf{Z} = (\hat{\bm{\Phi}}^\top \otimes \mathbf{I}_p )\mathbf{Q}$, is a matrix of size $np\times r$ and acts as an approximated feature map of the operator-valued kernel matrix, in the sense that the full kernel is $\mathbf{ZZ}^\top$. We can interpret the $\mathbf{Z}$ to give dimensionality reduction of the data with respect to all the outputs individually, as in we have $p$ dimensionality reductions of size $n\times r$. 
As a way to project data to lower dimensions, our illustrative approach closely resembles supervised dimensionality reduction~\citep{fukumizu2004dimensionality,sugiyama2006local}. 
We note that the focus of our work is not in this specific task, nor is our framework directly applicable to those works. We consider multi-output learning, making the application of EKL to this problem the first in supervised multi-task dimensionality reduction, as far as we know.

Figure~\ref{fig:digits_clusters} shows the results of the low-dimensional projection for each of the tasks. The supervised dimensionality reduction is successful for all the tasks: we can see that each task is modelled individually, as the samples corresponding to task under question are projected to a separate cluster from the other samples.
An advantage of our approach is that our model allows calculating new reduced features for new data samples; given $\hat{\bm{\Phi}}_t$ a matrix of features of new samples, the \enquote{predicted reduced features} are now simply $\mathbf{Z}_t = (\hat{\bm{\Phi}}_t^\top \otimes \mathbf{I}_p )\mathbf{Q}$. These projections are for the most part very accurate as shown in Figure~\ref{fig:digits_clusters}, respecting the task clusters especially for the easiest classes.

\subsection{Performance of EKL}

We now turn to consider the predictive performance of EKL. We first show experiments on artificial data, before turning to real-world data sets.

\subsubsection{Artificial data}

\begin{figure}[tb]
\centering
    \includegraphics[width=0.48\linewidth]{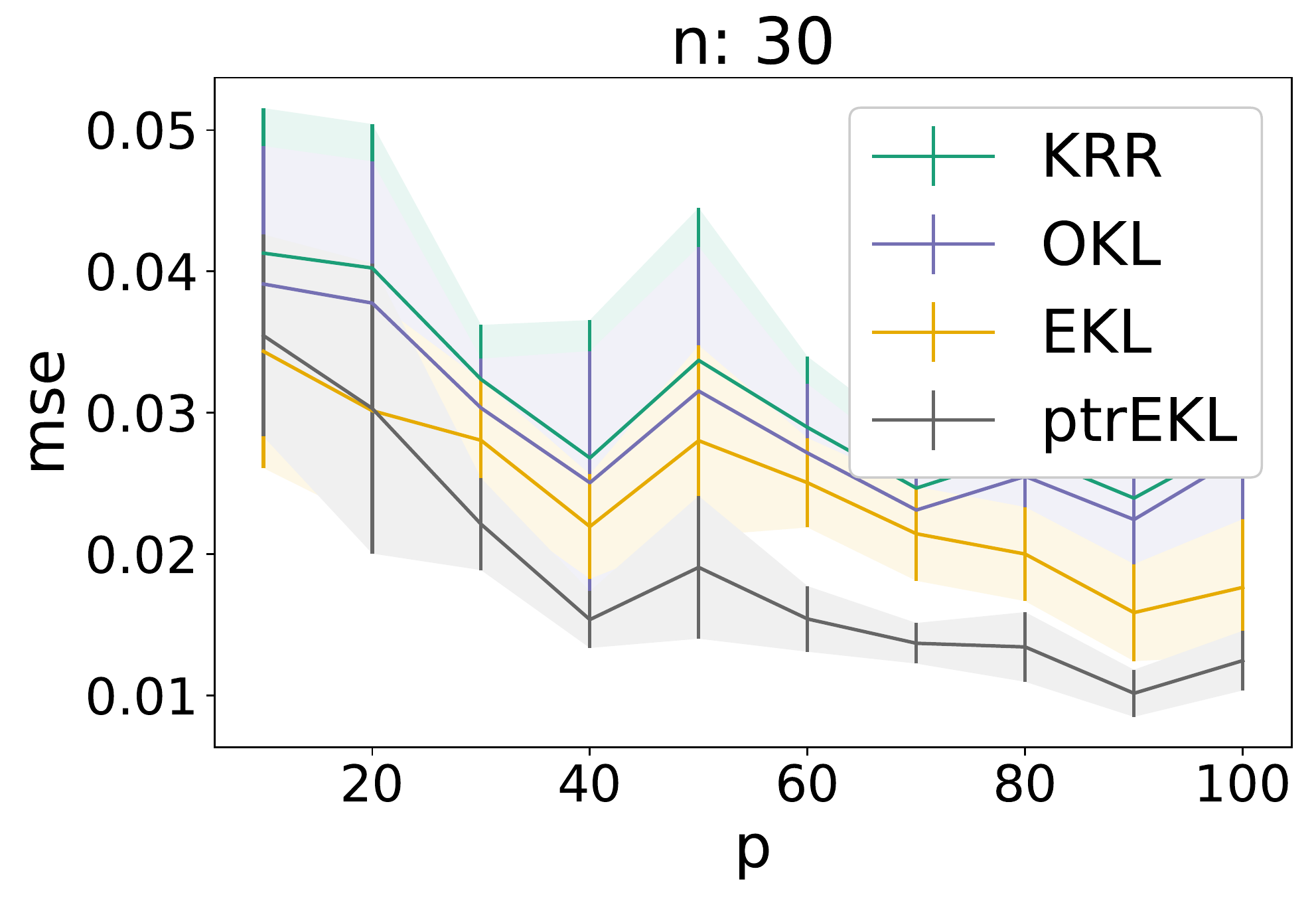}
    \includegraphics[width=0.48\linewidth]{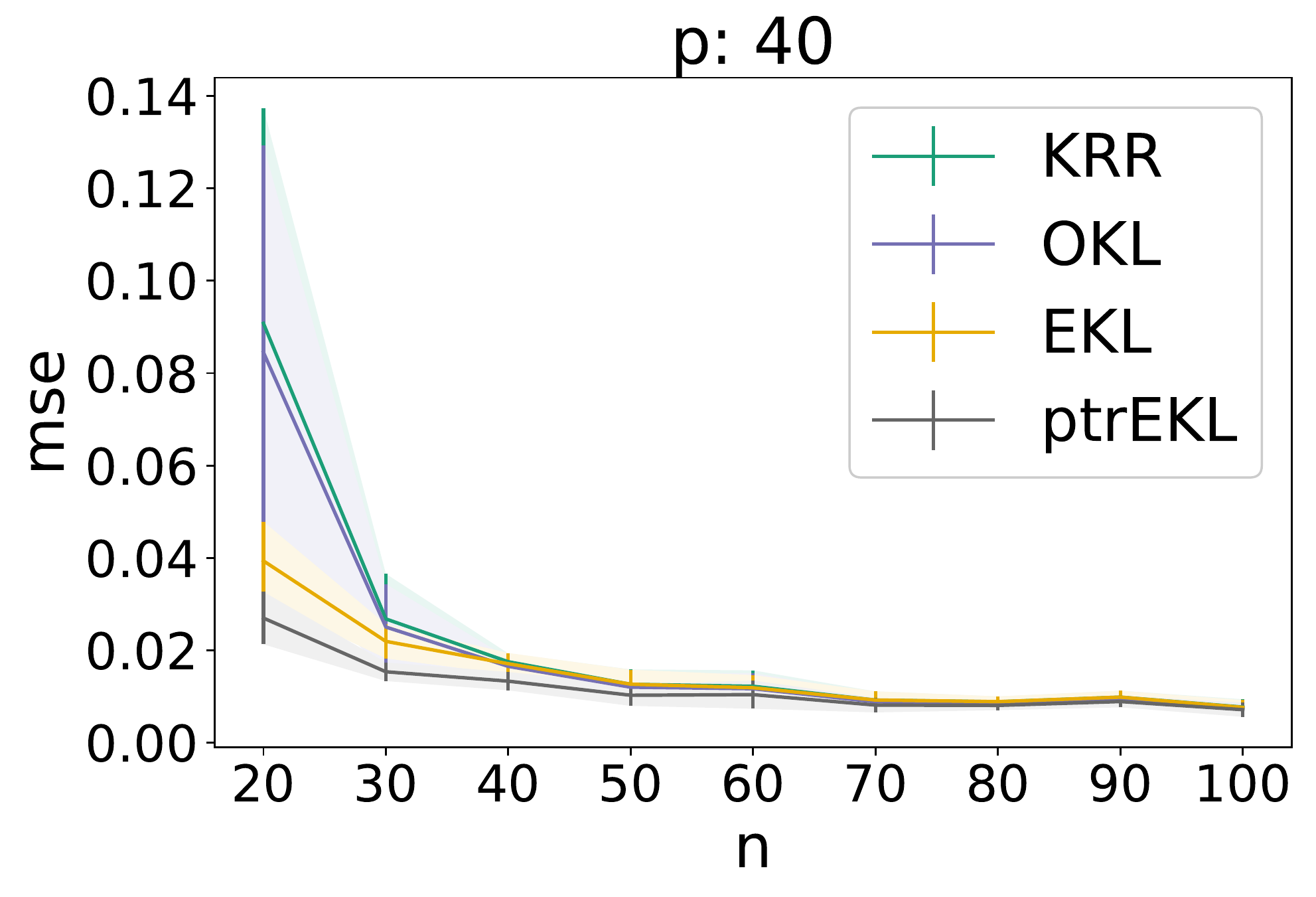}
    \caption{Results (mean squared error) of the simulated experiments with fixed amount of inputs and varying number of outputs (top), and fixed amount of outputs and varying inputs (bottom).The advantage of learning complex relationships is the biggest with small $n$.}
    \label{fig:ax_xb_c}
\end{figure}

\begin{figure}[tb]
\centering
    \includegraphics[width=0.48\linewidth]{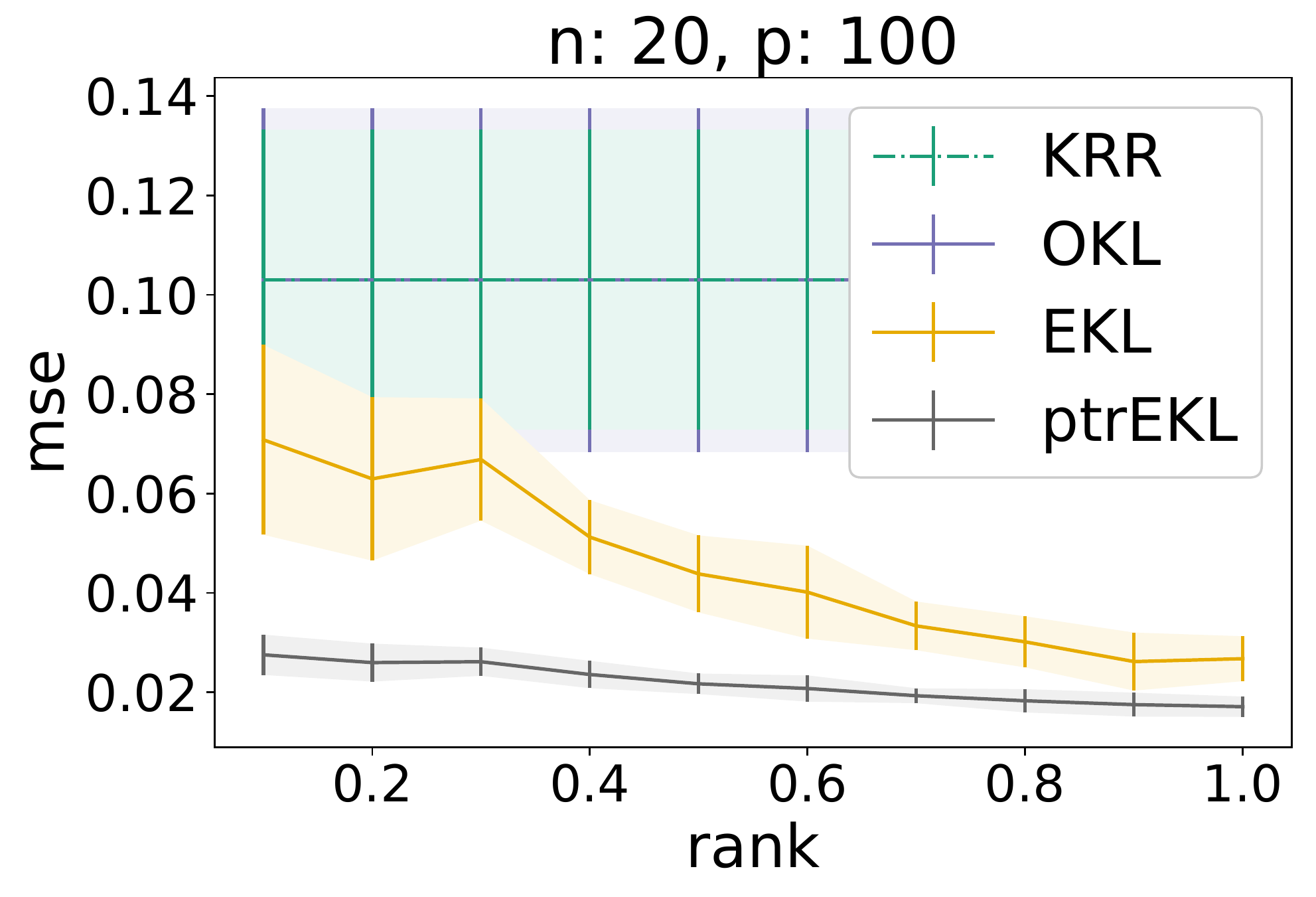}
    \caption{Results (mean squared error) on the simulated data set with varying number of columns used in learning $\mathbf{Q}$ with $n=20$ and $p=100$. The number of columns, or rank level, is shown as percentage of the full possible rank. OKL And KRR are plotted as a reference, naturally the rank constraint of EKL does not affect these methods.}
    \label{fig:ax_xb_c_rank}
\end{figure}

EKL is expected to learn complex relationships within the data. To illustrate this, we created data with bi-linear model $\mathbf{TCA+ICK = Y}$, where $\mathbf{T}$, $\mathbf{C}$ and $\mathbf{A}$ are randomly created $p\times p$, $p\times n$, and $n\times n$ matrices respectively. $\mathbf{K}$ is linear kernel calculated from randomly generated data $\mathbf{X} \in \R^{n\times d}$; this scalar-valued kernel is given to all the learning algorithms along with noisy labels $\mathbf{Y}$. 

The results are shown in Figure~\ref{fig:ax_xb_c}. We can see that when $p$ is larger than $n$ (or comparable) the predictive capabilities of EKL are much better than for other methods. Here predicting with the scalar-valued kernel extracted form learned entangled kernel gives the best results.

We also investigated the effect the choice of rank $r$ (number of columns in matrix $\mathbf{Q}$) has for EKL performance~(Figure~\ref{fig:ax_xb_c_rank}). As the rank increases, the performance of EKL gets better. This is true to an extent also for ptrEKL, however there seems not to be as strong effect as with EKL. This is not so surprising; ptrEKL has fewer parameters affecting predictive performance, so decreasing the amount of them shouldn't change the results as much as for full EKL.

\subsubsection{Real data}

\begin{table*}[tb]
\centering
\resizebox{\textwidth}{!}{
\begin{tabular}{lrrrrrrrr}  
\toprule
 & \multicolumn{2}{r}{$n=50$ \hspace*{0.55cm} }  & \multicolumn{2}{r}{$n=100$  \hspace*{0.4cm} }  & \multicolumn{2}{r}{$n=200$  \hspace*{0.4cm} }  & \multicolumn{2}{r}{$n=1000$  \hspace*{0.3cm} } \\
method  &  nMSE & \cellcolor[gray]{0.8} nI &  nMSE & \cellcolor[gray]{0.8}nI &  nMSE & \cellcolor[gray]{0.8}nI &  nMSE & \cellcolor[gray]{0.8}nI \\
\midrule
KRR & 0.2418 $\pm$ 0.0281   &  \cellcolor[gray]{0.8} 0.0000  & 0.1668 $\pm$ 0.0097   &  \cellcolor[gray]{0.8} 0.0000  & 0.1441 $\pm$ 0.0037   &  \cellcolor[gray]{0.8} 0.0000  & 0.1273 $\pm$ 0.0006   &  \cellcolor[gray]{0.8} 0.0000  \\
OKL & 0.2445 $\pm$ 0.0296   &  \cellcolor[gray]{0.8} -0.0109  & 0.1672 $\pm$ 0.0099   &  \cellcolor[gray]{0.8} -0.0026  & 0.1442 $\pm$ 0.0037   &  \cellcolor[gray]{0.8} -0.0009  & 0.1273 $\pm$ 0.0006   &  \cellcolor[gray]{0.8} -0.0000  \\
EKL/ptrEKL & 0.2381 $\pm$ 0.0250   &  \cellcolor[gray]{0.8} 0.0139  & 0.1661 $\pm$ 0.0097   &  \cellcolor[gray]{0.8} 0.0040  & 0.1440 $\pm$ 0.0037   &  \cellcolor[gray]{0.8} 0.0003  & 0.1273 $\pm$ 0.0006   &  \cellcolor[gray]{0.8} 0.0001  \\
\bottomrule
\end{tabular}}
\caption{Results on Sarcos data set with various number of training samples used, averaged over 10 data partitions. The advantage of learning complex relationships decreases with amount of data samples increasing.} %
\label{tab:sarcos}  
\end{table*}

\begin{table}[tb]
\centering
\resizebox{\columnwidth}{!}{
\begin{tabular}{lrrrrrr}  
\toprule
 & \multicolumn{2}{r}{$n=5$ \hspace*{0.55cm} }  & \multicolumn{2}{r}{$n=10$  \hspace*{0.4cm} }  & \multicolumn{2}{r}{$n=15$  \hspace*{0.4cm} } \\
 method & nMSE & \cellcolor[gray]{0.8} nI  & nMSE & \cellcolor[gray]{0.8} nI  & nMSE & \cellcolor[gray]{0.8} nI  \\
\midrule
KRR & 0.951 $\pm$ 0.101   &  \cellcolor[gray]{0.8} 0.000  & 0.813 $\pm$ 0.141   &  \cellcolor[gray]{0.8} 0.000 &  0.761 $\pm$ 0.037   & \cellcolor[gray]{0.8}  0.000 \\
OKL & 1.062 $\pm$ 0.250   &  \cellcolor[gray]{0.8} -0.092  & 0.900 $\pm$ 0.196   &  \cellcolor[gray]{0.8} -0.094 & 0.788 $\pm$ 0.058   & \cellcolor[gray]{0.8}  -0.034 \\
EKL/ptrEKL & 0.840 $\pm$ 0.084   &  \cellcolor[gray]{0.8} 0.124  & 0.722 $\pm$ 0.036   &  \cellcolor[gray]{0.8} 0.107 &  0.728 $\pm$ 0.033   & \cellcolor[gray]{0.8}  0.044 \\
\bottomrule
\end{tabular} }
\caption{Results on Weather data set averaged over 5 data partitions. }
\label{tab:weather}
\end{table}

\begin{figure}[p]
\centering
    \includegraphics[width=0.48\linewidth]{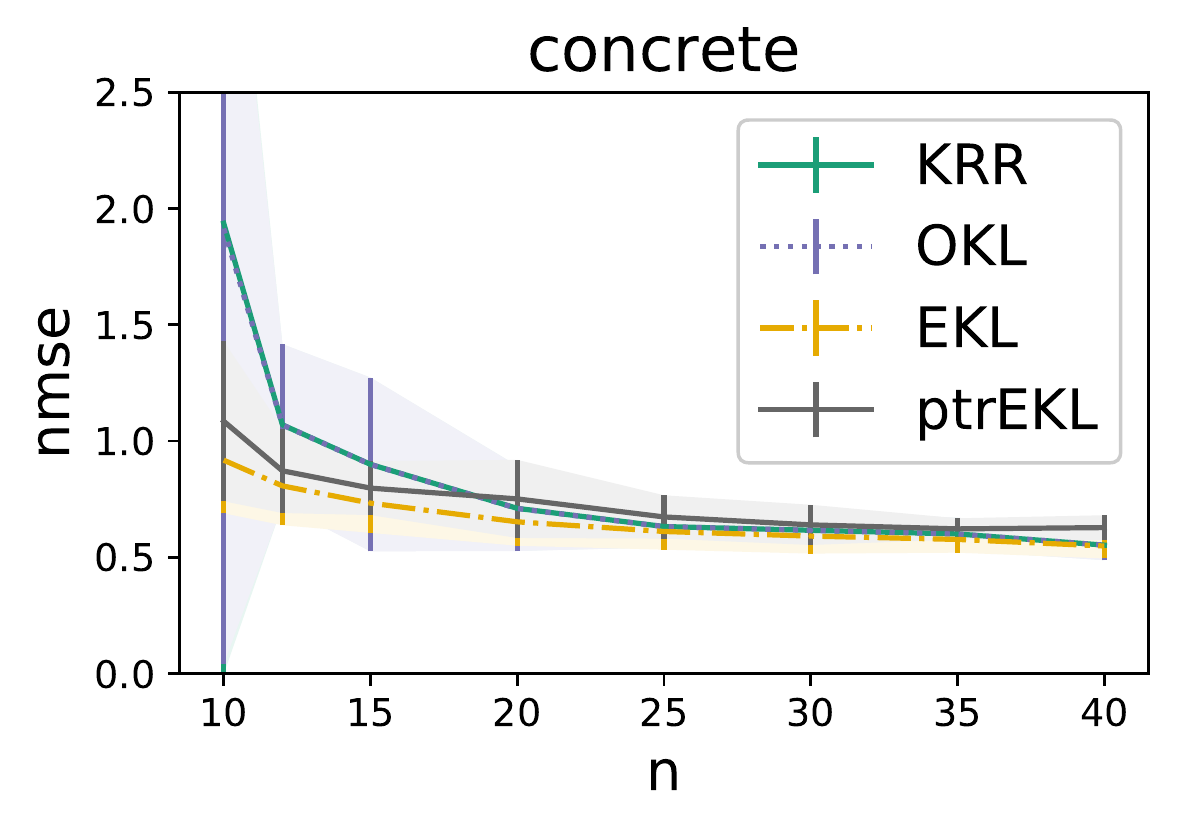}
    \caption{Results (normalized mean squared error) on the concrete data set with varying amount of training data ($n$) used. The advantage of learning complex relationships is biggest on small $n$.}
    \label{fig:concrete}
\end{figure}

\begin{figure}[p]
\centering
    \includegraphics[width=0.48\linewidth]{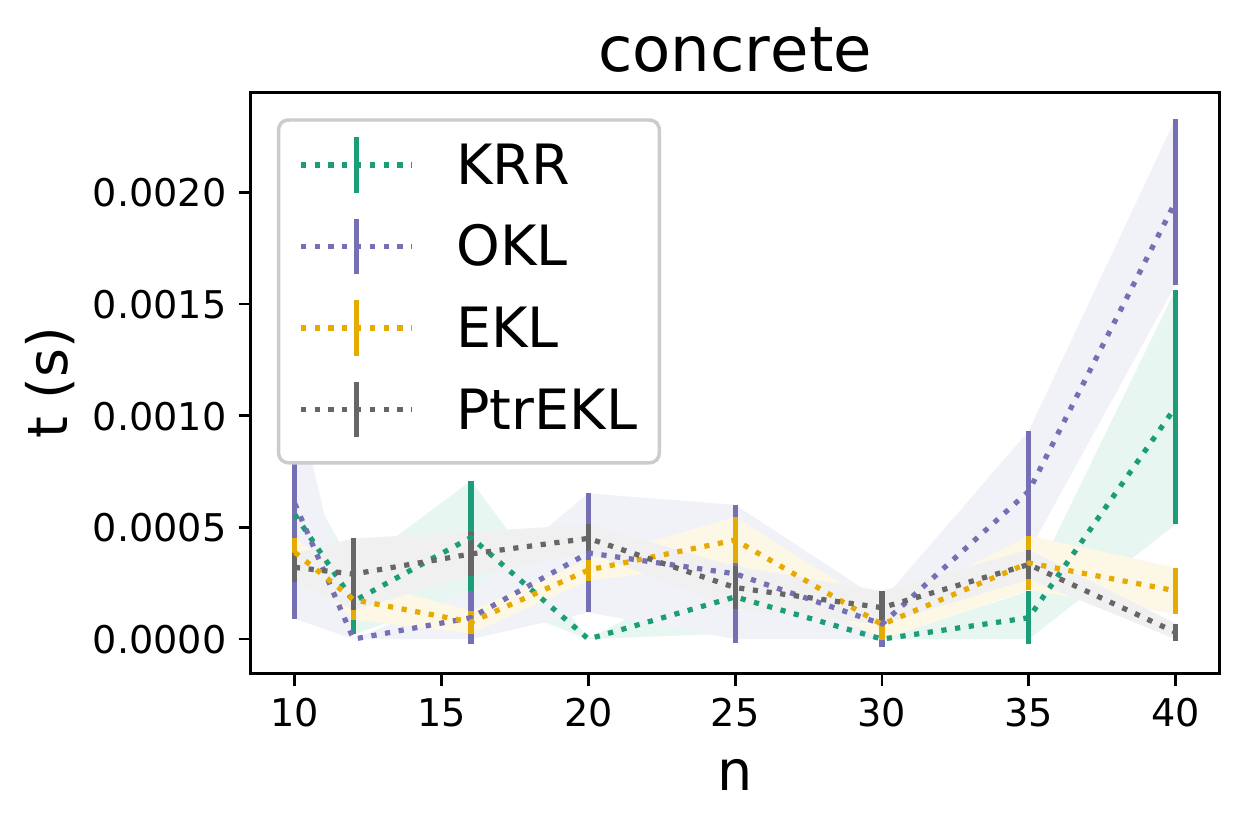}
    \caption{Running times in seconds (including calculating the predictive function given a kernel and predicting) on the concrete data set with varying amount of training data ($n$) used.} 
    \label{fig:concrete_times}
\end{figure}

\begin{table}[p]
\centering
\begin{tabular}{lrrrrrr}  
\toprule
 & \multicolumn{2}{r}{$n=12$ \hspace*{0.55cm} }  & \multicolumn{2}{r}{$n=20$  \hspace*{0.4cm} }  & \multicolumn{2}{r}{$n=40$  \hspace*{0.4cm} } \\
 method & nMSE & \cellcolor[gray]{0.8} nI  & nMSE & \cellcolor[gray]{0.8} nI  & nMSE & \cellcolor[gray]{0.8} nI  \\
\midrule
KRR &  1.070 $\pm$ 0.347   &  \cellcolor[gray]{0.8} 0.000  & 0.710 $\pm$ 0.183   &  \cellcolor[gray]{0.8} 0.000  & 0.552 $\pm$ 0.065   &  \cellcolor[gray]{0.8} 0.000  \\
OKL & 1.069 $\pm$ 0.347   &  \cellcolor[gray]{0.8} 0.001  & 0.710 $\pm$ 0.183   &  \cellcolor[gray]{0.8} 0.001  & 0.552 $\pm$ 0.064   &  \cellcolor[gray]{0.8} 0.000  \\
EKL & 0.796 $\pm$ 0.164   &  \cellcolor[gray]{0.8} 0.266  &  0.634 $\pm$ 0.103   &  \cellcolor[gray]{0.8} 0.097 & 0.547 $\pm$ 0.046   &  \cellcolor[gray]{0.8} 0.007  \\
ptrEKL & 0.843 $\pm$ 0.186   &  \cellcolor[gray]{0.8} 0.212 & 0.726 $\pm$ 0.170   &  \cellcolor[gray]{0.8} -0.023  & 0.627 $\pm$ 0.058   &  \cellcolor[gray]{0.8} -0.130 \\
\bottomrule
\end{tabular} 
\caption{Results on Concrete data set, averaged over 10 data partitions. } 
\label{tab:concrete}
\end{table}

We consider the following regression data sets: Concrete slump test\footnote{UCI data set repository} with 103 data samples and three output variables; Sarcos~$\!$\footnote{\texttt{www.gaussianprocess.org/gpml/data/}} is a data set characterizing robot arm movements with 7 tasks; Weather~$\!\!$\footnote{\texttt{https://www.psych.mcgill.ca/misc/fda/}} has daily weather data ($p=365$) from 35 stations. With these data sets, we considered linear kernels and used the original features in EKL, and full rank in learning $\mathbf{Q}$.   %
Furthermore, we consider the uWaveGesture data set\footnote{\texttt{http://www.cs.ucr.edu/$\sim$eamonn/time\_series\_data/}}, a multi-view data set for classification, which we use in a setting of predicting a view from another, giving us a regression problem with 314 output variables. We again consider linear kernels with the data set, and investigate also the effect of chosen rank of a matrix $\mathbf{Q}$.

\begin{table}[tb]
\centering
\begin{tabular}{llccc}  
\toprule
 &  & $n=20$  & $n=30$  & $n=40$ \\
\midrule
\multirow{4}{1.8cm}{\centering Class 1} & KRR &  3.012 $\pm$ 0.2337 & 2.853 $\pm$ 0.0780 &  2.262 $\pm$ 0.3157 \\
 & OKL &  3.403 $\pm$ 0.0899 & 3.232 $\pm$ 0.1877 &  2.440 $\pm$ 0.3451 \\
 & EKL &  1.136 $\pm$ 0.0937 & 1.123 $\pm$ 0.0852 &  1.107 $\pm$ 0.1019 \\
 & ptrEKL &  1.323 $\pm$ 0.1807 & 1.208 $\pm$ 0.0932 &  1.132 $\pm$ 0.1221 \\
\midrule
\multirow{4}{1.8cm}{\centering Class 2} & KRR &  1.773 $\pm$ 0.0655 &  2.008 $\pm$ 0.3391 &  1.937 $\pm$ 0.3170 \\
 & OKL &  1.802 $\pm$ 0.0603 &  2.096 $\pm$ 0.3673 &  2.074 $\pm$ 0.3483 \\
 & EKL &  0.991 $\pm$ 0.0517 &  0.943 $\pm$ 0.0237 &  0.908 $\pm$ 0.0130 \\
 & ptrEKL &  1.026 $\pm$ 0.0509 &  0.940 $\pm$ 0.0143 &  0.914 $\pm$ 0.0138 \\
\midrule
\multirow{4}{1.8cm}{\centering Class 3} & KRR & 1.081 $\pm$ 0.1437 & 1.028 $\pm$ 0.1235 & 0.902 $\pm$ 0.1020\\
 & OKL & 1.173 $\pm$ 0.1720 & 1.146 $\pm$ 0.1753 & 0.993 $\pm$ 0.1486\\
 & EKL & 0.671 $\pm$ 0.0202 & 0.638 $\pm$ 0.0214 & 0.632 $\pm$ 0.0325\\
 & ptrEKL & 0.681 $\pm$ 0.0226 & 0.651 $\pm$ 0.0345 & 0.632 $\pm$ 0.0307\\
\bottomrule
\end{tabular} 
\caption{Normalized mean squared errors over three runs on the uWaveGesture data set, classes 1-3, with varying amount of training data and full rank of the EKL. }
\label{tab:uWave1}

\centering
\begin{tabular}{llccc}  
\toprule
 &  & rank $100\%$  & rank $60\%$    & rank $20\%$  \\
\midrule
\multirow{2}{1.8cm}{\centering Class 1} & EKL & 1.123 $\pm$ 0.0852 & 1.134 $\pm$ 0.0762 & 1.171 $\pm$ 0.1199 \\
 & ptrEKL & 1.208 $\pm$ 0.0932 & 1.228 $\pm$ 0.0970 & 1.325 $\pm$ 0.1630 \\
\midrule
\multirow{2}{1.8cm}{\centering Class 2} & EKL &  0.943 $\pm$ 0.0237 &  0.939 $\pm$ 0.0091 &  0.941 $\pm$ 0.0074 \\
 & ptrEKL &  0.940 $\pm$ 0.0143 &  0.958 $\pm$ 0.0053 &  0.987 $\pm$ 0.0137 \\
\midrule
\multirow{2}{1.8cm}{\centering Class 3} &  EKL &  0.638 $\pm$ 0.0214 &  0.644 $\pm$ 0.0222 &  0.654 $\pm$ 0.0271 \\
& ptrEKL &  0.651 $\pm$ 0.0345 &  0.647 $\pm$ 0.0344 &  0.647 $\pm$ 0.0394 \\

\bottomrule
\end{tabular} 
\caption{Normalized mean squared errors over three runs with $n=30$ on the uWaveGesture data set, classes 1-3, with varying rank of the EKL. }
\label{tab:uWave2}
\end{table}

The main advantage of learning complex dependencies in the data lies in the setting where number of samples is relatively low; a phenomenon observed already in output kernel learning setting \citep{Ciliberto2015convex,jawanpuria2015efficient}. 
With small amounts of data learning the complex relationships in EKL is even more beneficial than learning the output dependencies of OKL. 
Figure \ref{fig:concrete} shows this advantage on Concrete data set when number of instances used in training is small. Here, in contrast to our simulated data, EKL performs better than ptrEKL.
For Sarcos data set we consider the setting in~\cite{Ciliberto2015convex} and show the results in Table~\ref{tab:sarcos} (predicting is done to all 5000 test samples). 
As can be expected, the results with the Sarcos data set which has very few outputs ($p=7$) do not show improvement over the compared methods when the number of samples is large. However we can ascertain that our EKL finds the same solutions than the other methods with the large sample sizes - indeed the methods perform identically when $n$ increases. We expect that the main improvement of the  EKL lies in the cases when $n$ and $p$ are comparable, or $p>n$.
This is clearly seen with the Weather data set, where number of outputs is much larger than the number of data samples (Table \ref{tab:weather}).

While we investigate the running times of the compared algorithms more in depth in next subsection, we here in Figure~\ref{fig:concrete_times} display them for the Concrete data set. The Figure shows the combined time for both calculating the predictive function given the kernel, and predicting.

To show further the advantage of our method in the case where $n<p$, we turn our attention to uWaveGesture data set with three views. Each of the views is of dimensionality 314. The training set partition contains in total 896 data samples from the three views and eight classes. However as we want to investigate the case with large distinction between $n$ and $p$, we only consider the smaller sets of data belonging to each class, with around 100 samples each which we divide into training and testing sets for the compared algorithms. 
Tables~\ref{tab:uWave1} and~\ref{tab:uWave2} show the results. In Table~\ref{tab:uWave1} we present the errors for the first three classes with full rank parameter $r$ in the EKL algorithms. This setting is very extreme with the tiny sample size and large $p$, and it is clear that our method obtains the best result. We also detail in Table~\ref{tab:uWave2} the EKL results with respect to the rank parameter $r$ on 100\%, 60\% and 20\% of the full rank. Even though the error rises a little in most cases, it is not significant especially compared to the results with KRR and OKL. This further justifies the use of our algorithm in the more efficient, low-rank setting.


\subsection{Running Times of Learning with OvKs}

\begin{figure}[tb]

\centering

\includegraphics[width=0.45\linewidth]{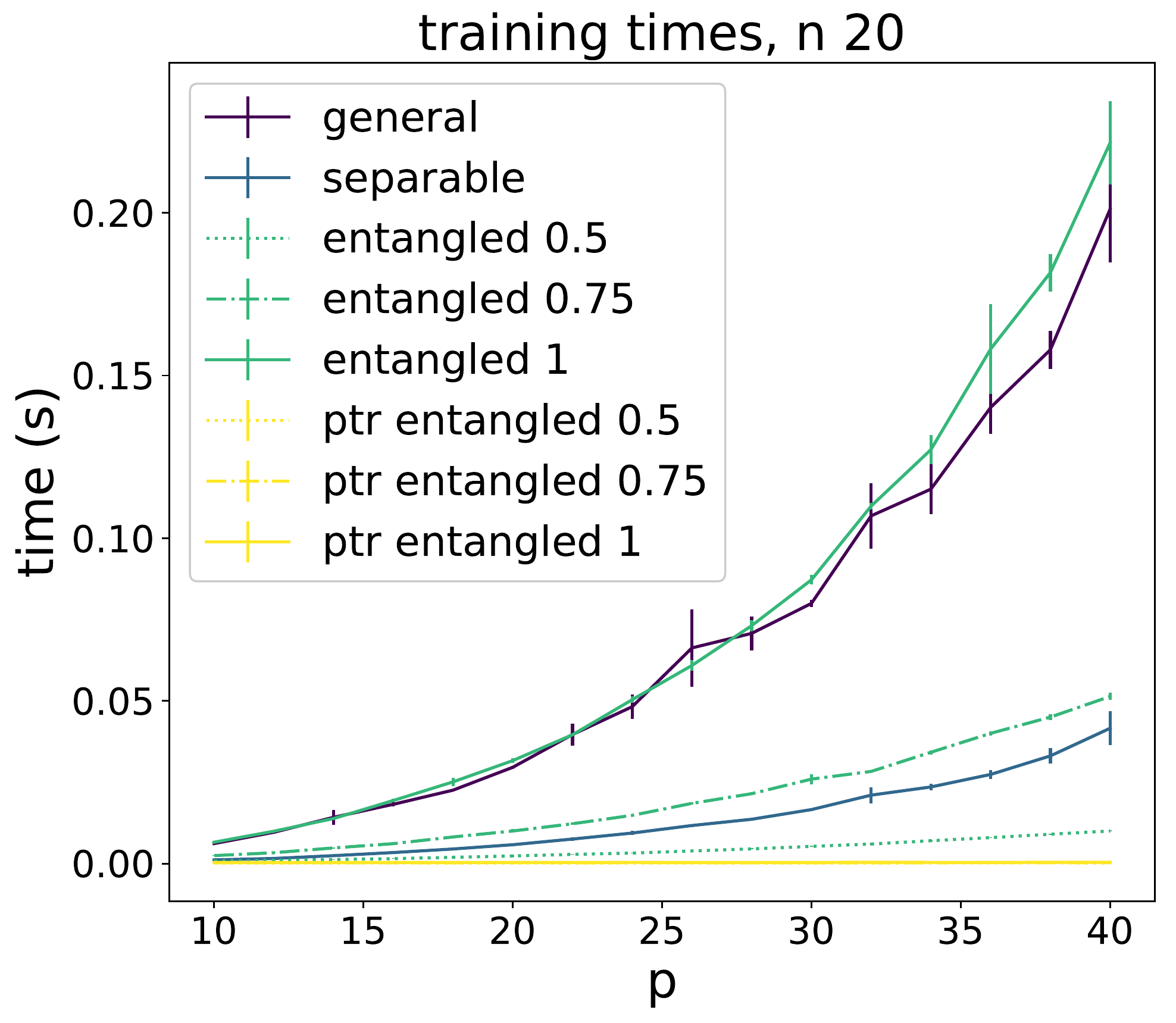}
\includegraphics[width=0.45\linewidth]{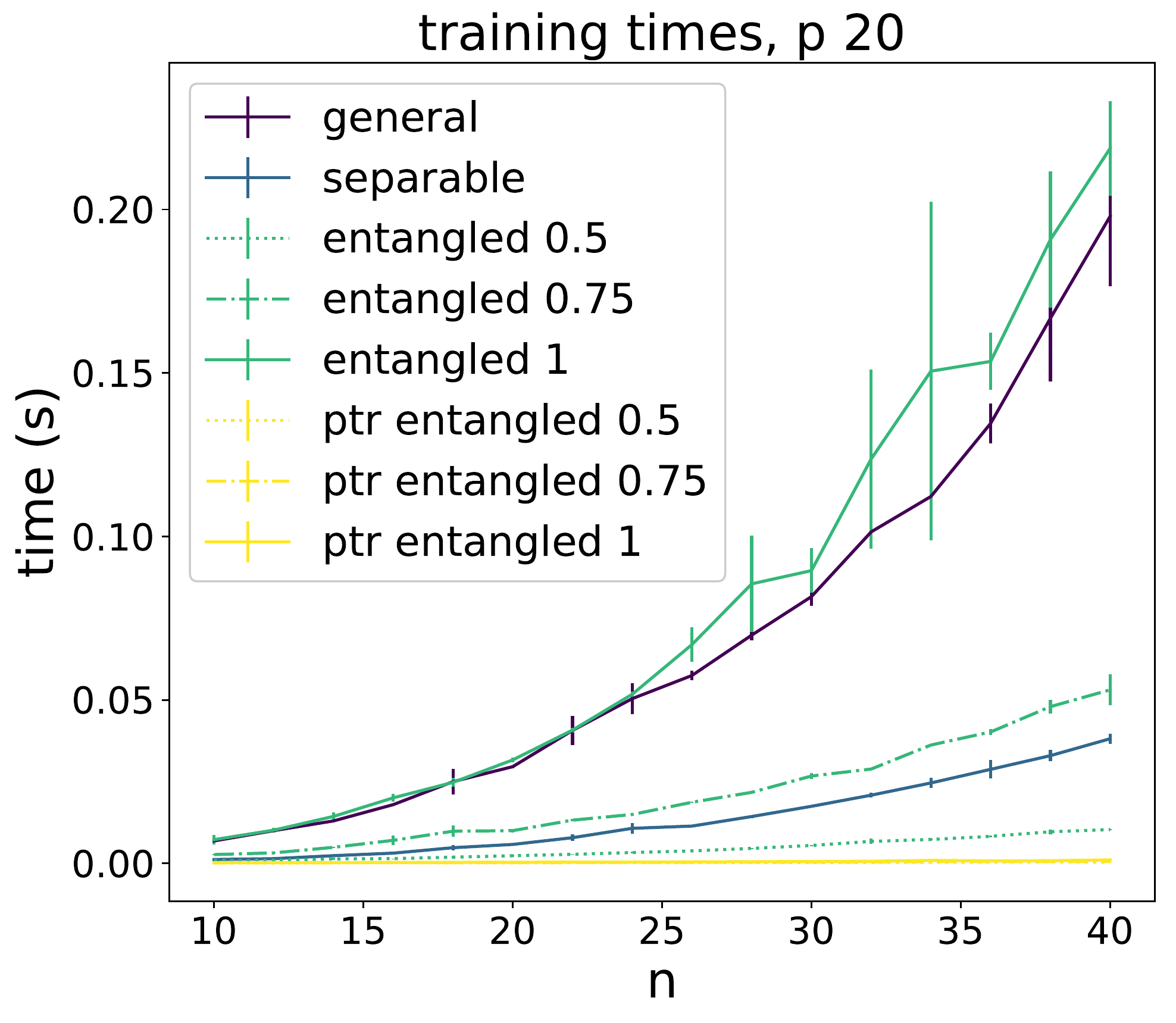}
\includegraphics[width=0.45\linewidth]{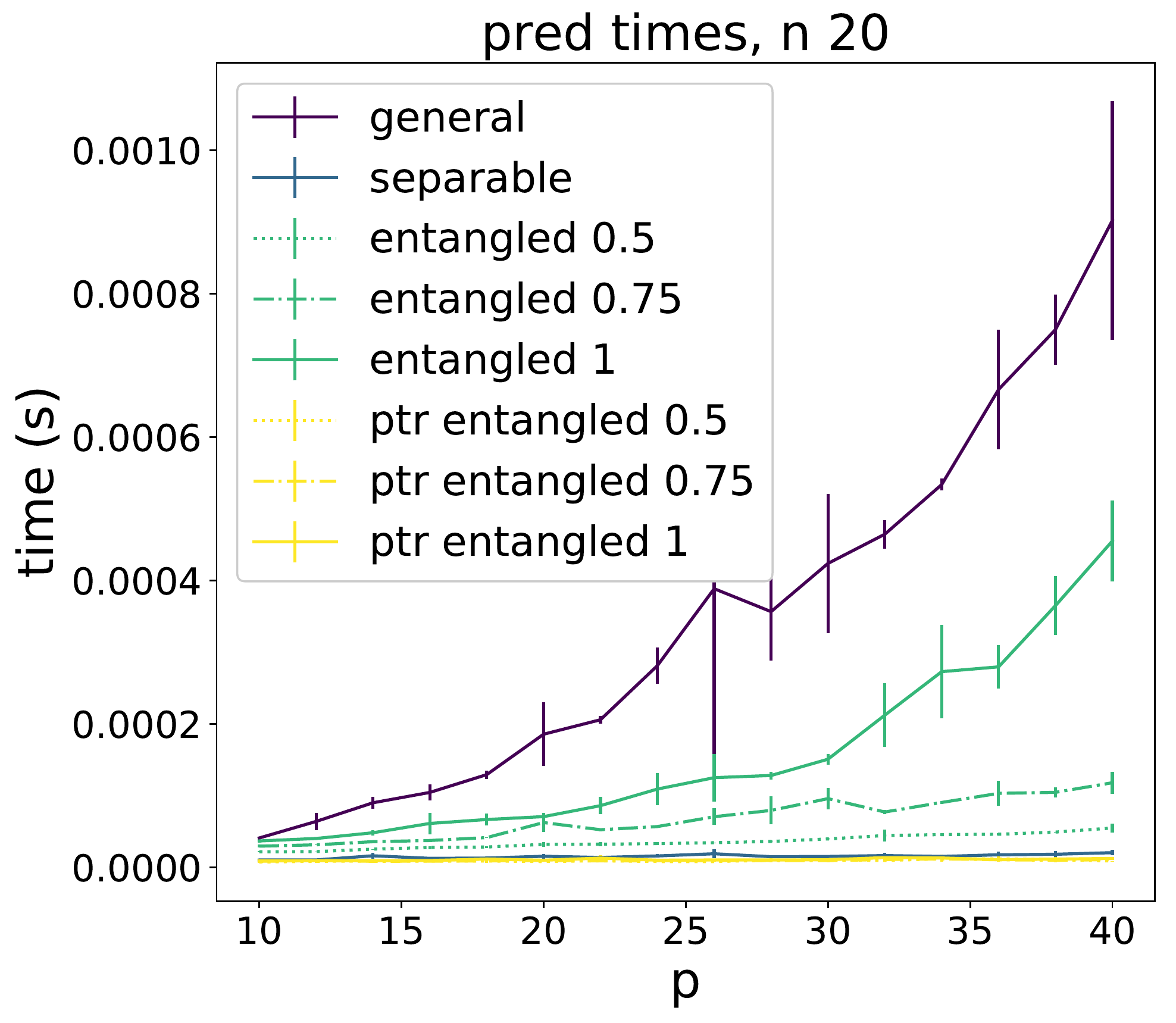}
\includegraphics[width=0.45\linewidth]{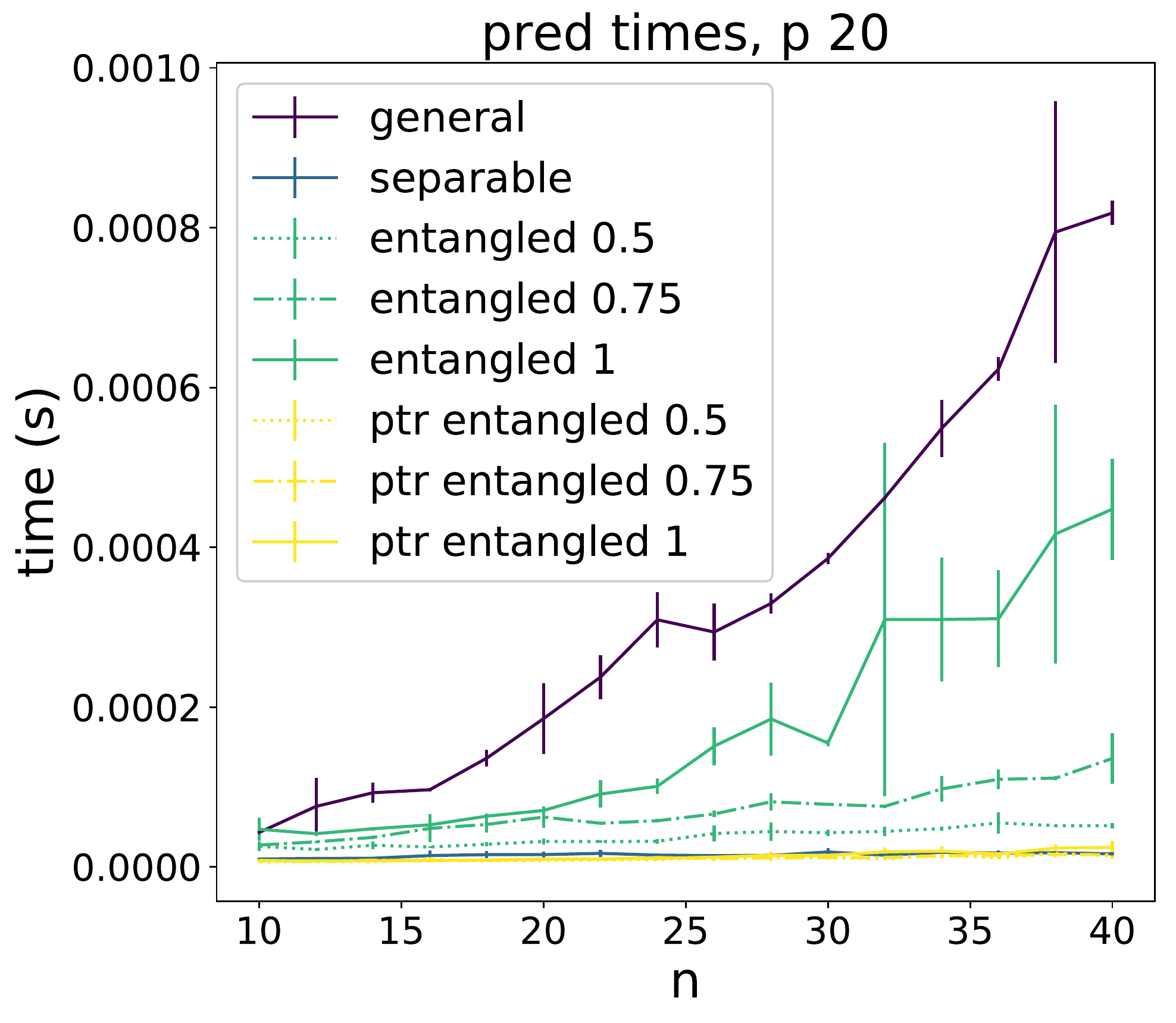}

\caption{Times for training (top row) or predicting (bottom row) with the various operator-valued kernels with fixed $n$/$p$ indicated in the subtitle of the figures. For entangled and partial trace entangled we show the results with various choices of $m$ and $r$, indicated with the number after method name in the legend. For example \enquote{entangled 0.75} means that $m=0.75\cdot n$ and $r=0.75\cdot mp$, meaning both are 75\% of the largest possible value. $t=n$ for all the experiments.}\label{fig:times_all}

\end{figure}

\begin{figure}[tb]

\centering

\includegraphics[width=0.45\linewidth]{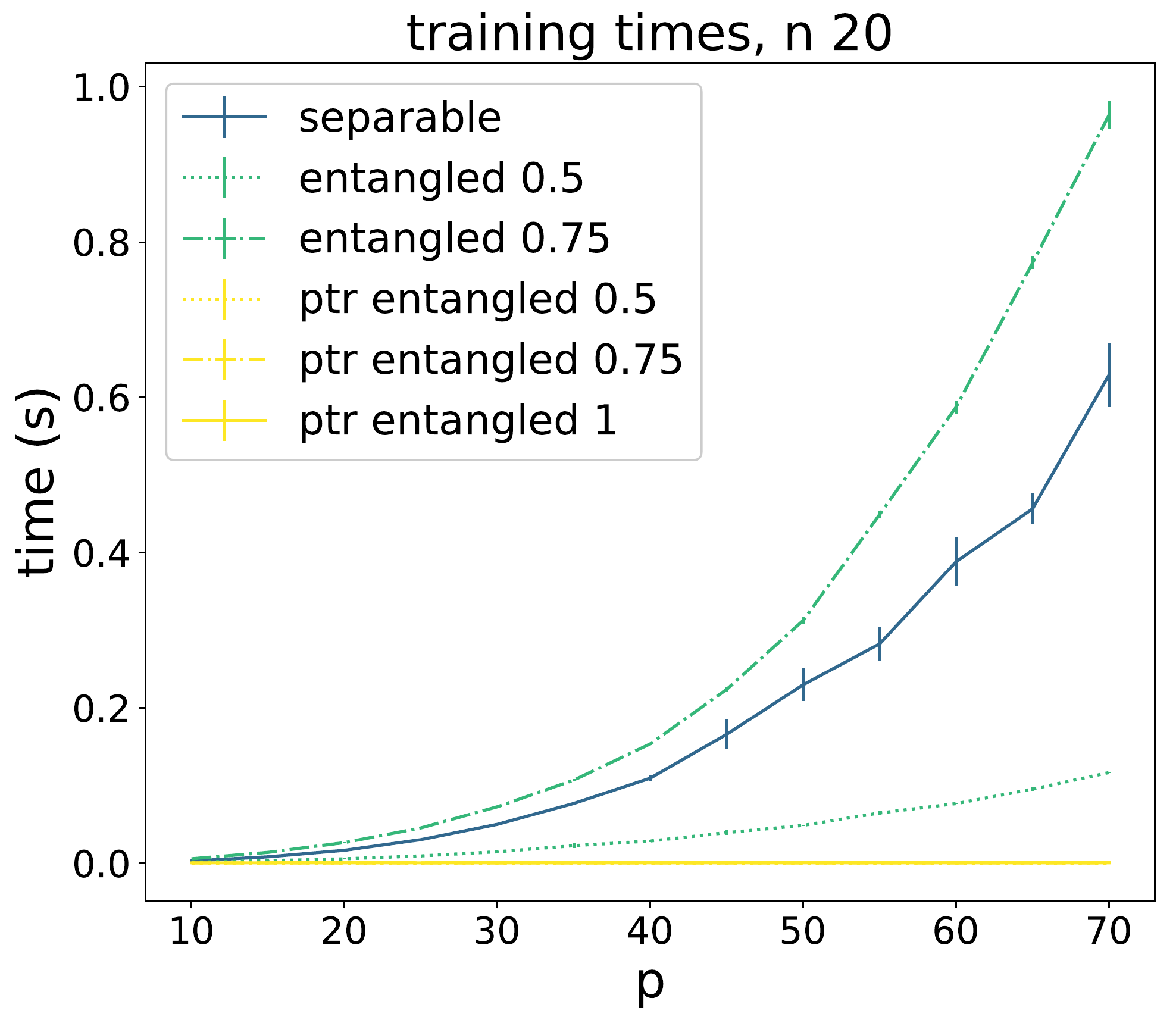}
\includegraphics[width=0.45\linewidth]{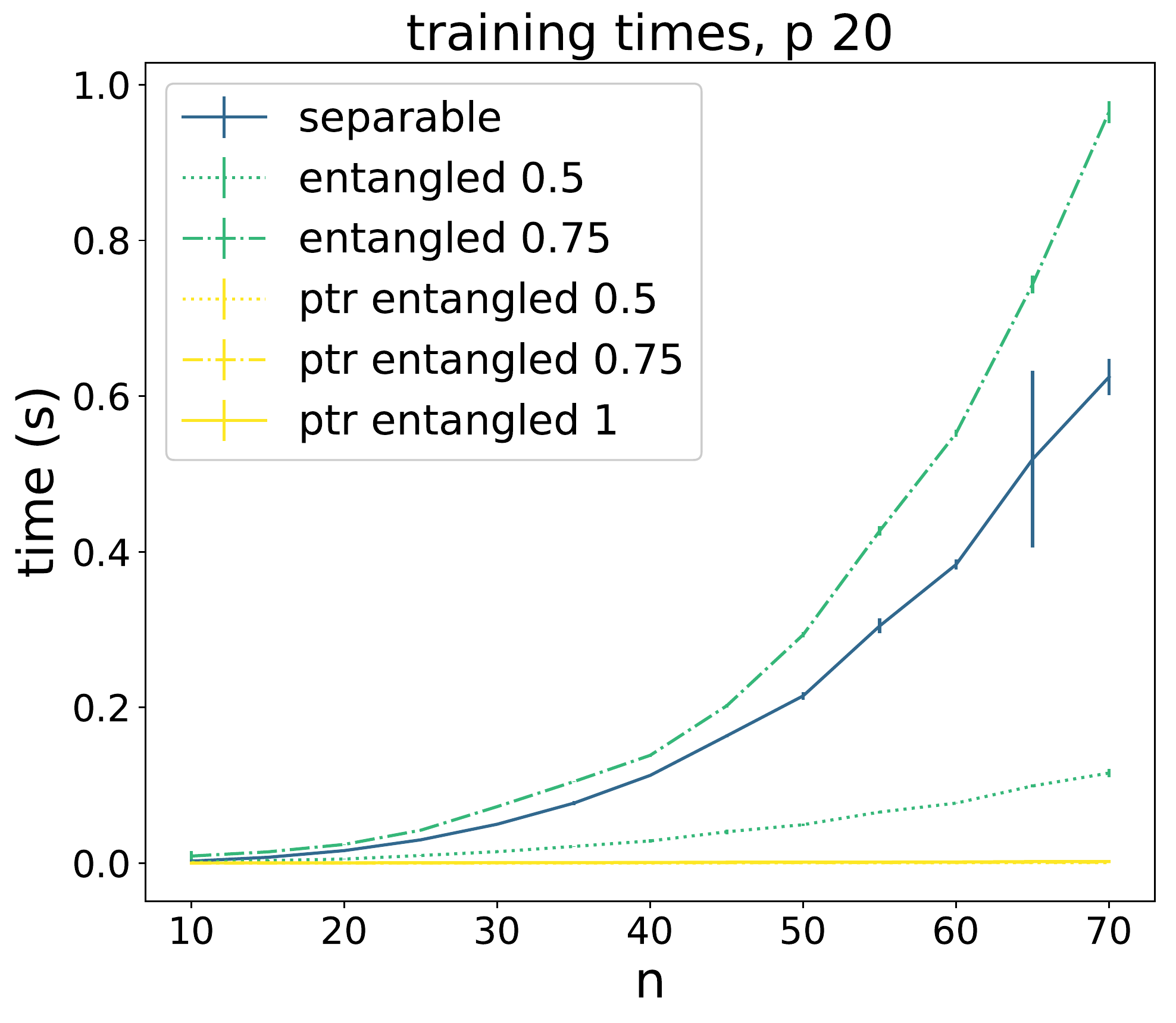}
\includegraphics[width=0.45\linewidth]{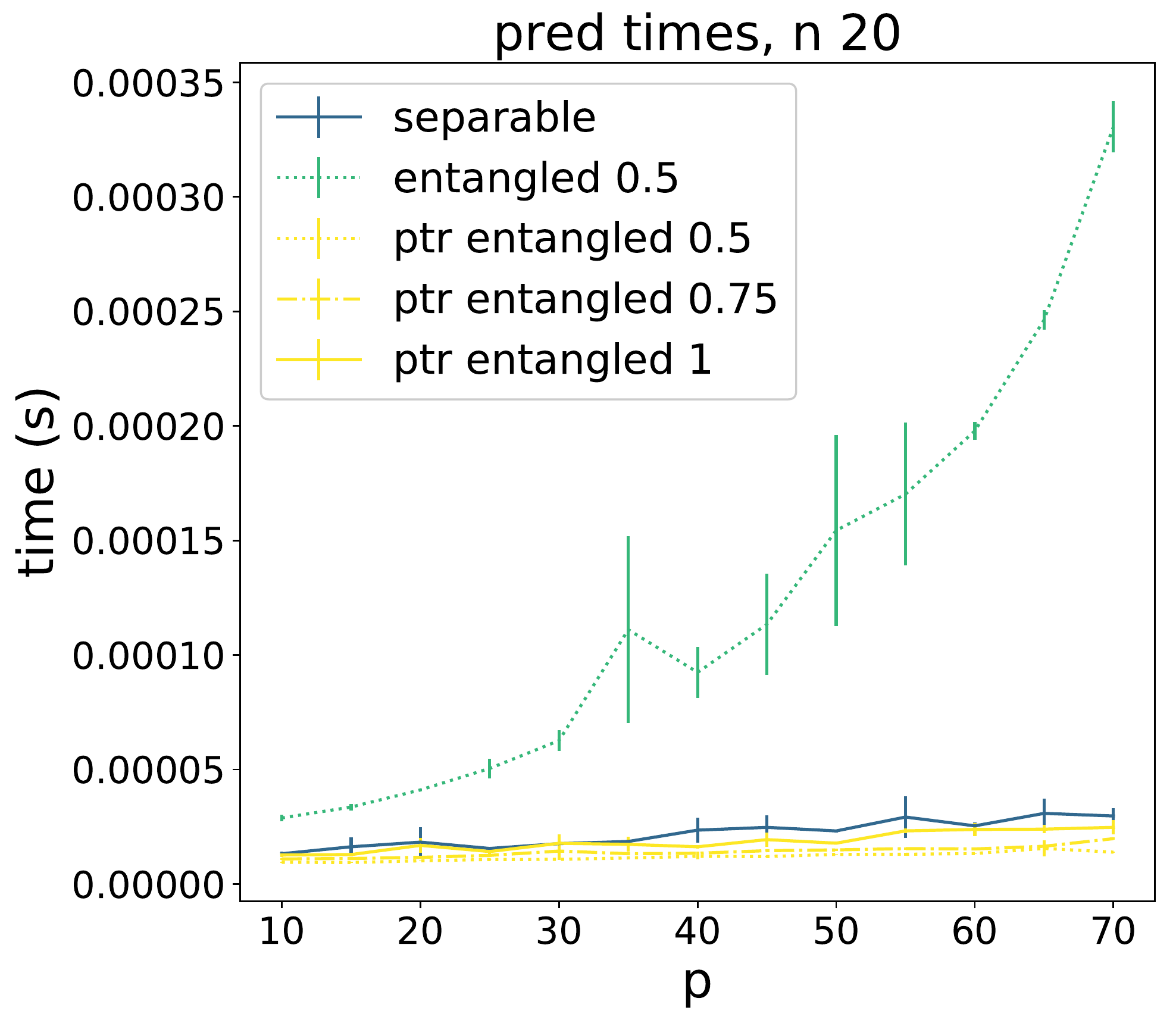}
\includegraphics[width=0.45\linewidth]{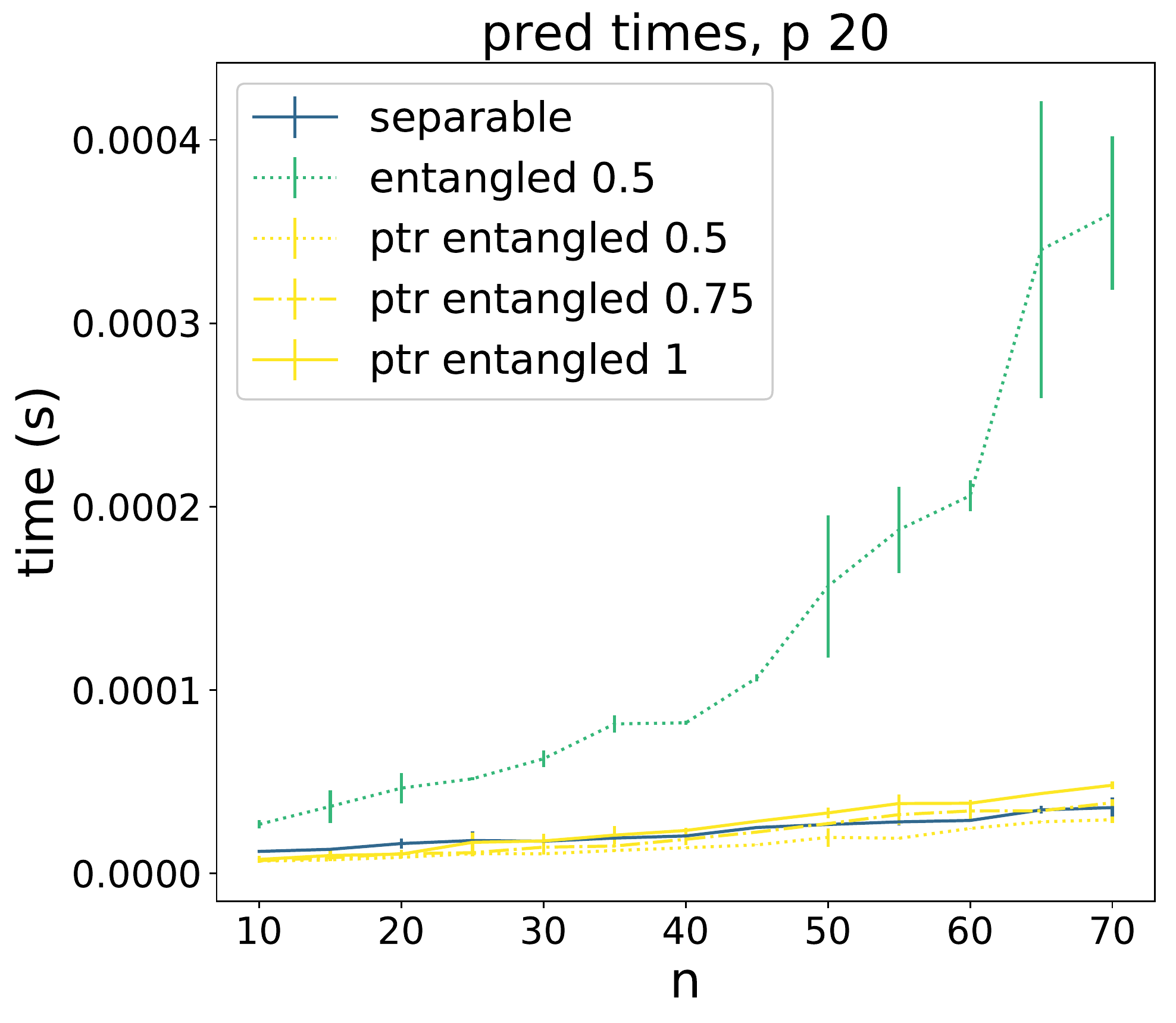}

\caption{A closer look at the times for training (top row) or predicting (bottom row) with the various operator-valued kernels with fixed $n$/$p$ indicated in the subtitle of the figures. For entangled and partial trace entangled we show the results with various choices of $m$ and $r$, indicated with the number after method name in the legend. For example \enquote{entangled 0.75} means that $m=0.75\cdot n$ and $r=0.75\cdot mp$, meaning both are 75\% of the largest possible value. $t=n$ for all the experiments.}\label{fig:times_closer}

\end{figure}

As we show in Table~\ref{tb:complexities}, there is a big difference in computational complexities between various operator-valued kernels. Namely, the complexities under question are those of calculating the parameters $\mathbf{c}$ of the predictive function, and of calculating the label predictions.
In this section we perform experiments to highlight these differences. 
We note that here we do not learn the kernels, but only compare in the experiments the differences on pre-defined entangled/separable or general operator-valued kernels.

In our experiments we created random data (random kernel matrices, random $\mathbf{c}$ from which labels were calculated) with which we performed our calculations. 
We repeated our experiments five times with different random data, and each time timed the execution of learning and predicting five times. In our results (Figures~\ref{fig:times_all} and~\ref{fig:times_closer}) we present averages of these runs. 

Figure~\ref{fig:times_all} shows the times of the runs for all the methods listed in Table~\ref{tb:complexities}. As expected, the operator-valued kernels with no structure are among the worst in every run, with scalar-valued kernels much better than them. 
The performance of entangled kernels naturally rests on the crucial parameters $m$ and $r$. In the experiments we modify both at the same time, from full values to 75\% and 50\% of the full value. With maximum $m$ and $r$ entangled kernels are as slow as the general operator-valued kernels, but outperform them in predictive step. With 75\% they train comparably to separable kernels, while outperforming them with 50\%. Partial trace version of entangled kernels naturally has the lowest computational cost in training step. In predicting, entangled kernels are always better than general operator-valued kernels. Separable and partial trace kernels perform similarly, which can be seen better in Figure~\ref{fig:times_closer} showing a closer look to separable and entangled kernels.

\section{Conclusion}
\label{sec:conclusion}

In this work we shed new light on meaning of inseparable kernels by defining a general framework for constructing operator-valued kernels based on the notion of partial trace and using ideas borrowed from the field of quantum computing.
Instances of our framework include entangled kernels, a new conceptually interesting class of kernels that is designed to capture more complex dependencies between input and output variables as the more restricted class of separable kernels. 
We have proposed a new algorithm, entangled kernel learning~(EKL), that learns this entangled kernel in kernel alignment framework. The first step uses a definition of kernel alignment, extended here for use with operator-valued kernels with help of partial trace operator. 
In contrast to output kernel learning~(OKL), EKL is able to learn inseparable kernels and can model a larger variety of interactions between input and output data. 
Moreover, the structure of the entangled kernels enables more efficient computation than that with general operator-valued kernels. 
Our illustration on artificial data and experiments on real data give validation to our approach. 

Like with previous work with separable kernels, the main advantage of learning complex relations in the data lies in setting where number of data samples is relatively low. 
In the experimental section, we observed 
that difference in performance between EKL and OKL decreases as sample size increases. We think that one possible reason for that is that number of outputs relative to samples is not as large in this case, and it will be interesting to thoroughly investigate EKL in the setting where the number of outputs is even larger. Moreover, the effect of choosing the number of columns in matrix Q (or choosing the rank of the entangled kernel) would warrant further study, especially with small values (low rank kernel setting). It is also well-known that feature representation of a kernel is not unique, and we leveraged this in our work. Studying the effect on this could be worthwhile, as well as any theoretical consequences it has.

Using the Kraus representation, entangled kernels can be viewed as a mapping from a covariance matrix on the input features to a covariance matrix on the output labels. Recently, Riemannian networks for symmetric positive definite (SPD) learning have been introduced in~\cite{huang2017riemannian}. These networks receive SPD matrices, such as covariance descriptors, as inputs, and preserve their SPD structure across the layers. 
The SPD structure is preserved via the bilinear mapping (BiMap) layer, which 
can be seen as a particular case of a Choi-Kraus representation with Kraus rank equal to 1. Generalizing  SPD networks to  higher Kraus ranks could provide a way to efficiently learn entangled kernels with deep learning machinery, and would be an interesting direction
to explore in future work.

Overall, we hope that our work featuring the first comprehensive description on how to learn non-separable operator-valued kernels will give a boost to the field of learning inseparable kernels.



\acks{We thank the anonymous reviewers for their helpful comments and suggestions, which improved the quality of the paper.
	%
	We also thank F. Denis, P. Arrighi and G. Di Molfetta for fruitful discussions. 
This work has been funded by the French National Research Agency~(ANR) project QuantML (grant number ANR-19-CE23-0011).
A large part of this research was done while R.H. was at Aix-Marseille University; the part in Aalto University in part been funded by Academy of Finland grants 334790 (MAGITICS) and 310107 (MACOME).}


\appendix

\section{Proof for Choi-Kraus representation theorem (Theorem~\ref{th:choiKraus})} \label{app:choiKraus}

	The following result is useful to prove Theorem~\ref{th:choiKraus}.
	
	\begin{theorem}\label{th:watrous}(\citealp[Theorem~2.22]{watrous2018theory})\\
Let $\Phi$ be a linear map from $\mathcal{L(K)}$ to $\mathcal{L(Y)}$, where $\mathcal{K}$  and $\mathcal{Y}$ are Euclidean spaces. The following statements are equivalent:
\begin{enumerate}
	\item There exists an operator $A\in\mathcal{L(K,Y\otimes Z)}$ for a some choice of an Euclidean space $\mathcal{Z}$, such that \[\Phi(\x) = \mathrm{tr}_{\mathcal{Z}}(AXA^*)\] for all $X\in\mathcal{X}$.
	\item There existes a collection $\{A_a : a\in\Sigma\}$, for some choice of an alphabet $\Sigma$, for which \[\Phi(\x) = \sum_a A_a X A_a^*\] for all $X\in\mathcal{X}$.
\end{enumerate}
	\end{theorem}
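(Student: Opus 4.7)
The plan is to establish the equivalence $(1)\Leftrightarrow(2)$ by using a fixed orthonormal basis of $\mathcal{Z}$ to set up a one-to-one correspondence between operators $A\in\mathcal{L}(\mathcal{K},\mathcal{Y}\otimes\mathcal{Z})$ and finite families $\{A_a\}_{a\in\Sigma}\subset\mathcal{L}(\mathcal{K},\mathcal{Y})$. Once this correspondence is in place, the partial trace over $\mathcal{Z}$ collapses the double sum from $AXA^{*}$ down to the diagonal Kraus sum, giving both implications at once.

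First I would prove $(1)\Rightarrow(2)$. Fix an orthonormal basis $\{e_a\}_{a\in\Sigma}$ of $\mathcal{Z}$ (with $|\Sigma|=\dim\mathcal{Z}$), and for each $a$ define
\begin{equation*}
A_a \;:=\; (I_{\mathcal{Y}}\otimes e_a^{*})\,A \;\in\;\mathcal{L}(\mathcal{K},\mathcal{Y}).
\end{equation*}
Equivalently, $A k = \sum_{a} (A_a k)\otimes e_a$ for every $k\in\mathcal{K}$. A direct expansion then yields
\begin{equation*}
A X A^{*} \;=\; \sum_{a,b\in\Sigma} \bigl(A_a X A_b^{*}\bigr)\otimes e_a e_b^{*}.
\end{equation*}
Applying Theorem~\ref{th:ptr_separable} term by term to this separable expansion and using $\mathrm{tr}(e_a e_b^{*})=\langle e_b,e_a\rangle=\delta_{ab}$, the partial trace over $\mathcal{Z}$ kills the off-diagonal terms:
\begin{equation*}
\mathrm{tr}_{\mathcal{Z}}(A X A^{*}) \;=\; \sum_{a,b} (A_a X A_b^{*})\,\mathrm{tr}(e_a e_b^{*}) \;=\; \sum_{a\in\Sigma} A_a X A_a^{*},
\end{equation*}
which is exactly the Kraus form in (2).

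For $(2)\Rightarrow(1)$, I would run the correspondence in reverse. Given $\{A_a\}_{a\in\Sigma}$, choose $\mathcal{Z}=\mathbb{C}^{\Sigma}$ with canonical basis $\{e_a\}$, and define $A\in\mathcal{L}(\mathcal{K},\mathcal{Y}\otimes\mathcal{Z})$ by $A k := \sum_{a}(A_a k)\otimes e_a$. The same expansion and partial-trace computation as above then gives $\mathrm{tr}_{\mathcal{Z}}(AXA^{*})=\sum_a A_a X A_a^{*}=\Phi(X)$, so (1) holds with this choice of $A$ and $\mathcal{Z}$.

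The whole argument reduces to one bookkeeping identity, so there is no deep obstacle; the only place requiring care is justifying the tensor decomposition $A X A^{*}=\sum_{a,b}(A_a X A_b^{*})\otimes e_a e_b^{*}$, which follows from the rule $(S\otimes e_a)(T\otimes e_b)^{*}=ST^{*}\otimes e_a e_b^{*}$ applied to the expansion $A=\sum_a A_a\otimes e_a$ and its adjoint $A^{*}=\sum_b A_b^{*}\otimes e_b^{*}$. Once this is in hand, the factor $\mathrm{tr}(e_a e_b^{*})=\delta_{ab}$ produced by Theorem~\ref{th:ptr_separable} does all the remaining work, and both implications are obtained in parallel.
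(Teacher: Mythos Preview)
Your argument is correct and is essentially the standard textbook proof of this equivalence (it is, in fact, the proof Watrous gives). However, the paper does not prove this theorem at all: it is quoted verbatim as \citet[Theorem~2.22]{watrous2018theory} and used as a black box inside the proof of Theorem~\ref{th:choiKraus}, with only the remark that it is an amalgamation of results due to Stinespring, Kraus, and Choi. So there is no ``paper's own proof'' to compare against; you have supplied the missing proof, and it is the expected one.
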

As mentioned in~\citet{watrous2018theory}, this theorem is an amalgamation of results that are generally attributed to~\citet{stinespring1955positive, kraus1971general,kraus1983states, choi1975completely}, and presents only the finite-dimensional analogues of the results they proved which hold for infinite-dimensional
spaces.

	%
	\paragraph{Proof of Theorem~\ref{th:choiKraus}}
	The proof follows the same arguments as the proof of Theorem~6.5 from~\citet{attalLectures}, which holds for infinite dimensional spaces. It is based on the observation that for $\mathbf{v}\in\mathcal{Y}$ we have $\mathbf{v}\mathbf{v}^\top \otimes (\phi(\x)  \phi(\x)^\top) = \tilde{O}_{\mathbf{v}}(\phi(\x)  \phi(\x)^\top)\tilde{O}_{\mathbf{v}}^*$, in which $\tilde{O}_{\mathbf{v}}$ is the operator defined by
	\begin{align*}
	\tilde{O}_{\mathbf{v}} : \mathcal{K} &\longrightarrow \mathcal{Y} \otimes \mathcal{K}\\
	u &\longmapsto \mathbf{v} \otimes u,
	\end{align*}
	and $\tilde{O}_{\mathbf{v}}^*$ is its adjoint defined by
	\begin{align*}
	\tilde{O}_{\mathbf{v}}^* :  \mathcal{Y} \otimes \mathcal{K} &\longrightarrow \mathcal{K} \\
	\mathbf{y} \otimes u  &\longmapsto \langle \mathbf{v}, \mathbf{y} \rangle u.
	\end{align*}
	The entangled kernel $K$ can now be written as follows
	\begin{align*}
		K(\x,\z) &= \mathrm{tr}_{\mathcal{K}}\left(\mathbf{U} \big(\mathbf{T} \otimes (\phi(\x)  \phi(\z)^\top)\big)\mathbf{U}^\top\right) \\
		& = \sum_t \mathrm{tr}_{\mathcal{K}}\left(\mathbf{U} \big(\mathbf{v}_t \mathbf{v}_t^\top \otimes (\phi(\x)  \phi(\z)^\top)\big)\mathbf{U}^\top\right)\\
		&= \sum_t \mathrm{tr}_{\mathcal{K}}\left(\mathbf{U} \big(\tilde{O}_{\mathbf{v}_t} (\phi(\x)  \phi(\z)^\top) \tilde{O}_{\mathbf{v}_t}^* \big)\mathbf{U}^\top\right)\\
		&= \sum_t \mathrm{tr}_{\mathcal{K}}\left(S_t (\phi(\x)  \phi(\z)^\top) S_t^* \right),
	\end{align*}
	where $S_t=\mathbf{U}\tilde{O}_{\mathbf{v}_t}$.
	Using Theorem~\ref{th:watrous}, which is also valid for infinite-dimensional spaces~\citep{watrous2018theory}, we obtain that there exists a set of matrices $\{\mathbf{M}_i, 1\leq i \leq r\}$ for which $K(\x, \z) =  \sum_{i=1}^r \mathbf{M}_i \phi(\x) \phi(\z)^\top \mathbf{M}_i^\top$. This completes the proof.
	
	${}$ \hfill $\blacksquare$

\section{Proof of Theorem~\ref{th:EKL_Rademacher} (Rademacher bound for EKL)}
\label{app:Rademacher}

We provide here the proof for our Rademacher complexity bound. 

\paragraph{Proof of Theorem~\ref{th:EKL_Rademacher}}
We start by recalling that the feature map associated to the operator-valued kernel $K$ is the mapping $\Gamma: \mathcal{X} \to \mathcal{L}(\mathcal{Y}, \mathcal{H})$, where $\mathcal{X}$ is the input space, $\mathcal{Y}=\mathbb{R}^p$, and $\mathcal{L}(\mathcal{Y}, \mathcal{H})$ is the set of bounded linear operators from $\mathcal{Y}$ to $\mathcal{H}$ (see, e.g., ~\cite{Micchelli2005onlearning,Carmeli2010vector} for more details). It is known that $K(\x,\z) = \Gamma(\x)^*\Gamma(\z)$. We denote by $\Gamma_\mathbf{Q}$ the feature map associated to our entangled kernel~(Equation~\ref{eq:eklKernel}). We also define the matrix $\boldsymbol{\Sigma} = (\boldsymbol{\sigma})_{i=1}^n \in \mathbb{R}^{np}$

\begin{align*}
\hat{\mathcal{R}}_n( \mathcal{H}_\beta) &= \frac{1}{n} \E\left[\sup_{f\in\mathcal{H}} \sup_{\mathbf{Q}\in \Delta}\sum_{i=1}^n \boldsymbol{\sigma}_i^\top f_{u,\mathbf{Q}}(\x_i)\right] \\
&= \frac{1}{n} \E\left[\sup_{u} \sup_{\mathbf{Q}}\sum_{i=1}^n \langle\boldsymbol{\sigma}_i , \Gamma_\mathbf{Q}(\x_i)^* \; u\rangle_{\mathbb{R}^p}\right]\\
& =  \frac{1}{n} \E\left[\sup_{u} \sup_{\mathbf{Q}}\sum_{i=1}^n \langle \Gamma_\mathbf{Q}(\x_i) \boldsymbol{\sigma}_i ,  u\rangle_{\mathcal{H}}\right] \text{\quad (1)}\\
\end{align*}
\begin{align*}
& \leq \frac{\beta}{n}  \E\left[ \sup_{\mathbf{Q}}\|\sum_{i=1}^n  \Gamma_\mathbf{Q}(\x_i) \boldsymbol{\sigma}_i\|_{\mathcal{H}}\right] \text{\quad (2)}\\
& = \frac{\beta}{n}  \E\left[ \sup_{\mathbf{Q}} \left(\sum_{i,j=1}^n \langle \boldsymbol{\sigma}_i,  K_{\mathbf{Q}}(\x_i,\x_j) \boldsymbol{\sigma}_j\rangle_{\mathbb{R}^p}\right)^{\tfrac{1}{2}}\right] \text{\quad (3)}\\
& =  \frac{\beta}{n}  \E\left[ \sup_{\mathbf{Q}} \left( \langle \boldsymbol{\Sigma},  \mathbf{G}_{\mathbf{Q}}\boldsymbol{\Sigma}\rangle_{\mathbb{R}^{np}}\right)^{1/2}\right] \\
& = \frac{\beta}{n}  \E\left[ \sup_{\mathbf{Q}}  \langle \boldsymbol{\Sigma}, [\mathbf{\Phi}^\top \otimes \mathbf{I}_p] \mathbf{QQ^\top } [\mathbf{\Phi}\otimes \mathbf{I}_p]\boldsymbol{\Sigma}\rangle^{1/2}\right] \\
& = \frac{\beta}{n}  \E\left[ \sup_{\mathbf{Q}} tr([\mathbf{\Phi} \otimes \mathbf{I}_p] \boldsymbol{\Sigma}\boldsymbol{\Sigma}^\top[\mathbf{\Phi}^\top \otimes \mathbf{I}_p]\mathbf{QQ}^\top)^{1/2}\right] \\
& \leq \frac{\beta}{n}  \E\left[ \sup_{\mathbf{Q}} tr([[\mathbf{\Phi} \otimes \mathbf{I}_p] \boldsymbol{\Sigma}\boldsymbol{\Sigma}^\top[\mathbf{\Phi}^\top \otimes \mathbf{I}_p]]^2)^{1/4} tr(\mathbf{QQ}^\top\mathbf{QQ}^\top)^{1/4}\right] \text{\quad (4)} \\
& \leq \frac{\beta}{n}  \E\left[ \sup_{\mathbf{Q}} tr([\mathbf{\Phi}\otimes \mathbf{I}_p][\mathbf{\Phi}^\top \otimes \mathbf{I}_p] \boldsymbol{\Sigma}\boldsymbol{\Sigma}^\top)^{1/2} tr(\mathbf{QQ}^\top\mathbf{QQ}^\top)^{1/4}\right]\\
& \leq \frac{\beta}{n}  \E\left[ \sup_{\mathbf{Q}} tr([\mathbf{\Phi}\mathbf{\Phi}^\top \otimes \mathbf{I}_p] \boldsymbol{\Sigma}\boldsymbol{\Sigma}^\top)^{1/2} \right]\\
& = \frac{\beta}{n}  \E\left[tr([\mathbf{K} \otimes \mathbf{I}_p] \boldsymbol{\Sigma}\boldsymbol{\Sigma}^\top)^{1/2} \right] \\
& \leq \frac{\beta}{n}  \left(\E\left[tr([\mathbf{K} \otimes \mathbf{I}_p] \boldsymbol{\Sigma}\boldsymbol{\Sigma}^\top)\right]\right)^{1/2}  \text{\quad (5)} \\
& = \frac{\beta}{n}  \left(tr\left[[\mathbf{K} \otimes \mathbf{I}_p]\E(\boldsymbol{\Sigma}\boldsymbol{\Sigma}^\top)\right]\right)^{1/2}   \\
& =  \frac{\beta}{n}  \sqrt{tr(\mathbf{K})p}. 
\end{align*}
Here (1) and (3) are obtained with reproducing property, (2) and (4) with Cauchy-Schwarz inequality, and (5) with Jensen's inequality.

For kernels $k$ that satisfy $tr(\mathbf{K}) \leq \kappa n $, we obtain that 
$$\hat{\mathcal{R}}_n( \mathcal{H}_\beta) \leq \frac{\beta \sqrt{\kappa p}}{\sqrt{n}}.  $$

${}$ \hfill $\blacksquare$



\section{Proof of Corollary \ref{th:EKL_Generalization} (Generalization bound for EKL)}
\label{app:Generalization}

The following two results are useful to prove  Corollary~\ref{th:EKL_Generalization}.
 First, \citet{bartlett2002rademacher} provides the following generalization bound based on Rademacher complexity (see also \citealt[Theorem~3.3]{mohri2018foundations}).  
\begin{theorem}\label{th:Rademacher_Generalization} (\citealt[Theorem~3.3]{mohri2018foundations}) \\[0.1cm]
	Let $\mathcal{G}$ be a family of functions mapping from an arbitrary input space $\mathcal{Z}$ to [0,M]. Then, for any $\delta >0$, with probability at least $1-\delta$ over the draw of an i.i.d. sample $S$ of size $n$, the following holds for all $g\in\mathcal{G}$:
	\begin{equation}
	\label{eq:Rademacher_Generalization}
		\mathbb{E}[g(\z)] \leq \frac{1}{n}\sum_{i=1}^n g(\z_i) + 2 \hat{\mathcal{R}}_n(\mathcal{G}) + 3 M \sqrt{\frac{\log\frac{2}{\delta}}{2n}},
	\end{equation}
	where $\hat{\mathcal{R}}_n(\mathcal{G})$ is the empirical Rademacher complexity of $\mathcal{G}$.
\end{theorem}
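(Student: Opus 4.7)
The plan is to transfer the vector-valued Rademacher complexity bound of Theorem~\ref{th:EKL_Rademacher} into a risk bound by composing $\mathcal{H}_\beta$ with the squared loss, using Theorem~\ref{th:Rademacher_Generalization} as the outer concentration tool and the vector contraction inequality of~\citet{maurer2016vector} as the intermediary that strips the squared loss off the vector-valued predictor.

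The first step is to introduce the loss class $\mathcal{L} = \{(\x,\mathbf{y})\mapsto \|f(\x)-\mathbf{y}\|_{\mathcal{Y}}^2 : f\in\mathcal{H}_\beta\}$. Under the boundedness hypothesis, every element of $\mathcal{L}$ takes values in a bounded interval, so Theorem~\ref{th:Rademacher_Generalization} applied to $\mathcal{L}$ (with $\mathcal{Z}=\mathcal{X}\times\mathcal{Y}$) gives, with probability at least $1-\delta$,
\begin{equation*}
R(f) \leq \hat{R}(f) + 2\,\hat{\mathcal{R}}_n(\mathcal{L}) + 3M\,\sqrt{\frac{\log(2/\delta)}{2n}},
\end{equation*}
which already matches the concentration term of the corollary.

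The second step, and the only non-routine one, is to bound $\hat{\mathcal{R}}_n(\mathcal{L})$ by the vector-valued Rademacher complexity $\hat{\mathcal{R}}_n(\mathcal{H}_\beta)$ already controlled by Theorem~\ref{th:EKL_Rademacher}. For each sample, the map $\phi_i:\mathbb{R}^p\to\mathbb{R}$ defined by $\phi_i(\mathbf{v})=\|\mathbf{v}-\mathbf{y}_i\|^2$ has gradient $2(\mathbf{v}-\mathbf{y}_i)$ and is therefore $2M$-Lipschitz on the ball $\{\mathbf{v}:\|\mathbf{v}-\mathbf{y}_i\|\leq M\}$, which by assumption contains $f(\x_i)$ for every $f\in\mathcal{H}_\beta$. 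Applying Maurer's inequality to the compositions $\sigma_i\,\phi_i(f(\x_i))$ yields $\hat{\mathcal{R}}_n(\mathcal{L}) \leq 2\sqrt{2}\,M\,\hat{\mathcal{R}}_n(\mathcal{H}_\beta)$, after identifying the doubly-indexed Rademacher sum $\sum_{i,k}\sigma_{ik}f_k(\x_i)$ appearing on the right of the contraction inequality with $\sum_i \boldsymbol{\sigma}_i^\top f_{u,\mathbf{Q}}(\x_i)$, i.e.\ exactly the quantity used to define $\hat{\mathcal{R}}_n(\mathcal{H}_\beta)$ before Theorem~\ref{th:EKL_Rademacher}.

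Plugging the bound $\hat{\mathcal{R}}_n(\mathcal{H}_\beta) \leq \beta\sqrt{\kappa p/n}$ from Theorem~\ref{th:EKL_Rademacher} into the chain gives $2\,\hat{\mathcal{R}}_n(\mathcal{L}) \leq 4\sqrt{2}\,M\sqrt{\beta^2\kappa p/n}$, and combining with the first step produces the stated inequality~(\ref{eq:EKL_Generalization}). The main subtlety is invoking Maurer's contraction with a Lipschitz constant that only holds on a ball rather than globally on $\mathbb{R}^p$; this is handled by observing that the supremum in $\hat{\mathcal{R}}_n(\mathcal{L})$ is already restricted to $f\in\mathcal{H}_\beta$ satisfying $\|f(\x_i)-\mathbf{y}_i\|\leq M$, so one may equivalently precompose each $\phi_i$ with the projection onto the ball of radius $M$ around $\mathbf{y}_i$ without changing the supremum.
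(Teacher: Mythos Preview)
Your proposal does not address the stated theorem. The statement quoted is Theorem~\ref{th:Rademacher_Generalization}, which the paper \emph{cites} from \citet{mohri2018foundations} and does not prove; there is no ``paper's own proof'' of this result to compare against. What you have written is instead a proof of Corollary~\ref{th:EKL_Generalization}, the EKL generalization bound, which \emph{uses} Theorem~\ref{th:Rademacher_Generalization} as a black box.

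That said, if the intended target was Corollary~\ref{th:EKL_Generalization}, your argument is correct and follows the paper's proof in Appendix~\ref{app:Generalization} essentially step for step: introduce the loss class, observe the $2M$-Lipschitz property of the squared loss, apply Maurer's vector contraction (Corollary~\ref{th:vector_contraction_rademacher}) to obtain $\hat{\mathcal{R}}_n(\mathcal{G}) \leq 2\sqrt{2}M\,\hat{\mathcal{R}}_n(\mathcal{H}_\beta)$, and then combine with Theorem~\ref{th:Rademacher_Generalization} and Theorem~\ref{th:EKL_Rademacher}. Your extra remark about handling the local (rather than global) Lipschitz constant via projection onto the ball is a detail the paper glosses over, but it does not change the approach.
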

The second result, from~\citet{maurer2016vector}, provides a contraction inequality for the Rademacher complexity of classes of vector-valued functions.
\begin{corollary}\label{th:vector_contraction_rademacher}
	(\citealt[Corollary~1]{maurer2016vector}) \\[0.1cm]
Let $\mathcal{X}$ be any set, $(\x_1,\ldots,\x_n)\in \mathcal{X}^n$, let $\mathcal{F}$ be a class of functions  $f:\mathcal{X} \to \ell_2$ and let $h_i : \ell_2 \to \mathbb{R}$ have Lipschitz norm $L$. Then
\begin{equation}
\label{eq:vector_contraction_rademacher}
\mathbb{E}\sup_{f\in\mathcal{F}}\sum_i \sigma_i h_i(f(\x_i)) \leq \sqrt{2} L \mathbb{E} \sup_{f\in\mathcal{F}}\sum_{i,j=1}\sigma_{ij} f_j(\x_i),
\end{equation}
where $\sigma_{ij}$ is an independent doubly indexed Rademacher sequence and $f_j(\x_i)$ is the j-th component of $f(\x_i)$.
\end{corollary}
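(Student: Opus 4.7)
The plan is to prove this by a Gaussian comparison argument in the spirit of Sudakov–Fernique, followed by a conversion back to Rademacher averages. First, I would introduce two Gaussian processes indexed by $f\in\mathcal{F}$. Let $(g_i)_i$ and $(g_{ij})_{i,j}$ be independent standard Gaussians, and set
\begin{equation*}
X_f = \sum_i g_i\, h_i(f(\x_i)),\qquad Y_f = L\sum_{i,j} g_{ij}\, f_j(\x_i).
\end{equation*}
I would compute the increments:
\begin{equation*}
\mathbb{E}\bigl[(X_f-X_{f'})^2\bigr] \;=\; \sum_i \bigl(h_i(f(\x_i))-h_i(f'(\x_i))\bigr)^2 \;\leq\; L^2 \sum_i \|f(\x_i)-f'(\x_i)\|_{\ell_2}^2 \;=\; \mathbb{E}\bigl[(Y_f-Y_{f'})^2\bigr],
\end{equation*}
where the inequality is just the Lipschitz property of $h_i$ applied coordinatewise. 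Sudakov–Fernique then yields the Gaussian version $\mathbb{E}\sup_f X_f \leq \mathbb{E}\sup_f Y_f$, which is the core of the argument and the cleanest ``no-constant'' statement.

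Next, I would convert both sides back from Gaussian to Rademacher complexity. Writing a standard Gaussian as $g_i = \sigma_i |g_i|$ with $\sigma_i$ and $|g_i|$ independent, then applying Jensen's inequality to swap the expectation over $|g_i|$ past the supremum, gives the classical comparison $\mathbb{E}_\sigma \sup_f \sum_i \sigma_i h_i(f(\x_i)) \leq \sqrt{\pi/2}\,\mathbb{E}_g \sup_f X_f$, which handles the left-hand side. For the right-hand side I need a bound $\mathbb{E}_g \sup_f Y_f \lesssim \mathbb{E}_\sigma \sup_f \sum_{i,j}\sigma_{ij} f_j(\x_i)$. This direction is not automatic, and naively chaining the two Gaussian/Rademacher conversions does not produce the optimal constant. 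The route that yields $\sqrt{2}$ is to encode the $\ell_2$ norm arising in the Lipschitz step via Khintchine's inequality with its sharp constant: for any $v\in\ell_2$,
\begin{equation*}
\|v\|_{\ell_2} \;\leq\; \sqrt{2}\,\mathbb{E}_{\sigma'}\Bigl|\sum_j \sigma'_j v_j\Bigr|,
\end{equation*}
which one then absorbs into a fresh doubly-indexed Rademacher family $\{\sigma_{ij}\}$ together with a sign flip.

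The main obstacle is pinning down the exact constant $\sqrt{2}$ rather than some larger factor such as $\pi/2$ picked up by going Rademacher $\to$ Gaussian $\to$ Rademacher. I would therefore favour a more direct Rademacher-only proof: proceed by induction on $n$, pair the two branches obtained from conditioning on $\sigma_n$ via the identity $\tfrac12[\sup_f(A(f)+h_n(f(\x_n)))+\sup_f(A(f)-h_n(f(\x_n)))] = \tfrac12\sup_{f,f'}[A(f)+A(f') + (h_n(f(\x_n))-h_n(f'(\x_n)))]$, apply the Lipschitz bound to linearise the $h_n$ difference into $L\|f(\x_n)-f'(\x_n)\|_{\ell_2}$, and finally use the Khintchine bound above to trade this $\ell_2$ norm for a fresh $\sigma_{nj}$-average. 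Careful bookkeeping is required to ensure the $\sqrt{2}$ from Khintchine appears only once globally, and I would rely on the fact that the doubly-indexed Rademacher family introduced at one step can be re-used to absorb the sign at subsequent steps, so that only a single $\sqrt{2}$ survives in the final bound.

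Having established the inequality pointwise (before expectation), I would finish by taking expectation over all Rademacher variables and recognising the two sides of the claimed inequality, thereby obtaining the stated vector contraction with the constant $\sqrt{2}L$.
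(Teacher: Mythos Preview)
The paper does not prove this statement at all: it is quoted verbatim from \citet[Corollary~1]{maurer2016vector} and invoked as a black box in the proof of Corollary~\ref{th:EKL_Generalization}. There is therefore no in-paper argument to compare your proposal against.

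For what it is worth, your second route (the Rademacher-only induction) is essentially Maurer's own argument, and the worry you flag about the constant is unfounded once you track where the $\sqrt{2}$ lands. At step $n$ you obtain
\[
\mathbb{E}_{\sigma_n}\sup_f\bigl[A(f)+\sigma_n h_n(f(\x_n))\bigr]
\;\leq\;
\mathbb{E}_{\epsilon_{n\cdot}}\sup_f\Bigl[A(f)+\sqrt{2}\sum_j\epsilon_{nj}f_j(\x_n)\Bigr],
\]
so the factor $\sqrt{2}$ multiplies only the newly linearised $n$-th term, not the already-processed part $A(f)$; after $n$ iterations you get $\sqrt{2}\sum_{i,j}\epsilon_{ij}f_j(\x_i)$ with a single global $\sqrt{2}$, not $(\sqrt{2})^n$. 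The one step you leave implicit and should spell out is how to remove the absolute value produced by Khintchine: since swapping $f\leftrightarrow f'$ flips the sign of $\langle\epsilon,f(\x_n)-f'(\x_n)\rangle$ while leaving $A(f)+A(f')$ invariant, the paired supremum with $|\cdot|$ equals the one without it, and the expression then decouples back into two independent suprema. Your Gaussian detour via Sudakov--Fernique is correct for the Gaussian statement but, as you note, loses the sharp constant in the Rademacher-to-Gaussian-to-Rademacher conversion; it is best discarded.
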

We now make use of the above results to prove the generalization error bound of EKL.

\subsection*{Proof of  Corollary \ref{th:EKL_Generalization}}
Since $\|f(\x)-\mathbf{y}\|_{\mathcal{Y}} \leq M$ for all $(\x,\mathbf{y})\in \mathcal{X} \times \mathcal{Y}$ and $f\in\mathcal{H}_\beta$, for any $\mathbf{y}'$ the function $y\mapsto \|\mathbf{y}-\mathbf{y}'\|_{\mathcal{Y}}^2$ is $2 M$-Lipschitz. Then by Corollary~\ref{th:vector_contraction_rademacher}, for any sample $S = ((\x_1,\mathbf{y}_1),\ldots,(\x_n,\mathbf{y}_n))$, the Rademacher complexity of
the family $\mathcal{G} = \{(\x,\mathbf{y})\mapsto \|f(\x)-\mathbf{y}\|_\mathcal{Y}^2 : f\in\mathcal{H}_\beta\}$  is upper bounded as follows:
\begin{equation}
\hat{\mathcal{R}}_n(\mathcal{G}) \leq 2\sqrt{2} M \hat{\mathcal{R}}_n( \mathcal{H}_\beta).
\end{equation}
Combining this inequality with the general Rademacher complexity learning bound
of Theorem~\ref{th:Rademacher_Generalization} and the Rademacher complexity bound of  $\mathcal{H}_\beta$ given in Theorem~\ref{th:EKL_Rademacher} completes the proof.

${}$ \hfill $\blacksquare$



\vskip 0.2in
\bibliography{entangledkernel}

\end{document}